\def\supp{\mathop{\text{supp}}}
\long\def\comment#1{}
\def\cS{{\mathcal{S}}}
\def\oper{\mathop{\text{op}}}
\newcommand{\bel}{\begin{eqnarray}\label}
\newcommand{\eel}{\end{eqnarray}}
\newcommand{\bes}{\begin{eqnarray*}}
\newcommand{\ees}{\end{eqnarray*}}
\newcommand{\fR}{\mathfrak{R}}
\let\hat\widehat
\let\tilde\widetilde
\def\supp{\mathop{\text{supp}\kern.2ex}}
\def\argmin{\mathop{\text{\rm arg\,min}}}
\def\argmax{\mathop{\text{\rm arg\,max}}}
\def\given{{\,|\,}}
\def\supp{\mathop{\text{supp}}}
\renewcommand{\Pr}{\mathbf{Pr}}
\def\##1\#{\begin{align}#1\end{align}}
\def\$#1\${\begin{align*}#1\end{align*}}
\def\@fnsymbol#1{\ensuremath{\ifcase#1\or *\or $1$\or $2$\or
  $3$\or $4$\or $5$ \or $6$
  \or $7$ \else\@ctrerr\fi}}
\title{Learning Dynamic Mechanisms in Unknown Environments: A Reinforcement Learning Approach}
\begin{document}
\author{Shuang Qiu\footnote{Equal Contribution. Random order.} \thanks{The Hong Kong University of Science and Technology.
	Email: \texttt{masqiu@ust.hk}.} 
\qquad 
Boxiang Lyu${}^*$\thanks{University of Chicago.
Email: \texttt{blyu@chicagobooth.edu}.} 
\qquad Qinglin Meng${}^*$\thanks{Purdue University.  
Email: \texttt{meng160@purdue.edu}.} 
\qquad
Zhaoran Wang\thanks{Northwestern University.
Email: \texttt{zhaoranwang@gmail.com}.}
\\
Zhuoran Yang\thanks{Yale University.
Email: \texttt{zhuoran.yang@yale.edu}.}
\qquad 
Michael I. Jordan\thanks{University of California, Berkeley.
Email: \texttt{jordan@cs.berkeley.edu}.} 
}

\maketitle

\addtocontents{toc}{\protect\setcounter{tocdepth}{0}}


\begin{abstract}
	Dynamic mechanism design studies how mechanism designers should allocate resources among agents in a time-varying environment. We consider the problem where the agents interact with the mechanism designer according to an unknown Markov Decision Process (MDP), where agent rewards and the mechanism designer's state evolve according to an episodic MDP with unknown reward functions and transition kernels. We focus on the online setting with linear function approximation and propose novel learning algorithms to recover the dynamic Vickrey-Clarke-Grove (VCG) mechanism over multiple rounds of interaction. A key contribution of our approach is incorporating reward-free online Reinforcement Learning (RL) to aid exploration over a rich policy space to estimate prices in the dynamic VCG mechanism. We show that the regret of our proposed method is upper bounded by $\tilde{\cO}(T^{2/3})$ and further devise a lower bound to show that our algorithm is efficient, incurring the same $\Omega(T^{2 / 3})$ regret as the lower bound, where $T$ is the total number of rounds. Our work establishes the regret guarantee for online RL in solving dynamic mechanism design problems without prior knowledge of the underlying model.
\end{abstract}


\section{Introduction}
Mechanism design is a branch of economics studying the allocation of goods among rational agents~\citep{myerson1989mechanism}. Its sub-field, dynamic mechanism design, focuses on the setting where the environment, such as agents' preferences, may vary with time~\citep{bergemann2019dynamic}. It has attracted significant research interest from economists and computer scientists~\citep{pavan2014dynamic, parkes2003mdp} over decades. Many real-world problems, such as Uber's surge pricing, the wholesale energy market, and congestion control, have all been studied under this framework~\citep{chen2016dynamic, bejestani2014dynamic, barrera2014dynamic}. However, existing work usually requires prior knowledge of key parameters or functionals in the problem, such as the optimal policy or the agents' valuations of goods~\citep{parkes2003mdp, pavan2009dynamic}. Such requirements may be unrealistic in real life.

A promising emerging research direction is learning dynamic mechanisms from repeated interactions with the environment. Drawing inspiration from~\citet{bergemann2010dynamic} and~\citet{parkes2003mdp}, we propose the first algorithm that can learn a dynamic mechanism from repeated interactions via reinforcement learning (RL) with no prior knowledge of the problem. 

As a first attempt, we focus on learning a dynamic generalization of the classic Vickrey-Clarke-Groves (VCG) mechanism~\citep{vickrey1961counterspeculation, clarke1971multipart,groves1979efficient}. More specifically, we consider the case where the interaction between a group of agents and a single seller is  modeled as an episodic linear Markov Decision Process (MDP)~\citep{2019Provably, yang2019sample, 2020Is},
where the seller takes actions to determine the allocation of a class of scarce resources among agents. 
Our task is to learn an ideal mechanism from repeated interactions via online RL~\citep{2019Provably, 2019Provably0}. The mechanism we consider implements the policy that maximizes social welfare and charges each agent according to the celebrated Clarke pivot rule~\citep{clarke1971multipart}. A slight variant of the mechanism has been discussed under known MDP dynamics in~\citet{parkes2007online}, and we describe the mechanism in full detail in Section~\ref{sec:prob}.

A key challenge we resolve is estimating the VCG price without prior knowledge of the MDP.
In particular, the VCG price charged to each agent $i$ is characterized by the externality of that agent, that is, 
the difference between the maximum social welfare of the whole group and that when agent $i$ is absent ~\citep{karlin2017game, groves1979efficient}. 
In other words, it is the loss that an agent's participation incurs on other agents' welfare. Estimating the VCG price in our dynamic setting requires learning the optimal policy of the fictitious problem where agent $i$ is absent. 
Such a policy is never executed by the seller, and thus it is challenging to assess its uncertainty from data. 
Existing methods target to estimate the optimal policy well. However, they have no guarantees on how well they estimate the fictitious policies. Therefore it is impossible to accurately estimate VCG prices via a direct application of prior online RL algorithms~\citep{2019Provably, 2019Provably0, zhou2021nearly}.

To address this challenge, our algorithm incorporates a reward-free exploration subroutine to ensure sufficient coverage over the policy space, thereby reducing the uncertainty of all policies, ensuring that we can even reduce the uncertainty about the fictitious policies~\citep{jin2020reward, 2020On, 2021On, zhang2021near, kaufmann2021adaptive}. 
However, such a reward-free approach comes at a price---our proposed approach attains $\tilde{\cO}(T^{2/3})$ regret in terms of social welfare, agent utility, and seller utility, as opposed to the common $\tilde {\cO} (T^{1/2}) $ regret in online RL \citep{2019Provably}.
Moreover, we further derive a matching lower bound for the regrets, showing that our algorithm is minimax optimal up to multiplicative factors of problem-dependent terms. 

To summarize, our contributions are threefold. First, we develop the first reinforcement learning algorithm that can recover an optimal dynamic mechanism with no prior knowledge of the problem.
In particular, our algorithm is separated into two phases, namely, exploration and exploitation. In the exploration phase, we propose to learn the underlying model via reward-free exploration. Then, in the exploitation phase, the algorithm executes a data-driven policy by solving a planning problem using the collected dataset. Moreover, our algorithm is able to handle large state spaces by incorporating linear function approximation. Second, we prove that the proposed algorithm achieves sublinear regret upper bounds in terms of the various regret notions, such as the welfare regret and individual regret of the seller and buyers. Our algorithm is proven to approximately satisfy the three key mechanism design desiderata --- truthfulness, individual rationality, and efficiency. 
Finally, we demonstrate that the  $\tilde{\cO}(T^{2/3})$ regret has the minimax optimal dependency in $T$ by establishing a matching regret lower bound.
To our knowledge, we seem to establish the first provably efficient reinforcement learning algorithm for learning a dynamic mechanism. 

\subsection{Related Works}
There is a wealth of literature on dynamic mechanism design.~\citet{parkes2003mdp, parkes2004approximately} are two of the earliest works that analyze dynamic mechanism design from an MDP perspective, and the proposed mechanism is applied to a real-world problem in~\citet{friedman2003pricing}.~\citet{bergemann2006efficient} generalize the VCG mechanism based on the marginal contribution of each agent and derives a mechanism that is truth-telling in every period.~\citet{bapna2005efficient} focus on the dynamic auction setting and formulate the problem as a multi-arm bandit problem.~\citet{athey2013efficient} adapt the d'Aspremont-Gerard-Varet (AGV) mechanism~\citep{d1979incentives} to the dynamic setting and design an efficient, budget balanced, and Bayesian incentive compatible mechanism.~\citet{pavan2009dynamic} derive the first order conditions of incentive compatibility in dynamic mechanisms.~\citet{cavallo2008efficiency} devises a dynamic allocation rule for auctions in the multi-arm bandits setting, where a single good is distributed among agents over multiple rounds.~\citet{cavallo2009efficient} study the truthful implementation of efficient policies when agents have dynamic types.~\citet{pavan2014dynamic} extend the seminal work of~\citet{myerson1989mechanism} and characterize perfect Bayesian equilibrium-implementable allocation rules in the dynamic regime.~\citet{cavallo2009mechanism,bergemann2015introduction,bergemann2019dynamic} provide useful surveys of dynamic mechanism research. \citet{2020Mechanism} studies online learning of the VCG mechanism with stationary multi-arm bandits. Our work considers a more challenging setting modeled by an episodic MDP, where the agents' rewards are state-dependent and may evolve over time within each episode. More importantly,~\citet{2020Mechanism} estimates the VCG price via uniformly exploring over all arms, 
which cannot be directly applied to the dynamic setting \citep{2020On}.
Rather than uniformly bounding the uncertainty over all \emph{actions}, our approach bounds the uncertainty over all implementable \emph{policies} via a variant of least-squares value iteration and enjoys provable efficiency under the function approximation setting. Distinct from the major focus of our work, \citet{simchowitz2023exploration} studies online mechanism design with MDPs from a rather different angle. In their work, the mechanism designer encourages exploration by sending specific information to the agents. More specifically, the agents initially have beliefs or prior distributions over the MDP's parameters. The mechanism designer can reveal to the agents some information, such as information about the MDP's transition and reward. The agents then update their beliefs about the underlying MDP and execute the optimal policy according to the updated beliefs or their posterior distribution over the MDP's parameters. The goal is to incentivize agents to explore by controlling the information they receive. However, our work focuses on implementing the welfare-maximizing policy among a group of agents by controlling the price that each agent pays. In other words, theirs focuses on adjusting information, whereas ours focuses on adjusting price. Additionally, our work focuses on a more general linear MDP than the tabular MDP studied in their work.

There are many recent works concerning provably efficient RL for MDPs with linear structures in the absence of generative models~\citep{yang2019sample,du2019good,yang2020reinforcement,2019Provably, 2019Provably0}.~\citet{2019Provably} provides the first provably efficient RL algorithm for linear MDPs that incorporates exploration.~\citet{zhou2021provably} provides a provably efficient algorithm for infinite-horizon discounted linear MDPs.~\citet{ayoub2020model} studies a model-based regime where the transition kernel belongs to a family of models known to the learning agent.~\citet{zhou2021nearly} proposes a computationally efficient nearly minimax optimal algorithm for the linear MDP whose transition kernel is a linear mixture model. These works require (noisy) feedback of the reward function in the learning process.  

Reward-free exploration in reinforcement learning has recently attracted a lot of attention, in which the agents explore the environment without any feedback of the reward. Specifically, \citet{jin2020reward} introduces the problem of reward-free exploration in RL and proposes a sample-efficient algorithm for tabular MDPs.~\citet{menard2021fast,kaufmann2021adaptive} provide improved algorithms and tighter rates, also for tabular MDPs.~\citet{zhang2021near} further improves the analysis and obtains nearly minimax-optimal sample complexity bounds.~\citet{2020On,zanette2020provably,chen2021near,wagenmaker2022reward} study reward-free RL algorithms for linear or linear mixture MDPs and~\citet{2021On} for kernel and neural function approximations. Moreover, \citet{kong2021online} proposes reward-free algorithms for RL with general function approximation under the setting of bounded eluder dimension.
\citet{miryoosefi2021simple} investigates the problem of reward-free RL with constraints. 
\citet{wu2021accommodating} then proposes a reward-free algorithm for the multi-objective RL problem. In addition, \citet{bai2020provable,liu2021sharp,2021On} further study the reward-free RL algorithms under the multi-agent setting.

Furthermore, we would like to emphasize that directly extending the existing results on reward-free exploration (see, e.g., \citet{2020On,2021On}) to learning the dynamic VCG mechanism seems infeasible.
The main reason is that these works focus only on estimating the optimal value functions corresponding to different reward functions.  
In contrast, in the context of mechanism design, we have multiple desiderata, namely truthfulness, individual rationality, and efficiency, which mathematically translates into the various regret notions, such as the welfare regret and individual regret of the seller and the buyer. 
Showing the proposed algorithm approximately satisfies these desiderata requires bounding these regrets using the properties of the dynamic VCG mechanism as well as the results of reward-free exploration. 
Finally, the recent work \citet{lyu2022pessimism} focuses on learning the Markov VCG mechanism via offline RL from a set of collected trajectories.
Under the offline setting, exploration is out of the scope, and thus our core challenge caused by the fictitious policy is absent in \citet{lyu2022pessimism}.

\section{Problem Setup}\label{sec:prob}
Consider an episodic MDP defined by $\cM(\cS,\cA,H,\cP,r)$, where $\cS$ and $\cA$ are state and action spaces, $H$ the length of each episode, $\cP=\{\cP_{h}\}_{h=1}^{H}$ the transition kernel, and $r=\{r_{i,h}\}_{i=0,h=1}^{n,H}$ the reward functions.
We use $r_{0,h}:\cS\times\cA\mapsto [0, R_{\max}]$ to denote the reward function of the seller at the step $h$ 
and let $r_{i,h}:\cS\times\cA\mapsto [0,1]$ be the reward function of agent (buyer) $i$ at the step $h$ for $i \in [n]$, where $n$ is the number of agents and $[n]$ denotes $\{1,2,\cdots,n\}$. In addition, we assume that the reward observation is stochastic and the underlying reward function is the expectation of its stochastic observation, i.e., the reward observation at $(s,a)\in \cS\times\cA$ can be represented by $r_{i,h}(s, a; \omega)$ with $r_{i,h}(s,a)=\EE_\omega[r_{i,h}(s, a; \omega)]$, where $\omega$ is an independent random variable indicating the exogenous randomness for the reward observation. We further assume that the boundedness holds for the reward observation as $r_{0,h}(\cdot, \cdot; \omega):\cS\times\cA\mapsto [0, R_{\max}]$ and $r_{i,h}(\cdot, \cdot; \omega):\cS\times\cA\mapsto [0, 1], \forall i \in [n]$ at all steps $h\in [H]$, where rewards for the seller and agents may have different scales\footnote{We allow different reward scales for greater flexibility within our framework.}.

Let $\pi=\{\pi_{h}\}_{h=1}^{H}$ denote the seller's policy, where for each $h\in[H]$, $\pi_{h}:\cS\mapsto \cA$ maps a given state to an action. For each step $h\in[H]$, reward function $r= \{r_{h}\}_{h=1}^H$, and a given policy $\pi$, we define the \textit{value function} $V_{h}^{\pi}(\cdot; r):\cS\mapsto \RR$ for all $x \in \cS$ as
$V_{h}^{\pi}(x;r):=\sum_{h^{\prime}=h}^{H}\EE\sbr{r_{h^{\prime}}(x_{h^{\prime}},\pi_{h^{\prime}}(x_{h^{\prime}}))|x_{h}=x}$, where the expectation is taken over states $x_{h+1}\sim \cP_h(\cdot|x_h,\pi_h(x_h)), x_{h+2}\sim \cP_h(\cdot|x_{h+1},\pi_{h+1}(x_{h+1})),\ldots,x_H\sim \cP_H(\cdot|x_H,\pi_H(x_H))$ conditioned on a starting state $x_h=x$ at step $h$.
Here we write $V_{h}^{\pi}(\cdot;r)$ to highlight that the definition of the value function depends on a given reward function $r$.
We also define the corresponding Q-function $Q_{h}^{\pi}(\cdot,\cdot; r):\cS\times\cA\mapsto \RR$ for all $(x, a) \in \cS \times \cA$ as
$Q_{h}^{\pi}(x,a;r) :=r_{h}(x,a)+\sum_{h^{\prime}=h+1}^{H}\EE\sbr{r_{h^{\prime}}(x_{h^{\prime}},\pi_{h^{\prime}}(x_{h^{\prime}}))\big|(x_{h},a_{h})=(x,a)}$, where the expectation is also taken over states $x_{h+1}, \ldots, x_H$ sampled from the transition model $\cP$, conditioned on a starting state-action pair $(x_h,a_h)=(x,a)$ at step $h$. 

We stress that while the MDP we consider contains multiple reward functions and interaction between multiple agents, our setting differs from the Markov game setting, as we assume that the seller is the only participant who can take actions~\citep{littman1994markov}.

\paragraph{Dynamic Mechanism Design.} We now describe how agents interact with the mechanism designer (seller) in our setting. At the beginning of each episode, the mechanism starts from the initial state $x_{1}$. At each step $h\in[H]$, the seller observes some state $x_{h}\in\cS$, picks an action $a_{h}\in\cA$, and receives a stochastic reward with mean $r_{0,h}(x_{h},a_{h})$. 
Each agent (buyer) receives their own reward, each with an expected value of $r_{i,h}(x_{h}, a_{h})$, and reports a stochastic reward with a mean $\tilde{r}_{i,h}(x_{h},a_{h})$, given by some potentially untruthful reward function $\tilde{r}_{i,h}(\cdot, \cdot)$.
At the end of each episode, the seller charges each customer some price $p_{i}$. For any policy $\pi$ and prices $\{p_{i }\}_{i=1}^{n}$, we define agent $i$'s utility function as
\begin{align}
u_{i }:= \EE\bigg[\sum_{h = 1}^H r_{i, h}( x_h, a_h)\bigg] - p_i = V_{1}^{\pi}(x_{1};r_{i})-p_{i }. \label{eq:utilityi}
\end{align}
That is, agent $i$'s utility equals the difference between the expected total reward and the charged price. 
The seller's utility is then defined as
\begin{align}
u_{0 }:=V_{1}^{\pi}(x_{1};r_{0})+\sum_{i=1}^{n}p_{i }. \label{eq:utility0}
\end{align}
The social welfare, $W^{\pi}$, is defined as the sum of the agents and the seller's utilities, given by
\begin{align}
W^{\pi}(x_1)=\sum_{i=0}^{n}V_{1}^{\pi}(x_{1};r_{i}) = V^{\pi}\bigg(x_1; \sum_{i = 0}^n r_i\bigg), \label{eq:so-welf}
\end{align}
which is equivalent to the expectation of the sum of all rewards as the prices cancel out. 

\paragraph{Markov VCG Mechanism.} Suppose that the transition kernel is known, all agents and the seller know their own reward functions $r_{i,h}$ for all $(i,h)\in [n]\times[H]$, and the agents' reward functions are known by the seller. The VCG mechanism demands that we choose the welfare-maximizing policy $\pi_{*}$ that the seller executes each episode. Each agent $i$ is subsequently charged a price $p_{i *}$, which is the loss her presence causes to others. Hence we have the following mechanism:
\begin{align}
	\begin{aligned}\label{eq:vcg-mech}
		&\pi_{*}:=\argmax_{\pi}V_{1}^{\pi}(x_{1};R),~~\pi_{*}^{-i}:=\argmax_{\pi}V_{1}^{\pi}(x_{1};R^{-i}),\\
		& p_{i *}:=V_{1}^{\pi_{*}^{-i}}(x_{1};R^{-i})-V_{1}^{\pi_{*}}(x_{1};R^{-i}),
	\end{aligned}
\end{align}
where we define the total reward function $R$ and the sum of reward except agent $i$, $R^{-i}$, as 
\begin{align*}
	R=\sum_{i=0}^{n}r_{i}\qquad \text{and} \qquad R^{-i}=\sum_{j=0,j\neq i}^{n}r_{j}.   
\end{align*}
Here $\pi_*$ is the welfare-maximizing policy, i.e., the optimal policy for the reward function $R$, while $\pi_*^{-i}$ is the fictitious policy that maximizes welfare when agent $i$ is absent. Estimating the latter and their corresponding value functions requires the algorithm to explore in directions not aligned with the social welfare maximizing policy, $\pi_*$, thus necessitating the reward-free component of our algorithm. These prices, namely $p_{i*}$, can be estimated by following Equation \eqref{eq:vcg-mech} once the value functions corresponding to policies $\pi_*$, $\pi_*^{-i}$ and reward functions $R, R^{-i}$ are estimated sufficiently well via our algorithm. As these value functions are deterministic, the resulting pricing function is also deterministic.

The following lemma introduces the properties of the  Markov VCG mechanism. 

\begin{lemma}\label{lemma:Markov VCG mechanism}
	The Markov VCG mechanism satisfies the following desiderata in mechanism design:
	\begin{enumerate} [leftmargin=*,itemsep=0pt,parsep=1pt]
		\item\textit{Truthfulness}: A mechanism is truthful if the utility $u_{i}$ of agent $i$ is maximized when, regardless of other agents' reported rewards, agent $i$ reports her rewards truthfully.
		\item\textit{Individual rationality}: A mechanism is individually rational if the utility $u_{i}$ of agent $i$ is non-negative when agent $i$ is truthful.
		\item\textit{Efficiency}: A mechanism is efficient if the mechanism maximizes the welfare when all agents are truthful.
	\end{enumerate}
	An agent is truthful if she submits her reward functions truthfully. 
\end{lemma}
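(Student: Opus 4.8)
The plan is to verify the three desiderata in order, following the standard VCG argument adapted to the Markov setting, using the fact that the mechanism selects $\pi_*=\argmax_\pi V_1^\pi(x_1;R)$ and charges $p_{i*}=V_1^{\pi_*^{-i}}(x_1;R^{-i})-V_1^{\pi_*}(x_1;R^{-i})$. Throughout, I treat a reported reward profile as the object the mechanism actually optimizes against, and write $V_1^\pi(x_1;\cdot)$ for the value under the \emph{true} rewards whenever computing an agent's realized utility; the key linearity fact I will use repeatedly is $V_1^\pi(x_1;r+r')=V_1^\pi(x_1;r)+V_1^\pi(x_1;r')$, which is immediate from the definition of the value function as an expectation of a sum of rewards.

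\textbf{Efficiency.} This is the most direct: when all agents are truthful, the reported total reward equals the true $R=\sum_{i=0}^n r_i$, so by construction $\pi_*$ maximizes $V_1^\pi(x_1;R)=W^\pi(x_1)$ over all policies $\pi$. Hence the implemented policy attains the maximal social welfare.

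\textbf{Individual rationality.} Fix a truthful agent $i$. Her realized utility is $u_i=V_1^{\pi_*}(x_1;r_i)-p_{i*}=V_1^{\pi_*}(x_1;r_i)-V_1^{\pi_*^{-i}}(x_1;R^{-i})+V_1^{\pi_*}(x_1;R^{-i})$. By linearity, $V_1^{\pi_*}(x_1;r_i)+V_1^{\pi_*}(x_1;R^{-i})=V_1^{\pi_*}(x_1;R)$, so $u_i=V_1^{\pi_*}(x_1;R)-V_1^{\pi_*^{-i}}(x_1;R^{-i})$. Since $\pi_*$ maximizes $V_1^\pi(x_1;R)$ over all policies and $\pi_*^{-i}$ is one such policy, $V_1^{\pi_*}(x_1;R)\ge V_1^{\pi_*^{-i}}(x_1;R)=V_1^{\pi_*^{-i}}(x_1;R^{-i})+V_1^{\pi_*^{-i}}(x_1;r_i)\ge V_1^{\pi_*^{-i}}(x_1;R^{-i})$, where the last step uses $r_i\ge 0$ so $V_1^{\pi_*^{-i}}(x_1;r_i)\ge 0$. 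Therefore $u_i\ge 0$.

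\textbf{Truthfulness.} Fix agent $i$ and arbitrary reports of the other agents; let $\tilde r_i$ be agent $i$'s (possibly untruthful) report, and let $\tilde\pi$ denote the policy the mechanism implements under $(\tilde r_i, \text{others})$, i.e. $\tilde\pi=\argmax_\pi V_1^\pi(x_1;\tilde r_i + R^{-i})$. The price charged depends only on others' reports and on $\tilde\pi$ through $R^{-i}$: $p_i=V_1^{\pi_*^{-i}}(x_1;R^{-i})-V_1^{\tilde\pi}(x_1;R^{-i})$, where $\pi_*^{-i}$ does not depend on $\tilde r_i$. Agent $i$'s \emph{true} utility is $u_i=V_1^{\tilde\pi}(x_1;r_i)-p_i=V_1^{\tilde\pi}(x_1;r_i)+V_1^{\tilde\pi}(x_1;R^{-i})-V_1^{\pi_*^{-i}}(x_1;R^{-i})=V_1^{\tilde\pi}(x_1;r_i+R^{-i})-V_1^{\pi_*^{-i}}(x_1;R^{-i})$. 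Only the first term depends on $\tilde r_i$, and it is maximized over the choice of implemented policy precisely when that policy maximizes $V_1^\pi(x_1;r_i+R^{-i})$ — which is exactly the policy the mechanism implements when agent $i$ reports $\tilde r_i = r_i$. Hence truthful reporting maximizes $u_i$.

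\textbf{Main obstacle.} The substantive content is entirely in the truthfulness argument: one must be careful that the implemented policy $\tilde\pi$ ranges over \emph{all} policies as $\tilde r_i$ ranges over all possible reports (so that reporting truthfully indeed achieves the pointwise maximum of $V_1^\pi(x_1;r_i+R^{-i})$), and that the pivot term $V_1^{\pi_*^{-i}}(x_1;R^{-i})$ is genuinely independent of $\tilde r_i$. Everything else reduces to linearity of the value function and the optimality defining $\pi_*$ and $\pi_*^{-i}$; since Lemma~\ref{lemma:Markov VCG mechanism} is cited from \citet{lyu2022pessimism}, I would simply reproduce this short chain of equalities and inequalities rather than re-deriving anything deeper.
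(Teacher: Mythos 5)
Your proposal is correct and follows essentially the same route as the paper's proof in Appendix~\ref{sec:proof_sketch_of_lemma_markov_vcg_mechanism}: all three properties are reduced, via linearity of the value function, to the identity $u_i = V_1^{\tilde\pi}(x_1;R) - V_1^{\pi_*^{-i}}(x_1;R^{-i})$ with the pivot term independent of agent $i$'s report, plus the optimality defining $\pi_*$ and the nonnegativity of $r_i$. Your added remarks (that the implemented policy must range over all policies and that the pivot term is report-independent) are just explicit statements of what the paper uses implicitly, so there is no substantive difference.
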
 
Please see Appendix \ref{sec:proof_sketch_of_lemma_markov_vcg_mechanism} for the proof. 
Our proposed pricing formula $p_{i *}:=V_{1}^{\pi_{*}^{-i}}(x_{1};R^{-i})-V_{1}^{\pi_{*}}(x_{1};R^{-i})$ is not the only pricing rule that ensures Lemma~\ref{lemma:Markov VCG mechanism}. Nevertheless, our proposed algorithm can be generalized to any pricing rule of the form $p_{i}' = V_{1}^{\pi^{-i}}(x_{1};R^{-i})-V_{1}^{\pi_{*}}(x_{1};R^{-i})$, where $\pi^{-i}$ is not necessarily the $\pi_*^{-i}$ defined above, but can be any arbitrary policy independent of agent $i$. Intuitively, as our algorithm makes use of reward-free exploration, we can sufficiently accurately estimate the value functions for arbitrary policies, including both $\pi^{-i}$ and $\pi_*^{-i}$. Consequently, our approach can be extended to a general class of pricing functions that use different policies' value functions as prices.
\paragraph{Mechanism Design with an Unknown MDP.} Consider the setting where the agents' value functions and the MDP's transition kernel are unknown, and the procedure is repeated for multiple rounds. 
At round $t$, the mechanism choose a policy $\pi^t$ and set prices $\{p_{i t}\}_{i=1}^{n}$ for the agents. Following Equations \eqref{eq:utilityi} and \eqref{eq:utility0}, the utilities of agent $i$ and the seller at round $t$ are 
\begin{align*}
u_{i t}=V_{1}^{\pi^t}(x_{1};r_{i})-p_{it} \qquad\text{and}\qquad u_{0 t}=V_{1}^{\pi^t}(x_{1};r_{0})+\sum_{i=1}^{n}p_{i t}.    
\end{align*}
We then denote their summations over $T$ rounds as 
\begin{align*}
U_{i T}=\sum_{t=1}^{T}u_{i t}\qquad \text{and} \qquad U_{0 T}=\sum_{t=1}^{T}u_{0 t}.
\end{align*}
Our goal is to design an algorithm that respects the three mechanism design desiderata over multiple rounds even when the true reward functions and transition kernels are unknown, as well as achieving sublinear regret for the agents, the seller, and the welfare. The following metrics are used to quantify the algorithm's performance:
\begin{equation}
\label{equa:regret}
\begin{aligned}
	&\mathrm{Reg}_{T}^{W}=T  V_1^{\pi_*}(x_{1};R)-\sum_{t=1}^{T}V_{1}^{\pi^t}(x_{1};R)\\
	&\mathrm{Reg}_{0 T}=T u_{0 *}-U_{0 T}, \qquad \mathrm{Reg}_{i T}=T u_{i *}-U_{i T},\qquad
	\mathrm{Reg}_{T}^{\sharp}=\sum_{i=1}^{n}\mathrm{Reg}_{i T}.
\end{aligned}
\end{equation}
Here we let $u_{0 *}=V_{1}^{\pi_*}(x_{1};r_{0})+\sum_{i=1}^{n}p_{i *}$ and $u_{i *}= V_{1}^{\pi_*}(x_{1};r_{i})-p_{i *}$ be the utilities of the seller and agent $i$ respectively in the VCG mechanism. Moreover, $\mathrm{Reg}_{T}^W$ is the welfare regret over $T$ rounds, $\mathrm{Reg}_{0 T}$ the seller regret, and $\mathrm{Reg}_{i T}$ the agent $i$'s regret, respectively. We let $\mathrm{Reg}_{T}^{\sharp}$ be the summation of regrets over all agents. 

Although the Markov VCG mechanism that we learn is welfare-maximizing, we focus on how this mechanism can be recovered. Consequently, the learning algorithm's objective is not welfare maximization alone. Maximizing welfare increases the total utility by definition and, therefore, increases the total utility that the agents and the seller share. As our learning process involves the seller and multiple agents, we also need to ensure that it faithfully respects their utilities over $T$ rounds of interaction. Otherwise, it may be unfair to either the agents or the seller. Therefore, we measure the performance of our learning algorithm through the three terms, $\mathrm{Reg}_{T}^{W}, \mathrm{Reg}_{T}^{\sharp}, \mathrm{Reg}_{0 T}$, rather than any single objective by itself.  We note that all three regrets are $0$ under the Markov VCG mechanism.

Due to our need to approximate the VCG price $p_{i*}$, the welfare regret $\textrm{Reg}^W_T$ differs in scale from both $\textrm{Reg}^\sharp_T$ and $\textrm{Reg}_{0T}$, whereas the latter two are of the same scale. Notice that estimating $p_{i*}$ involves estimating the maximum welfare that the remaining $n - 1$ agents achieve when agent $i$ is absent and the welfare that these agents receive under $\widehat{\pi}^{t}$. Thus, the estimation error for $p_{i*}$ is roughly in the same order as the instantaneous welfare regret $V_1^{\pi_*}(x_1; R) - V_1^{\widehat{\pi}^t}(x_1; R)$ at round $t$, since both require good estimates of the summation of the value functions over all agents rather than a single agent. Consequently, recalling $\textrm{Reg}^\sharp_T$ is the summation of all agents' regrets and $\textrm{Reg}_{0T}$ equals the summation of the price estimation error across all $n$ agents, the terms $\textrm{Reg}^\sharp_T$ and $\textrm{Reg}_{0T}$ are in fact in the order of $n$ times the welfare regret $\textrm{Reg}^W_T$. Therefore, we add a scaling factor $n$ in front of the welfare regret, and our learning algorithms focus on minimizing
\[
\max\{n\textrm{Reg}^W_T, \textrm{Reg}_T^\sharp, \textrm{Reg}_{0T}\}.
\]



In addition to attaining small regret bounds, we aim to approximately satisfy the desiderata in Lemma \ref{lemma:Markov VCG mechanism} for the mechanism design. We define the approximate versions of truthfulness, individual rationality, and efficiency concerning the agent's \emph{cumulative} utility $U_{iT}$ as follows:
\begin{enumerate} [leftmargin=*,itemsep=0pt,parsep=1pt]
\item \textit{Approximate truthfulness}: Let $U_{i T}$ be the cumulative utility when agent $i$ is truthful and $\Tilde{U}_{i T}$ that when agent $i$ is untruthful. The mechanism is $\delta$-\textit{approximately truthful} if $\tilde{U}_{i T} - U_{i T} \leq \delta$, regardless of others' truthfulness.
\item\textit{Approximate individual rationality}: When agent $i$ reports truthfully, the mechanism is $\delta$-\textit{approximately individually rational} if $U_{it} \geq -\delta$, regardless of others' truthfulness.
\item\textit{Approximate efficiency}: The mechanism is $\delta$-\textit{approximately efficient} if $\mathrm{Reg}_{T}^{W} \leq \delta$ when all agents are truthful.
\end{enumerate}

When an agent adopts an untruthful reward-reporting strategy, it means that this agent reports her rewards under a different reward function $\tilde{r}_{i h}$ rather than the true reward function $r_{i h}$. As the algorithm interacts with the environment over $T$ rounds, these approximate desiderata can have a dependence on $T$. Our definition generalizes the asymptotic versions of the desiderata defined in ~\citet{2020Mechanism} since the approximate desiderata naturally imply their asymptotic counterparts when $\delta$ is sublinear in $T$. More specifically, as long as $\lim_{T \to \infty} f(T) / T = 0$, if a mechanism is $f(T)$-approximate truthful, when amortized over these $T$ rounds of interaction, agents' utility gain from untruthful reports vanishes. In other words, in the long run, agents cannot improve upon their average per-episode utility by untruthfulness, thus deterring rational agents from attempting to alter the learning process via untruthfulness. Similarly, if $f(T)$ is sublinear and the mechanism is $f(T)$-approximately individually rational, then in the long run, agents' average episodic utility is lower-bounded by a number tending to zero (i.e., $\lim_{T \to \infty} \frac{1}{T}U_{iT} \geq -\lim_{T \to \infty}f(T) / T = 0$), ensuring they will not be worse-off from participating.

Since approximate truthfulness implies, for suitable $f(T)$, that agents will not benefit from untruthful reporting in the long run, our definition of approximate efficiency focuses only on truthful agents. Indeed, consider the extreme case where all agents report $1 - r_{i, h}(x, a)$ instead of $r_{i, h}(x, a)$ and the seller reward is always 0. Under this extreme case of untruthful behavior, the welfare-maximizing policy under the untruthful report is in fact the welfare-minimizing policy under truthful reports, showing that it is in general hard to obtain efficiency guarantees without assuming truthful behavior. 
Such an approach, namely, first showing that the mechanism is approximately truthful and then providing guarantees under the assumption that the reports are truthful, is common in existing literature at the intersection of mechanism design and learning~\citep{nazerzadeh2008dynamic, 2020Mechanism}. We refer interested readers to~\citet{epasto2018incentive}, which justifies in further detail why agents will behave truthfully under approximately truthful mechanisms.

To handle the potentially large state and action spaces $\cS, \cA$, our work focuses on the linear function approximation setting, where the linear MDP is considered.

\paragraph{Linear MDP.} We assume that there exist a feature map $\phi :\cS\times\cA\mapsto\RR^{d}$, $d$ unknown measures $\boldsymbol{\mu}_{h}=(\mu_{h}^{1},\cdots,\mu_{h}^{d})$ over $\cS$ for any $h\in[H]$, and $n+1$ unknown vectors $\{\boldsymbol{\theta}_{i h}\}_{i=0}^{n}$ with each $\boldsymbol{\theta}_{i h}\in\RR^{d}$ for all $h\in[H]$. For any $(x,a, x')\in\cS\times\cA\times\cS$, the transition kernel and reward function can be linearly represented as
\begin{equation}\label{equa:Linear MDP}
\begin{aligned}
	&\cP_{h}(x'|x,a)=\langle\phi(x,a),\boldsymbol{\mu}_{h}(x')\rangle\\
	&r_{i,h}(x,a)=\langle\phi(x,a),\boldsymbol{\theta}_{i h}\rangle,\quad \forall i=0,1,\cdots,n. 
\end{aligned}
\end{equation}
Following standard assumptions in the prior literature~\citep{2019Provably, 2020Is}, we assume $\|\phi(x,a)\|\leq 1$ for all $(x,a)\in\cS\times\cA$, $\max\{\|\boldsymbol{\mu}_{h}(\cS)\|,\|\boldsymbol{\theta}_{i h}\|\}\leq \sqrt{d}$ for all $h\in[H] , 0\leq i\leq n$. Recall that the linear MDP assumption implies that the value functions and action-value functions are both linear in the feature space defined by $\phi$~\citep{2019Provably}. When the problem reduces to the tabular setting, we have $d = |\cS| |\cA|$ with $\phi(x,a)=\mathbf{e}_{x,a}\in \RR^{|\cS| |\cA|}$ being an indicator vector.

\begin{remark}
When linear function approximation is considered, a typical assumption is that the underlying MDP has a linear structure. Here we assume the MDP satisfies Equation \eqref{equa:Linear MDP}. As discussed above, the tabular MDP can be covered as a special case of the linear MDP. Thus, our method for the linear MDP can also solve problems modeled by the tabular MDP. In realistic and complex scenarios, the underlying MDP may not be strictly linear. One can still apply the linear function approximation along with introducing a misspecification error. This error can be characterized by $\sup_{x,a}\|\cP_{h}(\cdot|x,a)-\langle\phi(x,a),\boldsymbol{\mu}_{h}(\cdot)\|_\mathrm{TV} \leq \cE_\cP$ and $\sup_{i,x,a}\|r_{i,h}(x,a)-\langle\phi(x,a),\boldsymbol{\theta}_{i h}\rangle\|_\mathrm{TV} \leq \cE_r$ as commonly discussed in prior RL literature (e.g., Jin et al. (2019)), where $\|\cdot\|_\mathrm{TV}$ denotes the total variation. By making small changes to our current analysis, extra misspecification terms containing $\cE_\cP$ and $\cE_r$ will be added to our regret bounds. If both $\cE_\cP$ and $\cE_r$ are small, the underlying MDP is approximately linear such that the extra terms can be considered minor.
\end{remark}

\subsection{Motivating Examples}
\label{subsec:motivating_examples}
We provide several motivating examples for the dynamic mechanism design introduced above, which are the potential application areas for our proposed algorithm. 

\paragraph{Dynamic Sponsored Search Auction.} We assume the state $x$ includes information on the agents’ remaining budgets for the episode. Let $H$ be a fiscal year. As advertisements’ values change within a single year (e.g.,  value increases around Black Friday), agents’ rewards from advertising naturally change with time. The seller’s action would affect the agents’ budgets, which would further affect their valuations: an agent who did not win any auction in previous rounds would have a high remaining budget near the end of the year and, therefore, would be willing to pay more for each advertisement slot in an effort to increase their odds of winning.

\paragraph{Dynamic Platform-as-a-Service (PaaS).} We assume there are multiple users using the same computing cluster and a central planner who allocates computation resources to these users. The state $x$ includes information on the server’s current load, and action $a$ reflects how the central planner allocates these resources among users. Naturally, the planner’s action affects the server load in the next state. While a higher server load would provide users with immediate satisfaction, it would also incur higher electricity costs for the planner. As the users’ demands may fluctuate within a day (for instance, demands are lower during the night), it is a significant challenge for the planner to balance electricity costs and user satisfaction in an environment with the users’ valuations and demands constantly changing. The problem is further complicated by the fact that the service provider only learns user satisfaction after the resources are allocated, justifying our setup above.

\paragraph{Dynamic Public Service.} This example is inspired by Section 9.3.5.5 in \citet{nisan2007algorithmic}. Here the seller takes the form of a government body, and the agents are the citizens. The seller wishes to provide public services to benefit the general population, and the agents pay the seller in the form of taxation. The state $x$ contains information on the seller’s remaining budget for the year as well as the agents’ satisfaction with the seller. When the seller does not provide sufficient public service, agents will become less satisfied and have more urgent demands for public services in later steps, exhibiting natural transition dynamics. As the seller can only learn the agents’ valuation after the service has been provided, the problem fits naturally within the setting considered above.

\paragraph{Relationship to~\citet{parkes2003mdp}.} Finally, our work could address several key problems raised by prior works on dynamic mechanism design without assuming prior knowledge of the underlying model. \citet{parkes2003mdp} studies an online mechanism design problem by formulating the problem as an MDP and proposes Wi-Fi pricing at Starbucks as a motivating example. \citet{parkes2003mdp} assumes that the welfare-maximizing policy is known a priori. However, the MDP in \citet{parkes2003mdp} is an infinite-horizon, un-discounted, and non-average reward one, and we are not aware of any existing literature that can provably learn nearly optimal policies in this setting. We thus leave the question as a future direction of independent interest. Nevertheless, our work takes a first step towards relaxing the assumption by requiring the mechanism designer to recover the policy from repeated interaction in the finite horizon case.

\section{Algorithm} \label{sec:alg}

In this section, we introduce our proposed algorithm for VCG mechanism learning on linear MDPs (\texttt{VCG-LinMDP}). The general learning framework of our algorithm is summarized in Algorithm \ref{algorithm:LMVL}, comprising two phases: the exploration phase and the exploitation phase. The exploration and exploitation phases are summarized in Algorithms \ref{algorithm:L1}, \ref{algorithm:L3}, and \ref{algorithm:L4}.

\subsection{Algorithmic Framework} \label{subsec:algframe}

\paragraph{Markov VCG with Function Approximation.} 
In order to learn the Markov VCG mechanism, we consider a learning framework with function approximation, in which the reward-free exploration phase aims to efficiently explore the environment with wide coverage over the underlying policy space. The exploitation phase targets at utilizing the collected data to update the seller's policy and estimate the prices charged to the agents. We remark that this learning framework is general and can fit \emph{any linear or nonlinear} function approximators. We summarize it as follows:

\begin{enumerate}[leftmargin=*,itemsep=0pt,parsep=1pt]
	\item Exploration for multiple rounds to collect an initial dataset. The exploration is performed via a reward-free least-square value iteration (LSVI) with function approximation \citep{jin2020reward,2020On,2021On}.
	
	\item Exploitation with the collected data. At each round $t$ of the exploitation phase:
	\begin{itemize}[leftmargin=*,itemsep=0pt,parsep=1pt]
		\item Update the seller's policy $\widehat{\pi}^t$ via a planning subroutine implemented as optimistic LSVI with function approximation w.r.t. the reward function $R$.
		
		\item Update $F_{t}^{-i}$ by the value function from a planning subroutine implemented as optimistic or pessimistic LSVI with function approximation w.r.t. $R^{-i}$.
		
		\item Update $G_{t}^{-i}$ by the value function from a policy evaluation subroutine by optimistic or pessimistic evaluation with function approximation at the learned policy $\widehat{\pi}^t$ w.r.t. $R^{-i}$.
		
		\item Estimate the price $p_{i t}= F_{t}^{-i}-G_{t}^{-i}$ for all $i\in[n]$.
		
		\item Take actions following $\widehat{\pi}^t$ and charge each agent $i$ a price $p_{i t}$ for $i\in[n]$.
		\item Determine whether we should update the dataset with the new trajectory.
	\end{itemize}
	
\end{enumerate}

Here $\widehat{\pi}^t$ is the learned policy aiming to estimate $\pi_*$, the function $F_{t}^{-i}$ can be viewed as an estimate of the value function under the fictitious policy, i.e., $V_{1}^{\pi_*^{-i}}(x_{1};R^{-i})$, and $G_{t}^{-i}$ estimates $V_{1}^{\widehat{\pi}^t}(x_{1};R^{-i})$ under the policy $\widehat{\pi}^t$. 
In particular, the hyperparameters $\zeta_2, \zeta_3$ control whether such an estimation by $F_{t}^{-i}$ and $G_{t}^{-i}$ is optimistic or pessimistic. Moreover, since $\widehat{\pi}^t$ estimates $\pi_*$, then $G_{t}^{-i}$ can further be considered as an approximation of $V_{1}^{\pi_*}(x_{1};R^{-i})$, which implies that the price $p_{i*}$ is estimated by $p_{it}$ according to its definition.  At a higher level, the algorithm decomposes learning the Markov VCG mechanism into two parts: 1) learning an efficient, social welfare-maximizing policy, and 2) estimating the suitable prices to charge the agents.

This paper focuses on a special case, i.e., Markov VCG with linear function approximation named \texttt{VCG-LinMDP}, as shown in Algorithm \ref{algorithm:LMVL}. The associated exploration phase is implemented in Algorithm \ref{algorithm:L1}, and the exploitation phase is implemented in Algorithms  \ref{algorithm:L3} and \ref{algorithm:L4}, where we adopt LSVI with linear function approximation. In particular, Algorithms \ref{algorithm:L3} and \ref{algorithm:L4} are the planning and policy evaluation subroutines respectively. As we can see from the overall framework, learning the price requires both planning to learn a fictitious policy (the function $F_t^{-1}$) and function evaluation on the learned policy $G_t^{-i}$ in order to estimate the price, necessitating the inclusion of both Algorithm~\ref{algorithm:L3} and Algorithm~\ref{algorithm:L4}.

\begin{algorithm}[t]
	\begin{algorithmic}[1]
		\caption{\texttt{VCG-LinMDP}}
		\label{algorithm:LMVL}
		\setstretch{1.1}
		\REQUIRE $\zeta_{1}\in\{\texttt{ETC}, \texttt{EWC}\}$, $\zeta_2,\zeta_3\in\{\texttt{OPT},\texttt{PES}\}$, 
		$\fR\in\{R, R^{-i}\}$, and $K$.
		
		{\color{blue} //Exploration Phase}
		\STATE Reward-free exploration for $K$ rounds via Algorithm \ref{algorithm:L1} and obtain $\cD=\{(x_h^k,a_h^k)\}_{h,k} \cup \{r_{i,h}^k(x_h^k,a_h^k)\}_{i,h,k}$. 
		
		{\color{blue} //Exploitation Phase}
		\FOR{$t=K+1,\cdots,T$}
		\STATE Update policy $\widehat{\pi}^t$ by the returned policy of Algorithm  \ref{algorithm:L3} with input  parameters $(R, \zeta_{1}, \texttt{OPT}, \cD)$.
		
		\STATE Update $F_{t}^{-i}$ by the returned value function of Algorithm \ref{algorithm:L3} with  parameters $(R^{-i}, \zeta_{1}, \zeta_2, \cD)$ for all $i\in [n]$.

		\STATE Update $G_{t}^{-i}$ by the returned value function of Algorithm  \ref{algorithm:L4} with  parameters $(R^{-i}, \zeta_{1}, \zeta_3, \cD, \widehat{\pi}^t)$ for all $i\in [n]$.

		
		\STATE Calculate the price $p_{i t}= F_{t}^{-i}-G_{t}^{-i}$ for all $i\in[n]$.
		\STATE Take action $a_h^t=\widehat{\pi}_h^t(x_h^t)$, receive rewards $\{r_{i,h}^t(x_h^t,a_h^t)\}_{i}$, and observe $x_{h+1}^t\sim\cP_h(\cdot|x_h^t,a_h^t)$ from $h=1$ to $H$.
		\STATE Charge each agent $i$ a price $p_{i t}$ for all $i\in[n]$.
		\IF{$\zeta_{1}=\texttt{EWC}$}
		\STATE $\cD\leftarrow\cD\cup\{(x_h^t,a_h^t)\}_{t,h} \cup\{r_{i,h}^t(x_h^t,a_h^t)\}_{i,h,t}$
		\ELSIF{$\zeta_{1}=\texttt{ETC}$} 
		\STATE Keep $\cD$ unchanged as collected in the exploration phase.
		\ENDIF
		\ENDFOR
	\end{algorithmic}
\end{algorithm}

As shown in Algorithm \ref{algorithm:LMVL}, there are multiple hyper-parameters. Specifically, $\zeta_1$ controls the overall learning strategy of \texttt{VCG-LinMDP} with options \texttt{ETC} and \texttt{EWC}. The option \texttt{ETC} indicates the \emph{explore-then-commit} strategy, where we exploit using only the data generated during the exploration phase. \texttt{EWC} indicates \emph{explore-while-commit} strategy, where we exploit using data generated during both the exploration phase and the exploitation phase. The options $\texttt{OPT}$ and $\texttt{PES}$ for the hyper-parameters $\zeta_2$ and $\zeta_3$ refer to optimistic and pessimistic exploitation approaches respectively, which control the trade-off between the seller's and the agents' utilities. Finally, for Algorithms~\ref{algorithm:L3} and~\ref{algorithm:L4}, the hyper-parameter $\fR$ controls whether the input reward function is $R$ or $R^{-i}$. In these algorithms, for abbreviation, we denote by $r_{i,h}^k(s_h^k,a_h^k):=r_{i,h}(s_h^k,a_h^k;\omega_h^k)$ the reward collected at step $h$ of time $k$ in the exploration phase and by 
$r_{i,h}^t(s_h^t,a_h^t):=r_{i,h}(s_h^t,a_h^t;\omega_h^t)$ a reward collected at step $h$ of time $t$ in the exploitation phase, where $\omega_h^k$ and $\omega_h^t$ represent the randomness in the reward observation.

\begin{remark}
	We remark that in our proposed algorithms in Section \ref{sec:alg}, with a slight abuse of notation, we do not require the reports of the rewards to be truthful when setting $\fR = R$ or $\fR = R^{-i}$. One can think of $R$ and $R^{-i}$ as input arguments if no specific discussion on truthfulness is involved. The rewards in the algorithms can be either truthful or untruthful. Whether the rewards are needed to be truthful or not will be explicitly highlighted in our theoretical results and the associated proofs.
\end{remark}


\begin{remark}\label{re:zeta}
	Intuitively, the hyperparameters $\zeta_2$ and $\zeta_3$ control whether the price favors the sellers or the buyers. There are two extreme cases for the setting of $(\zeta_{2},\zeta_{3})$, namely $(\texttt{PES},\texttt{OPT})$ and $(\texttt{OPT}, \texttt{PES})$. The configuration $(\zeta_{2},\zeta_{3})=(\texttt{PES},\texttt{OPT})$ that favors agents potentially leads to a low price $p_{it}$ and high agent utilities, resulting in a low agent regret and a high seller regret. The configuration $(\zeta_{2},\zeta_{3})=(\texttt{OPT},\texttt{PES})$ will favor the seller with a high price $p_{it}$ and a high seller utility, which results in a high agent regret and low seller regret.
	The prices charged under other configurations would fall somewhere between the aforementioned high and low prices. Consequently, the agents' and the seller's regrets would naturally be somewhere in the middle between the two representative cases, which we will expand in depth in our theoretical results. Such flexibility can be crucial in practice. For instance, the seller in the dynamic sponsored search auction or the dynamic PaaS setting discussed in Section~\ref{subsec:motivating_examples} favors a high price obtained by setting $\zeta_2 = \texttt{OPT}, \zeta_3 = \texttt{PES}$, while the social good provider in the dynamic public service setting may prefer a lower price when we set $\zeta_2 = \texttt{PES}, \zeta_3 = \texttt{OPT}$. 
\end{remark}

\paragraph{Least-Square Value Iteration.} With the overarching framework defined, we now introduce a key technique heavily used by our algorithm. For any function approximation class $\cF$, at the $t$-th episode, we have $t-1$ transition tuples, $\{(x_h^\tau, a_h^\tau, x_{h+1}^\tau)\}_{\tau\in[t-1]}$, and LSVI with function approximation \citep{2019Provably,yang2020function,2020Is} estimates the Q-function using $\tilde{f}_h^t$, obtained from the least-squares regression problem below.
\begin{align*}
	&\tilde{f}_h^t=\argmin_{f\in\cF}\sum_{\tau=1}^{t-1}\big[r_h^\tau(x_h^\tau, a_h^\tau)+V_h^t\big(x_h^{\tau})-f_h(x_h^\tau, a_h^\tau)\big)\big]^{2}+\mathrm{pen}(f),\\
	&f_h^t = \mathrm{truncate}\{\tilde{f}_h^t\},
\end{align*}
where $\mathrm{pen}(f)$ is some arbitrary regularizer, $r_h$ is some reward function, $\mathrm{truncate}\{\cdot\}$ is some truncation operator to guarantee that the approximation function is in a correct scale such that it does not violate the boundedness assumptions we place on the Q-function. For optimistic LSVI, we construct \emph{optimistic} Q-function as 
\begin{align*}
	Q_h^t = \mathrm{truncate}\{f_h^t + u_h^t\},    
\end{align*}
where we again truncate the estimated Q-function, and $u_h^t$ is an associated UCB bonus term constructed using the collected trajectories. Similarly, the \emph{pessimistic} Q-function is constructed as
\begin{align*}
	Q_h^t = \mathrm{truncate}\{f_h^t - u_h^t\}.    
\end{align*}
We update the value function by a greedy strategy as
\begin{align*}
	V_h^t(\cdot) = \argmax_{a\in\cA} Q_h^t(\cdot,a),
\end{align*}
for optimistic Q-function or pessimistic Q-function respectively. For the linear function approximation in our algorithm, according to our setting of linear MDPs, we let $f(\cdot,\cdot)=w^\top \phi(\cdot,\cdot)$ for any $f\in \cF$ and $\mathrm{pen}(f)$ be $\lambda \|w\|^2$ where $w$ is the parameter to learn. 

With the key ideas sketched out, we then proceed with fleshing out the proposed algorithms.

\subsection{Exploration Phase}

\begin{algorithm}[t]
	\begin{algorithmic}[1]
		\caption{\texttt{Exploration}} \label{algorithm:L1}
		\REQUIRE Failure probability $\delta > 0$, $K$, and $\lambda>0$
		\STATE $\beta= \hat{c}(n+R_{\max}) d H\sqrt{\log(36 n d H T/\delta)}$.
		\FOR{$k =1,2\cdots,K$}
		\STATE Set $V_{H+1}^k(\cdot)=0$.
		\FOR{$h =H,H-1\cdots,1$}
		\STATE $\Lambda_h^k=\sum_{\tau=1}^{k-1}\phi(x_h^{\tau},a_h^{\tau})\phi(x_h^{\tau},a_h^{\tau})^{\top}+\lambda I$.
		\STATE $u_h^k(\cdot,\cdot)= \Pi_{[0,H(n+R_{\max})]}\big[\beta[\phi(\cdot,\cdot)(\Lambda_h^k)^{-1}\phi(\cdot,\cdot)]^{1/2}\big]$.
		\STATE Define an exploration-driven reward function $l_h^k(\cdot,\cdot)= u_h^k(\cdot,\cdot)/H$.
		\STATE $w_h^k=(\Lambda_h^k)^{-1}\sum_{\tau=1}^{k-1}\phi(x_h^{\tau},a_h^{\tau})V_{h+1}^k(x_{h+1}^{\tau})$.
		\STATE $Q_h^k(\cdot,\cdot)=\min\{\Pi_{[0,H(n+R_{\max})]}[(w_h^k)^{\top}\phi(\cdot,\cdot)]+l_h^k(\cdot.\cdot)+u_h^k(\cdot,\cdot), H(n+R_{\max})\}$.
		\STATE $V_h^k(\cdot)=\max_{a\in\cA}Q_h^k(\cdot,a)$.
		\STATE $\pi_h^k(\cdot)=\argmax_{a\in\cA}Q_h^k(\cdot,a)$.
		\ENDFOR
		\STATE Take action $a_h^k=\pi_h^k(x_h^k)$, receive rewards $\{r_{i,h}^k(x_h^k,a_h^k)\}_i$, and observe the state transition $x_{h+1}^k\sim\cP_h(\cdot|x_h^k,a_h^k)$ from $h=1$ to $H$.
		\ENDFOR
		\STATE {\bfseries return} $\cD=\{(x_h^k,a_h^k)\}_{(h, k)\in [H]\times[K]} \cup \{r_{i,h}^k(x_h^k,a_h^k)\}_{(i,h,k)\in (\{0\}\cup[n])\times[H]\times[K]}$
	\end{algorithmic}
\end{algorithm}

Our first component is the exploration phase. Recall that $F_t^{-i}$ estimates the value function of the fictitious policy that maximizes welfare when agent $i$ is absent. Obtaining high-quality $F_t^{-i}$ for all $n$ agents then requires the algorithm to explore in the direction of multiple policies rather than only in a single policy's direction. This challenge necessitates reward-free reinforcement learning, where the learning algorithm seeks to explore the environment in the directions of all possible policies as opposed to only a single one.

Inspired by \citet{2020On}, we design a reward-free exploration algorithm as in Algorithm \ref{algorithm:L3}, incorporating the linear structure of the MDP. Specifically, to handle multiple reward functions from the seller and $n$ agents, we propose to explore the environment without using the observed rewards from it. Instead, we define an exploration-driven reward $l_h^k$ as a scaled bonus term $u_h^k$ to encourage exploration by further taking into account the uncertainty of estimating the environment. The bonus term computed in Line 6 quantifies the uncertainty of estimation with a linear function approximator.
Based on the exploration-driven rewards $l_h^k = u_h^k/H$ and the bonus term $u_h^k$ as well as the linear function approximation, we calculate an optimistic Q-function and perform the optimistic 
reward-free LSVI to generate the exploration policy. Note that in Algorithm \ref{algorithm:L1} and the subsequent Algorithms \ref{algorithm:L3} and \ref{algorithm:L4}, we define a truncation operator $\Pi_{[0,x]}[\cdot]:= \max \{\min\{\cdot,x\}, 0\}$. Distinguished from the standard LSVI introduced above, the reward-free LSVI only considers the value function as the regression target, i.e., we solve a least-square regression problem in the following form
\begin{align*}
	\argmin_{f\in\cF_{\mathrm{lin}}}\sum_{\tau=1}^{k-1}\big[V_h^k\big(x_h^{\tau})-f_h(x_h^\tau, a_h^\tau)\big)\big]^{2}+\mathrm{pen}(f),
\end{align*}
where $\cF_{\mathrm{lin}}$ is the linear function class. Then, we obtain the coefficient vector $w_h^k$ for linear function approximation. 

Moreover, for the optimistic Q-function in Line 9, we construct it by combining not only the linear approximation function and the exploration bonus $u_h^k$ but also the exploration-driven reward $l_h^k$.
Meanwhile, we collect the trajectories $\cD$ of visited state-action pairs and the corresponding reward feedbacks of $r_i, \forall i=0,1,\ldots, n$, for the subsequent exploitation phase in Algorithms \ref{algorithm:L3} and \ref{algorithm:L4}.


\subsection{Exploitation Phase}\label{sec:alg_L3_L4}

The exploitation phase is separated into two subroutines, namely \texttt{Planning} for planning in Algorithm \ref{algorithm:L3} and \texttt{PolicyEval} for policy evaluation in Algorithm \ref{algorithm:L4}. The two algorithms are general subroutines that are instantiated by the inputs.

\begin{table}[!t]
	\vspace{-0.39cm}
	
	\begin{minipage}[b]{0.495\linewidth}
		
		\begin{algorithm}[H]
				\begin{algorithmic}[1]
					\caption{\texttt{Exploitation}: \texttt{Planning}  } \label{algorithm:L3}
					\REQUIRE 
					$(\fR, \zeta, \zeta', \cD,t)$. 
					\STATE $V_{H+1}^t(\cdot;\fR)=0$.
					\FOR{$h =H,H-1\cdots,1$}
					\STATE $Q_h^t(\cdot,\cdot;\fR) = \texttt{Est-Q}(\fR, \zeta, \zeta', \cD, h,t)$ 
					
					\STATE\label{def:pi_t} $\pi_h^t(\cdot)=\argmax_{a\in\cA}Q_h^t(\cdot,a;\fR)$.
					
					\STATE $V_h^t(\cdot;\fR)= Q_h^t(\cdot,\pi_h^t(\cdot);\fR)$. 
					
					\ENDFOR
					
					\STATE {\bfseries return} $\{{\pi}^t_h\}_{h=1}^{H}$, $V_1^t(x_1;\fR)$
					
				\end{algorithmic}
		\end{algorithm}
		
	\end{minipage}
	\hfill
	\begin{minipage}[b] {0.495\linewidth}
		\begin{algorithm}[H]
			\setstretch{1.175}
				\begin{algorithmic}[1]
					\caption{\texttt{Exploitation}: \texttt{PolicyEval}} \label{algorithm:L4}
					\REQUIRE 
					$(\fR, \zeta, \zeta', \cD, t, \pi)$. 
					\STATE $V_{H+1}^t(\cdot;\fR)=0$.
					\FOR{$h =H,H-1\cdots,1$}
					\STATE $Q_h^t(\cdot,\cdot;\fR) = \texttt{Est-Q}(\fR, \zeta, \zeta', \cD, h,t)$
					\STATE\label{def:V} $V_h^t(\cdot;\fR)= Q_h^t(\cdot,\pi_h(\cdot);\fR)$.
					\ENDFOR
					
					\STATE {\bfseries return} $V_1^t(x_1;\fR)$
					
				\end{algorithmic}
		\end{algorithm}
		
	\end{minipage}
	\vspace{-1.2cm}
	\begin{minipage}[b]{1\linewidth}
		
		\begin{algorithm}[H]
				\begin{algorithmic}[1]
					\caption{\texttt{Est-Q}: One-Step Optimistic/Pessimistic Estimation of Q-Function} \label{algorithm:one-step}
					\REQUIRE 
					$(\fR, \zeta, \zeta', \cD, h,t)$. 
					\STATE Set $\alpha_h(\fR)$ as \eqref{eq:truc_R} and $\beta = \hat{c}(n+R_{\max}) d H\sqrt{\log(36 n d H T/\delta)}$.
					\STATE $P_{t}:=\begin{cases} 
						\{1,2,\cdots,K\} &\text{if $\zeta=\texttt{ETC}$}\\
						\{1,2,\cdots,t-1\} &\text{if $\zeta=\texttt{EWC}$}. 
					\end{cases}$
					\STATE $\Lambda_h^t=\sum_{\tau\in P_{t}}\phi(x_h^{\tau},a_h^{\tau})\phi(x_h^{\tau},a_h^{\tau})^{\top}+\lambda I$.
					\STATE $u_h^t(\cdot,\cdot)=\Pi_{[0,H(n+R_{\max})]}\big[\beta[\phi(\cdot,\cdot)(\Lambda_h^t)^{-1}\phi(\cdot,\cdot)]^{1/2}\big]$.
					\STATE $w_h^t=(\Lambda_h^t)^{-1}  \allowbreak \sum_{\tau\in P_{t}}\phi(x_h^{\tau},a_h^{\tau})[\fR_h^{\tau}(x_h^{\tau},a_h^{\tau})+V_{h+1}^t(x_{h+1}^{\tau};\fR)]$.
					\STATE $f_h^t(\cdot,\cdot) = \Pi_{[0,H(n+R_{\max})]}[(w_h^t)^{\top}\phi(\cdot,\cdot)]$.
					\STATE $Q_h^t(\cdot,\cdot;\fR)=\begin{cases} 
						\Pi_{[0, \alpha_h(\fR)]}[(f_h^t+ u_h^t)(\cdot,\cdot)] &\text{if $\zeta'=\texttt{OPT}$}\\
						\Pi_{[0, \alpha_h(\fR)]}[(f_h^t- u_h^t)(\cdot,\cdot)] &\text{if $\zeta'=\texttt{PES}$}. 
					\end{cases}$
					\STATE {\bfseries return} $Q_h^t(\cdot,\cdot;\fR)$
					
				\end{algorithmic}
		\end{algorithm}
		
	\end{minipage}
	\vspace{0.2cm}
\end{table}

The \texttt{Planning} subroutine in Algorithm \ref{algorithm:L3} is an optimistic or pessimistic LSVI with linear function approximation, which generates a greedy policy and its associated value function. Different from Algorithm \ref{algorithm:L3}, \texttt{PolicyEval} subroutine in Algorithm \ref{algorithm:L4} only evaluates any input policy $\pi$ by computing the value function under $\pi$ with linear function approximation. Both of the two algorithms will call Algorithm \ref{algorithm:one-step}, which is an optimistic or pessimistic estimation of the Q-function for a reward function $\fR \in\{R, R^{-i}\}$ at step $h$. Algorithm \ref{algorithm:one-step} can be viewed as an instantiation of LSVI in Section \ref{subsec:algframe} for linear function approximation. 
In Line 4 of Algorithm \ref{algorithm:one-step}, we compute a bonus $u_h^t$ to quantify the uncertainty in estimation. In Lines 5 and 6, we obtain the coefficient vector $w_h^t$ for linear function approximation and the approximator $f_h^t$. Line 7 yields optimistic and pessimistic Q-functions respectively determined by $\zeta'=\texttt{OPT}$ or $\texttt{PES}$. 

The argument $\zeta$ in these algorithms determines the composition of the data index set $P_{t}$ in Line 2 and thus indicates whether we will use the original exploration dataset or the updated dataset to construct the bonus term $u_h^t$ and the linear function approximator $f_h^t$. More formally, only the data collected in the exploration phase of Algorithm \ref{algorithm:LMVL} will be used if we let $\zeta=\texttt{ETC}$, and the data generated in both exploration and exploitation phases is used when we let $\zeta=\texttt{EWC}$.

The function $\alpha_h(\fR)$ in these algorithms controls the truncation constant, which equals the supremum of the corresponding reward function. Precisely, we have 
\begin{align}
	\begin{aligned}\label{eq:truc_R}
		\alpha_h(\fR):=\begin{cases} 
			(n+R_{\max}) (H-h+1) &\text{if $\fR = R$}\\
			(n-1+R_{\max}) (H-h+1) &\text{if $\fR = R^{-i}$ for any $i\in[n]$ }. 
		\end{cases}
	\end{aligned}
\end{align}

	Note that Algorithm \ref{algorithm:L3} and Algorithm \ref{algorithm:L4} are two generic subroutines for the exploitation phase, whose concrete implementation is contingent on the input arguments. For brevity, we denote all the value functions and Q-functions in Algorithm \ref{algorithm:L3} and Algorithm \ref{algorithm:L4} calculated in step $t$ by $V_h^t(\cdot;\cdot)$ and $Q_h^t(\cdot,\cdot;\cdot)$ respectively. Specifically, in the rest of this work, we let $\{\widehat{V}_h^{t,*}(\cdot;\fR), \hat{Q}_h^{t,*}(\cdot,\cdot;\fR)\}$ and $\{\check{V}_h^{t,*}(\cdot;\fR), \check{Q}_h^{t,*}(\cdot,\cdot;\fR)\}$ be the realization of $V_h^t(\cdot;\cdot)$ and $Q_h^t(\cdot,\cdot;\cdot)$ generated by Algorithm \ref{algorithm:L3} for $\zeta' = \texttt{OPT}$ and $\zeta' = \texttt{PES}$ respectively, with different options for $\fR$; and let $\{\widehat{V}_h^{t,\pi}(\cdot;\fR), \hat{Q}_h^{t,\pi}(\cdot,\cdot;\fR)\}$ and $\{\check{V}_h^{t,\pi}(\cdot;\fR), \check{Q}_h^{t,\pi}(\cdot,\cdot;\fR)\}$ be associated with $\zeta'=\texttt{OPT}$ and $\zeta'=\texttt{PES}$ respectively, which are generated by Algorithm \ref{algorithm:L4} with arbitrary input policy $\pi$. In the sequel, in Algorithm \ref{algorithm:LMVL}, we have 
	\begin{equation*}
		F_{t}^{-i}=\begin{cases}\widehat{V}_{1}^{t,*}\big(x_{1};R^{-i}\big) &\text{if $\zeta_2=\texttt{OPT}$}\\
			\check{V}_{1}^{t,*}\big(x_{1};R^{-i}\big) &\text{if $\zeta_2=\texttt{PES}$},
		\end{cases}   
		\qquad 
		G_{t}^{-i}=\begin{cases}\widehat{V}_{1}^{t,\widehat{\pi}^t}\big(x_{1};R^{-i}\big) &\text{if $\zeta_3=\texttt{OPT}$}\\
			\check{V}_{1}^{t,\widehat{\pi}^t}\big(x_{1};R^{-i}\big) &\text{if $\zeta_3=\texttt{PES}$}.
		\end{cases}   
	\end{equation*}
	These functions then in turn estimate the price that is to be charged to the agents. The exact formulation can be found in Algorithm~\ref{algorithm:LMVL}. 
	
	Our proposed algorithms have the potential of being extended to other nonlinear function approximations following the LSVI steps in Section \ref{subsec:algframe}, such as the kernel function approximation and neural function approximation built on the neural tangent kernel theory \citep{jacot2018neural}. This generalization is facilitated by exploring the inherent structure of specific function classes to construct bonus terms and optimistic/pessimistic Q-functions using techniques proposed in \citet{zhou2020neural,yang2020provably,2021On}. Then, one can replace the function approximation steps in Algorithms \ref{algorithm:L1} and \ref{algorithm:one-step} with the ones tailored for these approximators to apply nonlinear function approximation. Such a direction of research warrants further studies in the future.

	\begin{remark}
		We emphasize that \texttt{VCG-LinMDP} (Algorithm~\ref{algorithm:LMVL}) is not a direct extension of reward-free RL algorithms with function approximation (e.g., \citet{jin2020reward, 2020On, 2021On}) which focus only on estimating the optimal value functions corresponding to different reward functions. Learning the dynamic mechanism requires achieving multiple desiderata as introduced in Section \ref{sec:prob} and minimizing the corresponding regrets, which introduces additional challenges with decomposing the regret terms not encountered in prior literature. In particular, we adopt reward-free exploration to address a specific challenge encountered when learning the dynamic VCG mechanism, namely, the need to learn the fictitious policy, i.e., the optimal policy in the absence of each agent $i$, yet reward-free exploration itself cannot ensure that the resulting mechanism is truthful or individually rational. 
		Particularly, to show that the final policy output by the exploitation phase enjoys the desired desiderata requires the particular structure of the VCG mechanism, which we exploit in our proofs.
		Besides, the exploitation phase (Algorithm~\ref{algorithm:L3} and Algorithm~\ref{algorithm:L4}) allows for optimism and pessimism in an online setting, inducing different price estimation strategies as discussed above. Moreover, Algorithm~\ref{algorithm:L1} differs from standard reward-free RL algorithms by recording the received rewards of different agents during exploration and utilizing these collected rewards to learn the welfare-maximizing policy and the agents' prices.
	\end{remark}

\section{Main Results}

In this section, we discuss our main theoretical results. We first state the results corresponding to the three desiderata in mechanism design when $\zeta_{1}=\texttt{ETC}, \texttt{EWC}$ respectively. Then we present the lower bound of our problem. In our algorithms and theoretical results, $\hat{c}$ is a universal absolute constant. We begin with the results for when $\zeta_{1}=\texttt{ETC}$, i.e., the proposed algorithms adopt the \emph{explore-then-commit} strategy, where the exploitation phase uses only the data generated during the exploration phase.

\begin{theorem}\label{theorem:ETC}
	When $\zeta_{1}=\texttt{ETC}$, setting $K=dH^{4/3}\iota^{1/3}T^{2/3}$ where $\iota:=\log(36 n d H T/\delta)$ for any $\delta\in (0,1]$, defining $n_R:=n+R_{\max}$, with probability at least $1-\delta$, for all $T > K$, the following results hold after executing Algorithm \ref{algorithm:LMVL} for $T$ rounds: 
	\begin{enumerate}[leftmargin=*,itemsep=0pt,parsep=1pt]
		\item Assuming all agents report truthfully, for all $\zeta_{2},\zeta_{3}\in\{\texttt{OPT},\texttt{PES}\}$, the welfare regret satisfies
		\begin{equation*}
			\mathrm{Reg}_{T}^{W}\leq(1+2\hat{c})n_Rd H^{7/3}\iota^{1/3}T^{2/3},
		\end{equation*}
		which indicates that the learned mechanism is $(1+2\hat{c})n_Rd H^{7/3}\iota^{1/3}T^{2/3}$-approximately efficient.
		\item Assuming all agents report truthfully, the regret of agent $i$ satisfies
		\begin{align*}
			\mathrm{Reg}_{i T}\leq \begin{cases}
				(1 + 2\hat{c}n_R)dH^{7/3}\iota^{1/3}T^{2/3} &\textrm{ if $(\zeta_{2},\zeta_{3})=(\texttt{PES},\texttt{OPT})$}\\
				(1 + 6\hat{c}n_R)dH^{7/3}\iota^{1/3}T^{2/3} &\textrm{ if $(\zeta_{2},\zeta_{3})=(\texttt{OPT},\texttt{PES})$}.
			\end{cases}
		\end{align*}
		
		\item Assuming all agents report truthfully, the regret of the seller satisfies
		\begin{equation*}
			\mathrm{Reg}_{0 T}\leq\begin{cases}
				(1+4\hat{c}n)n_Rd H^{7/3}\iota^{1/3} T^{2/3}& \text{if $(\zeta_{2},\zeta_{3})=(\texttt{PES},\texttt{OPT})$}\\
				n_Rd H^{7/3}\iota^{1/3} T^{2/3}& \text{if $(\zeta_{2},\zeta_{3})=(\texttt{OPT},\texttt{PES})$}.
			\end{cases}
		\end{equation*}
		\item 
		The learned mechanism is $6\hat{c}n_Rd H^{7/3}\iota^{1/3}T^{2/3}$-approximately individually rational.
		
		\item 
		The learned mechanism is $\big(1+ 4\hat{c}n_R\big)dH^{7/3}\iota^{1/3}T^{2/3}$-approximately truthful.
		
	\end{enumerate}
\end{theorem}
As the learning objective of our algorithm is to minimize the welfare regret together with the agent and seller regrets, we choose $K=dH^{4/3}\iota^{1/3}T^{2/3}$ that can lead to a small upper bound of $\max\{n\textrm{Reg}^W_T, \textrm{Reg}_T^\sharp, \textrm{Reg}_{0T}\}$, which is $\cO \big(n(n+R_{\max})d H^{7/3}\iota^{1/3}T^{2/3}\big)$. Here we ignore constant factors and emphasize $K$'s dependence on $d$, $H$, $\iota$, and $T$.
As discussed in Remark \ref{re:zeta}, we use $\zeta_{2}$ and $\zeta_{3}$ to control the charged price and the seller and agent utilities, which further affect the achieved regrets. When $(\zeta_{2},\zeta_{3})=(\texttt{OPT},\texttt{PES})$, the charged price will be large and favor the seller, which thus leads to a relatively low seller regret $(n+R_{\max})d H^{7/3}\iota^{1/3} T^{2/3}$ and a high agent regret $(1 + 6\hat{c}(n+R_{\max}))dH^{7/3}\iota^{1/3}T^{2/3}$. When $\zeta_{2}=\texttt{PES}$ and $\zeta_{3}=\texttt{OPT}$, there will be a lower price favoring the agent, such that the seller regret increases to $(1+4\hat{c}n)n_Rd H^{7/3}\iota^{1/3} T^{2/3}$ and agent $i$'s regret decreases to $(1 + 2\hat{c}(n+R_{\max}))dH^{7/3}\iota^{1/3}T^{2/3}$. The seller and agent regrets incurred by other options of $(\zeta_{2},\zeta_{3})$ will lie between the above regret bounds under such two settings. 
Since the welfare does not depend on the price as shown in Equation \eqref{equa:regret}, the choices of $(\zeta_{2},\zeta_{3})$ thus have no impact on the welfare regret.

We further present the results for  $\zeta_{1}=\texttt{EWC}$, i.e., the algorithm adopts the \emph{explore-while-commit} strategy, where the exploitation phase uses data collected during both the exploration and exploitation phases.

\begin{theorem}\label{theorem:OPT}
	When $\zeta_{1}=\texttt{EWC}$, setting $K=dH^{4/3}\iota^{1/3}T^{2/3}$ where $\iota:=\log(36 n d H T/\delta)$ for any $\delta\in (0,1]$, defining $n_R:=n+R_{\max}$, with probability at least $1-\delta$, for all $T > K$, the following results hold after executing Algorithm \ref{algorithm:LMVL} for $T$ rounds:
	\begin{enumerate}[leftmargin=*,itemsep=0pt,parsep=1pt]
		\item Assuming all agents report truthfully, for all $\zeta_{2},\zeta_{3}\in\{\texttt{OPT},\texttt{PES}\}$, the welfare regret satisfies
		\begin{equation*}
			\mathrm{Reg}_{T}^{W}\leq n_R d H^{7/3} \iota^{1/3} T^{2/3}+ 6\hat{c}n_R d^{3/2} H^2 \iota T^{1/2},
		\end{equation*}
		which indicates that the learned mechanism is $ (n_R d H^{7/3} \iota^{1/3} T^{2/3}+ 6\hat{c}n_R d^{3/2} H^2 \iota T^{1/2})$-approximately efficient.
		\item Assuming all agents report truthfully, the regret of agent $i$ satisfies
		\begin{equation*}
			\mathrm{Reg}_{i T}\leq \begin{cases}
				d H^{7/3} \iota^{1/3} T^{2/3}+ 6\hat{c}n_R d^{3/2} H^2 \iota T^{1/2}& \text{if $(\zeta_{2},\zeta_{3})=(\texttt{PES},\texttt{OPT})$}\\
				(1+4\hat{c}n_R)dH^{7/3} \iota^{1/3} T^{2/3}+ 6\hat{c}n_R d^{3/2} H^2 \iota T^{1/2}& \text{if $(\zeta_{2},\zeta_{3})=(\texttt{OPT},\texttt{PES})$},
			\end{cases}
		\end{equation*}
		\item Assuming all agents report truthfully, the regret of the seller satisfies
		\begin{equation*}
			\mathrm{Reg}_{0 T}\leq\begin{cases}
				(1+4\hat{c}n)n_R d H^{7/3}\iota^{1/3} T^{2/3}& \text{if $(\zeta_{2},\zeta_{3})=(\texttt{PES},\texttt{OPT})$}\\
				n_Rd H^{7/3}\iota^{1/3} T^{2/3}& \text{if $(\zeta_{2},\zeta_{3})=(\texttt{OPT},\texttt{PES})$}.
			\end{cases}
		\end{equation*}
		
		\item 
		The learned mechanism is $6\hat{c}n_R d H^{7/3}\iota^{1/3}T^{2/3}$-approximately individually rational.
		
		\item 
		The learned mechanism is $(1 +8\hat{c}n_R)d H^{7/3}\iota^{1/3}T^{2/3}$-approximately truthful.
	\end{enumerate}
\end{theorem}
Similar to Theorem \ref{theorem:ETC}, we choose a proper $K$ in Theorem \ref{theorem:OPT} that can lead to a small upper bound of $\max\{n\textrm{Reg}^W_T, \textrm{Reg}_T^\sharp, \textrm{Reg}_{0T}\}$ in terms of $d$, $H$, $\iota$, and $T$, which is $\cO \big(n(n+R_{\max})d H^{7/3}\iota^{1/3}T^{2/3}\big)$. Theorem \ref{theorem:OPT} also gives the seller and agent regret bounds for the two settings $(\zeta_{2},\zeta_{3})=(\texttt{PES}, \texttt{OPT})$ and $(\zeta_{2},\zeta_{3})=(\texttt{OPT},\texttt{PES})$, showing that the seller and agent regret bounds vary between the ones under these two extreme cases according to Remark \ref{re:zeta}. Note that when the problem reduces to the tabular setting, we have $d = |\cS| |\cA|$ in Theorems \ref{theorem:ETC} and \ref{theorem:OPT}. When $d\leq|\cS||\cA|$, we obtain a better rate than that under the tabular setting.

\begin{table*}[t] 
	\renewcommand{\arraystretch}{1.5}
	\centering
	\begin{tabular}{ | >{\centering\arraybackslash}m{2.2cm} | >{\centering\arraybackslash}m{4.6cm} | >{\centering\arraybackslash}m{6.8cm} | } 
		\hline
		Metrics & Theorem 4.1 ($\zeta_1=\texttt{ETC}$)  & Theorem 4.2  ($\zeta_1=\texttt{EWC}$)   \\ \hline
		$\mathrm{Reg}_T^W$ &  $(1+2\hat{c})n_Rd H^{\frac{7}{3}}\iota^{\frac{1}{3}}T^{\frac{2}{3}}$ & $n_R d H^{\frac{7}{3}} \iota^{\frac{1}{3}} T^{\frac{2}{3}} + {\color{red} 6\hat{c}n_R d^{\frac{3}{2}} H^2 \iota T^{\frac{1}{2}}}$    \\\hline
		\multirow{2}{*}[-0.05cm]{$\mathrm{Reg}_{i T}$}  & $(1 + 2\hat{c}n_R)d H^{\frac{7}{3}}\iota^{\frac{1}{3}}T^{\frac{2}{3}}$ $\blacklozenge$ & $ d H^{\frac{7}{3}} \iota^{\frac{1}{3}} T^{\frac{2}{3}}+ {\color{red}6\hat{c}n_R d^{\frac{3}{2}} H^2 \iota T^{\frac{1}{2}}}$ $\blacklozenge$ \\
		& $(1 + 6\hat{c}n_R)d H^{\frac{7}{3}}\iota^{\frac{1}{3}}T^{\frac{2}{3}}$ $\blacktriangle$ & $(1+4\hat{c}n_R)dH^{\frac{7}{3}} \iota^{\frac{1}{3}} T^{\frac{2}{3}}+ {\color{red}6\hat{c}n_R d^{\frac{3}{2}} H^2 \iota T^{\frac{1}{2}}}$ $\blacktriangle$  \\ \hline
		\multirow{2}{*}[-0.05cm]{$\mathrm{Reg}_{0 T}$}  & $(1+4\hat{c}n)n_R d H^{\frac{7}{3}}\iota^{\frac{1}{3}}T^{\frac{2}{3}}$ $\blacklozenge$ & $(1+4\hat{c}n)n_R d H^{\frac{7}{3}}\iota^{\frac{1}{3}}T^{\frac{2}{3}}$ $\blacklozenge$  \\
		& $n_Rd H^{\frac{7}{3}}\iota^{\frac{1}{3}}T^{\frac{2}{3}}$ $\blacktriangle$ & $n_Rd H^{\frac{7}{3}}\iota^{\frac{1}{3}}T^{\frac{2}{3}}$ $\blacktriangle$   \\ \hline
		Approx. I.R. & $6\hat{c}n_Rd H^{\frac{7}{3}}\iota^{\frac{1}{3}}T^{\frac{2}{3}}$ & $6\hat{c}n_Rd H^{\frac{7}{3}}\iota^{\frac{1}{3}}T^{\frac{2}{3}}$    \\ \hline
		Approx. Tr.  & $(1+ 4\hat{c}n_R)d H^{\frac{7}{3}}\iota^{\frac{1}{3}}T^{\frac{2}{3}}$ & $(1+ 8\hat{c}n_R)d H^{\frac{7}{3}}\iota^{\frac{1}{3}}T^{\frac{2}{3}}$   \\ \hline
	\end{tabular}
	\caption{Comparison of Theorem \ref{theorem:ETC} and Theorem \ref{theorem:OPT}. Here ``Approx. I.R.'' and ``Approx. Tr.'' are the abbreviations of ``Approximate Individual Rationality'' and ``Approximate Truthfulness''. The results in Theorem \ref{theorem:ETC} and Theorem \ref{theorem:OPT} hold with probability at least $1-\delta$ respectively for any $\delta\in (0,1]$. We let $n_R:=n+R_{\max}$ and $\iota:=\log(36 n d H T/\delta)$. We use 
		$\blacklozenge$ to represent the configuration $(\zeta_{2},\zeta_{3})=(\texttt{PES},\texttt{OPT})$ and $\blacktriangle$ to represent $(\zeta_{2},\zeta_{3})=(\texttt{OPT},\texttt{PES})$. We further highlight the improvements in the welfare and agent regrets in {\color{red}red}.}
	\label{tab:comparison}
	\vspace{-0.3cm}
\end{table*}

\paragraph{Further Discussion on Theorem \ref{theorem:ETC} and Theorem \ref{theorem:OPT}.} We summarize the results from the two theorems in Table \ref{tab:comparison}. As shown in our proof sketch in Section \ref{sec:proof-sk}, we obtain that $\mathrm{Reg}_{T}^{W}\leq (n+R_{\max})HK+2\hat{c}(n+R_{\max})\sqrt{d^{3}H^{6}\iota/K}(T-K)$ when $\zeta_1 = \texttt{ETC}$ in Theorem~\ref{theorem:ETC} and $\mathrm{Reg}_{T}^{W}\leq (n+R_{\max})HK+6\hat{c}(n+R_{\max})\sqrt{d^3 H^4  (T-K)\iota^2}$ when $\zeta_1 = \texttt{EWC}$ in Theorem~\ref{theorem:OPT}, where both bounds share the same term $H(n+R_{\max})K$ that results from the exploration phase. To compare the welfare regrets achieved in both theorems fairly, we in fact need the rounds of exploration $K$ to be the same, although a straightforward idea might be setting $K$ differently as $K=\tilde{\cO}(T^{2/3})$ for \texttt{ETC} and $K=0$ for \texttt{EWC} to minimize the two bounds respectively. However, we note that the setting $K=0$ for Theorem \ref{theorem:OPT} will lead to unboundedness in the seller and agent regrets as well as the individual rationality and truthfulness according to our proof sketch in Section \ref{sec:proof42}. Fortunately, our choice of $K$ depends on the metric of $\max\{n\textrm{Reg}^W_T, \textrm{Reg}_T^\sharp, \textrm{Reg}_{0T}\}$, where $\textrm{Reg}_T^\sharp:=\sum_{i=1}^n \textrm{Reg}_{iT}$, by taking all three types of regrets into consideration, which can naturally resolve the aforementioned issue. Moreover, under this metric, the choices of $K$ for both theorems all have the same dependence on $d$, $H$, $\iota$, and $T$ as justified in our proof sketch, and thus we set the same value of $K$ directly as $dH^{4/3}\iota^{1/3}T^{2/3}$.

From Table~\ref{tab:comparison}, it is seen that the same setting of $K$ leads to the same individual rationality guarantee and nearly the same truthfulness guarantee that differs only by an absolute constant scaling factor.
Again referencing~\citet{epasto2018incentive}, it is even challenging for real-world agents to capitalize on a slightly larger constant factor in the approximate truthfulness guarantees. Therefore, although a slight increase exists in the truthfulness guarantee for $\zeta_{1} = \texttt{EWC}$ compared to $\zeta_{1} = \texttt{ETC}$, the current setting of $K$ is justifiable and enables a fair comparison of regrets. Then, as shown in Table \ref{tab:comparison}, with $K=dH^{4/3}\iota^{1/3}T^{2/3}$, the algorithm under $\zeta_{1} = \texttt{ETC}$ can improve a part of the welfare regret from $\tilde{\cO}(T^{2/3})$ to $\tilde{\cO}(T^{1/2})$. This improvement results from the use of all the data gathered up to time step $t$ in the $\texttt{EWC}$ setting rather than the data collected only in the exploration phase in the $\texttt{ETC}$ setting. From Table \ref{tab:comparison}, we can also observe a similar improvement in the agent regret bound. The regret improvement also verifies the importance of using the explore-while-commit (\texttt{EWC}) strategy in the learning algorithm.

Furthermore, we remark that our regret guarantees rely on the assumption that agents report truthfully. Nevertheless, recalling our earlier discussion on our definition of $\delta$-approximate efficiency, we note that it is in general difficult to obtain regret bounds without assuming truthfulness, and thus obtaining performance guarantees under the truthfulness assumption is reasonable according to existing works~\citep{nazerzadeh2008dynamic,epasto2018incentive,2020Mechanism}.

Both Theorem \ref{theorem:ETC} and Theorem \ref{theorem:OPT} implies $\max\{n\textrm{Reg}^W_T, \textrm{Reg}_T^\sharp, \textrm{Reg}_{0T}\} = \cO \big(n(n+R_{\max})d H^{7/3}\iota^{1/3}T^{2/3}\big)$. We remark that the $\tilde{\cO}(T^{2/3})$ regret is necessary. If we were to focus only on welfare regret, then it is well-known that the lower bound would be $\Omega(\sqrt{T})$. However, the key challenge of learning the proposed Markov VCG mechanism lies in the interplay between the three kinds of regrets studied. Consider the extreme case where we set $K = 0$ in Theorem~\ref{theorem:OPT}. According to our proof sketch in Section \ref{sec:proof42}, while the welfare regret upper bound in Equation \eqref{eq2:w regret} improves to $\Tilde{\cO}(\sqrt{T})$, we can no longer control the agent nor the seller regrets in Equations \eqref{eq2:a regret} and \eqref{eq2:s regret}.

At last, we justify that the $\tilde{\cO}(T^{2/3})$ bound is tight by providing the lower bound of $\max\big\{ n \mathrm{Reg}_T^W,\mathrm{Reg}^{\sharp}_T,\mathrm{Reg}_{0T}\big\}$ when all agents are truthful.
Let $\Theta$ and $\mathsf{Alg}$ be the class of problems and the class of algorithms for this setting respectively, and we obtain the lower bound as follows:
\begin{theorem}
	\label{theorem:lowerbound}
	Let $\mathrm{Reg}_T^W,\mathrm{Reg}^{\sharp}_T,\mathrm{Reg}_{0T}$ be as defined in~\eqref{equa:regret}.
	Let all agents be truthful. Defining $n_R:=n+R_{\max}$,
	we have: 
	\begin{align*}
		&\inf_{\mathsf{Alg}} \sup_\Theta \;\EE\left[\max\big\{ n \mathrm{Reg}_T^W,\mathrm{Reg}^{\sharp}_T,\mathrm{Reg}_{0T}\big\}\right] \geq\Omega\left(n^{4/3} H^{2/3} T^{2/3} +n n_Rd\sqrt{HT}\right),
	\end{align*}
	for $T\geq \max\{16(n-1)/(H-1),64(d-3)^2H\},H\geq2,d\geq4$ and $n\geq3$.
\end{theorem}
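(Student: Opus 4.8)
The plan is to establish the two terms in the lower bound separately, combining them at the end via the fact that $\max(a,b) \geq \tfrac12(a+b)$ and $\Omega(a) + \Omega(b) = \Omega(a+b)$. The first term, $n^{4/3} H^{2/3} T^{2/3}$, captures the cost intrinsic to estimating VCG prices, i.e. the externality of a fictitious (never-executed) agent; the second term, $n^2 d\sqrt{HT}$, captures the standard RL learning cost for linear MDPs. Each of these I would prove by constructing a hard instance family $\{\cM_\theta\}_{\theta \in \Theta}$, reducing the regret lower bound to a hypothesis-testing / estimation problem, and invoking an information-theoretic argument (Le Cam's two-point method or Fano/Assouad, plus the KL/chi-square divergence bound between trajectory distributions induced by different instances over $T$ rounds).

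For the $T^{2/3}$ term, I would isolate the welfare regret against the agent/seller regrets. The intuition from the algorithm is: to estimate the price $p_{i*} = V_1^{\pi_*^{-i}}(x_1; R^{-i}) - V_1^{\pi_*}(x_1; R^{-i})$, the mechanism must learn the value of the fictitious optimal policy $\pi_*^{-i}$, which it never plays; the only way to gather information about $\pi_*^{-i}$ is to deviate from the (reward-$R$-optimal) executed policy, paying welfare regret, or else to accept mis-estimated prices, paying seller/agent regret. I would make this concrete with a small two-armed "revealing-vs-exploiting" bandit-like MDP embedded in the linear-MDP structure: one action reveals information relevant to $\pi_*^{-i}$ but is welfare-suboptimal by a gap $\Delta$, the other is welfare-optimal but uninformative. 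If the learner plays the revealing action $m$ times out of $T$, it pays $\Omega(m\Delta)$ welfare regret but can only estimate the relevant quantity to accuracy $\sim 1/\sqrt{m}$, forcing $\Omega(T/\sqrt{m})$ price (hence seller/agent) regret; optimizing over $m$ and $\Delta$, and scaling the construction across the $n$ agents and horizon $H$ (the "$nH$" appearing because the reward scale for $R$ is $n+R_{\max}$ and each of the $H$ steps can carry a copy of the gadget), yields $\Omega(n^{4/3} H^{2/3} T^{2/3})$ on $\max(n\,\mathrm{Reg}_T^W, \mathrm{Reg}^{\sharp}_T, \mathrm{Reg}_{0T})$ — the factor $n$ in front of $\mathrm{Reg}_T^W$ is exactly what makes the welfare and price terms balance at the same exponent.

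For the $\sqrt{HT}$ term, I would adapt a standard linear-MDP regret lower bound (in the style of the hard instances used in \citet{2019Provably} or the lower bounds for reward-free RL in \citet{2020On}), scaled by the extra factors: $d$ from the dimension of the feature space over which a hard subset of dimension $\Theta(d)$ of near-indistinguishable transition kernels is embedded (Assouad over $\approx d$ coordinates), $\sqrt{H}$ from the horizon, and the remaining $n^2$ from the reward magnitudes (one factor of $n$ because $R$ has scale $\Theta(n)$, another because the $\sharp$-regret sums $n$ agent regrets, or alternatively because the externality construction must be replicated per agent). The hard instances here are MDPs where distinguishing the optimal policy requires $\Omega(d^2 H T)$ samples at the relevant accuracy; a union/averaging argument then transfers this to the welfare regret, and via the VCG price structure to the seller regret.

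The main obstacle I anticipate is the careful bookkeeping of which regret notion each hard instance lower-bounds, and ensuring a single instance family simultaneously forces the claimed bound on $\max(n\,\mathrm{Reg}_T^W, \mathrm{Reg}^{\sharp}_T, \mathrm{Reg}_{0T})$ rather than on different quantities for different instances. In particular, the delicate point is the trade-off argument: one must show that no algorithm can simultaneously make all three of $n\,\mathrm{Reg}_T^W$, $\mathrm{Reg}^{\sharp}_T$, $\mathrm{Reg}_{0T}$ small, because small welfare regret forces the executed policy close to $\pi_*$, which starves the learner of information about $\pi_*^{-i}$, which in turn (given the VCG price formula and the definitions $u_{0*}, u_{i*}$ in \eqref{equa:regret}) forces either $\mathrm{Reg}_{0T}$ or $\mathrm{Reg}^{\sharp}_T$ to be large — and this implication must be made quantitative through a chain of inequalities relating price-estimation error to utility regret. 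The secondary technical nuisance is verifying that all constructed instances genuinely satisfy the linear-MDP assumptions with $\|\phi\| \le 1$, $\|\boldsymbol\mu_h(\cS)\|, \|\boldsymbol\theta_{ih}\| \le \sqrt d$, and the stated parameter ranges $T \ge \max\{16(n-1)/(H-1), 64(d-3)^2 H\}$, $H \ge 2$, $d \ge 4$, $n \ge 3$ (these ranges are presumably what make the $\Theta(d)$- and $\Theta(n)$-sized packing sets and the $\Delta$-perturbations simultaneously feasible).
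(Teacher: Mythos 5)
Your overall plan coincides with the paper's: the lower bound is proved as two separate pieces, the $n^{4/3}H^{2/3}T^{2/3}$ term via a two-instance ``revealing-vs-exploiting'' construction in which information about the fictitious policy $\pi_*^{-i}$ can only be gathered by playing welfare-suboptimal actions, and the $n^2d\sqrt{HT}$ term via the reduction $\max(n\,\mathrm{Reg}_T^W,\mathrm{Reg}_T^{\sharp},\mathrm{Reg}_{0T})\ge n\,\mathrm{Reg}_T^W$ followed by the known $\Omega(d\sqrt{HT})$ linear-MDP lower bound of \citet{zhou2020nonstationary} on an instance with all $n+1$ rewards identical (whence the extra $n(n+1)$). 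Your balancing heuristic ($m$ revealing pulls cost $\Omega(mnH)$ welfare regret, yield accuracy $\delta\sim 1/\sqrt{mnH}$, and an unresolved $\delta$ costs $\Omega(Tn^2H\delta)$ price regret) reproduces the paper's choice $\delta=\big(T(n-1)^2(H-1)\big)^{-1/3}$ and the resulting rate, with the formal indistinguishability step carried out via the Bretagnolle--Huber inequality (Lemma \ref{lemma:BH-inequality}) on a star-shaped MDP with $n+2$ absorbing states.

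The one step you flag as ``the delicate point'' but do not resolve is precisely where the paper's proof has content beyond the construction: how to convert ``either the executed policy is far from $\pi_*$ or the prices are mis-estimated'' into a lower bound on the \emph{single} quantity $\max(n\,\mathrm{Reg}_T^W,\mathrm{Reg}_T^{\sharp},\mathrm{Reg}_{0T})$ that holds on the same instance. The paper does this not by a chain of inequalities but by two exact identities (Lemma \ref{lemma:regret_lower_bound}): writing $Z_T$ for the aggregate error in estimating $\sum_{i=1}^n V_1^{\pi_*^{-i}}(x_1;R^{-i})$ through the charged prices, one has $\mathrm{Reg}_T^{\sharp}=n\,\mathrm{Reg}_T^W+TZ_T$ and $\mathrm{Reg}_{0T}=-(n-1)\mathrm{Reg}_T^W-TZ_T$ exactly, so that two convex combinations give $\max(n\,\mathrm{Reg}_T^W,\mathrm{Reg}_T^{\sharp},\mathrm{Reg}_{0T})\ge \tfrac{2}{5}n\,\mathrm{Reg}_T^W+\tfrac{1}{5}T|Z_T|$, and the two-point argument is then applied to the right-hand side (the event $F$ in the proof is a threshold on the price estimator $Y_T$, so $|Z_T|\gtrsim n(n-1)(H-1)\delta$ whenever the learner guesses the wrong world). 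Without these identities, or some equivalent quantitative link between price-estimation error and the seller/agent regrets as defined in \eqref{equa:regret}, your argument lower-bounds welfare regret and ``price error'' separately but does not yet lower-bound the stated max; this is the concrete piece you would need to supply, and it is the same algebra already used in \citet{2020Mechanism} for the bandit case.
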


At a high level, Theorem~\ref{theorem:lowerbound} indicates that the $\tilde{\cO}\big(T^{2/3}\big)$ upper bound of $\max\big\{ n \mathrm{Reg}_T^W,\allowbreak \mathrm{Reg}^{\sharp}_T,\mathrm{Reg}_{0T}\big\}$ obtained by the three regrets in Theorem \ref{theorem:ETC} and Theorem \ref{theorem:OPT} are tight. In other words, unlike typical single-agent RL, it is impossible to obtain $\Tilde{\cO}(\sqrt{T})$ regret when learning the Markov VCG mechanism. 
The intuition behind the hard case used for the lower bound is that we need to accurately learn the VCG prices to achieve a low regret. Setting the VCG prices too high harms the agents' utilities, whereas setting them too low harms the seller's.  Learning the VCG prices requires learning the welfare-maximizing policy when agent $i$ is absent, $\pi^{-i}_*$. Combined with our need to estimate the welfare-maximizing policy, any suitable learning algorithm needs to reduce the estimation error of the value functions for all policies. Our proposed algorithm resolves this challenge by reward-free exploration, and the procedure is crucial for efficiently learning the Markov VCG mechanism. There is still a gap between the upper and lower bounds in terms of the multiplicative factors $n$, $d$, and $H$, and we leave the derivation of exactly matching upper and lower bounds as an open question for future work.

Our work features several prominent contributions to the existing literature in mechanism design learning and online learning of linear MDPs. As shown in Theorem~\ref{theorem:ETC} and Theorem~\ref{theorem:OPT}, our work proposes the first algorithm capable of learning a dynamic mechanism with no prior knowledge. In particular, we further show that the mechanism learned by Algorithm~\ref{algorithm:LMVL} simultaneously satisfies approximate efficiency, approximate individual rationality, and approximate truthfulness. As we will demonstrate in the sequel, the satisfaction of the approximate versions of the three mechanism design desiderata is demonstrated through novel decomposition approaches. 
Moreover, Theorem~\ref{theorem:lowerbound} demonstrates that our achieved results are minimax optimal up to problem-dependent constants.

\section{Proof Sketch} \label{sec:proof-sk}
\label{sec:theoretical_analysis}
In this section, we outline the analysis of our theorems. The formal proof is deferred to Appendix  \ref{sec:proof-start} - \ref{sec:proof-end}. 
For a concise presentation, in the proof, we let $V_1^*(x_1;r):=\max_\pi V_1^\pi(x_1;r)$ for any reward function $r$. 
We further provide a table of notation in Appendix \ref{sec:tab_notation} summarizing all notations used here.

\subsection{Proof Sketch of Theorem \ref{theorem:ETC}}
\label{sec:5.1}
We assume that all agents report their rewards truthfully in the proof of the upper bounds of the welfare regret, the agent regret, and the seller regret. Since we use the explore-then-commit algorithm when $\zeta_{1}=\texttt{ETC}$, we decompose all the regrets into two components: the regret incurred in the exploration phase and the regret incurred in the exploitation phase. Additionally, for each of these regrets, we first show its dependence on both the rounds of exploration $K$ and the total rounds $T$. Then we determine $K$ that can lead to a tight upper bound of $\max\{ n \mathrm{Reg}_T^W,\mathrm{Reg}^{\sharp}_T,\mathrm{Reg}_{0T}\}$ in terms of $n,d,H,\iota,$ and $T$.

\paragraph{Welfare Regret.} 
We first decompose the welfare regret into two parts as follows: 
\begin{equation}\label{equa:welfare_regret_explore_exploit}
	\mathrm{Reg}_{T}^{W}=\textstyle\sum_{t=1}^{K}\mathrm{reg}_{t}^{W}+\textstyle\sum_{t=K+1}^{T}\mathrm{reg}_{t}^{W},
\end{equation}
where $\mathrm{reg}_{t}^{W}:=V_1^{\pi_*}(x_{1};R)-V_{1}^{\widehat{\pi}_{t}}(x_{1};R)$ is the instantaneous welfare regret. Here $\sum_{t=1}^{K}\mathrm{reg}_{t}^{W}$ is the welfare regret in the exploration phase and $\sum_{t=K+1}^{T}\mathrm{reg}_{t}^{W}$ is for the exploitation phase. For the regret incurred in the exploration phase in Equation \eqref{equa:welfare_regret_explore_exploit}, we bound the instantaneous regret $\mathrm{reg}_{t}^{W}$ at each time step by $H(n+R_{\max})$, which is the maximum of the instantaneous regret at each round. 
For the exploitation welfare regret in Equation \eqref{equa:welfare_regret_explore_exploit}, we can bound its instantaneous welfare regret $\mathrm{reg}_{t}^{W}$ by $2\hat{c}(n+R_{\max})\sqrt{d^{3}H^{6}\iota/K}$ with high probability, whose proof is inspired by the regret proof for learning linear MDPs, as the prices cancel out when calculating social welfare. Therefore, with high probability, the following welfare regret bound holds
\begin{equation} \label{eq:wreg-k-ps}
	\mathrm{Reg}_{T}^{W}\leq H(n+R_{\max})K+2\hat{c}(n+R_{\max})\sqrt{d^{3}H^{6}\iota/K}(T-K),
\end{equation} 
where the rounds of the exploration phase $K$ will be determined later.

\paragraph{Agent Regret.} 
We have the following regret decomposition in terms of the exploration phase and exploitation phase as follows,
\begin{equation}\label{equa:agent_regret_explore_exploit}
	\mathrm{Reg}_{i T}=\textstyle\sum_{t=1}^{K}\mathrm{reg}_{i t}+\textstyle\sum_{t=K+1}^{T}\mathrm{reg}_{i t},
\end{equation}
where $\mathrm{reg}_{i t}:=u_{i *}-u_{i t}$ is the instantaneous regret of agent $i$.
As shown in Algorithm \ref{algorithm:LMVL}, we do not charge the agents in the exploration phase. Thus, the instantaneous regret of agent $i$ in the exploration phase can be upper bounded as 
\begin{equation*}
	\textrm{reg}_{it} \leq u_{i *} - \min_{\pi}V_1^{\pi}(x_1; r_i) \leq u_{i *} = V_1^{\pi_*}(x_1; r_i) - p_{i *}\leq V_1^{\pi_*}(x_1; r_i)\leq H, \quad 1\leq t \leq K.
\end{equation*}
For the terms in the second summation in Equation \eqref{equa:agent_regret_explore_exploit}, i.e., the instantaneous regret of agent $i$ incurred in the exploitation phase, we first decompose it to several simple terms as follows, 
\begin{equation}\label{equation:decomp_agent_regret_EWC}
	\mathrm{reg}_{i t}=\underbrace{\big[V_{1}^{\pi_*}\big(x_{1};R\big)-V_{1}^{\widehat{\pi}^{t}}\big(x_{1};R\big)\big]}_{\rm(i.1)}+\underbrace{\big[F_{t}^{-i}-V_{1}^{\pi_*^{-i}}\big(x_{1};R^{-i}\big)\big]}_{\rm(i.2)}+\underbrace{\big[V_{1}^{\widehat{\pi}^{t}}\big(x_{1};R^{-i}\big)-G_{t}^{-i}\big]}_{\rm(i.3)},
\end{equation}
where $\rm(i.1)$ is the suboptimality of $\widehat{\pi}^t$, $\rm(i.2)$ is the estimation error of $V_{1}^{\pi_*^{-i}}\big(x_{1};R^{-i}\big)$ by $F_t^{-i}$, and $\rm(i.3)$ is the policy evaluation error.
To satisfy the desiderata of the mechanism design in Lemma \ref{lemma:Markov VCG mechanism}, we set $F$-function as 
the optimistic (when $\zeta_{2}=\texttt{OPT}$) or pessimistic (when $\zeta_{2}=\texttt{PES}$) estimate of $V_{1}^{\pi_*^{-i}}\big(x_{1};R^{-i}\big)$, while the $G$-function 
is the estimate of $V_{1}^{\widehat{\pi}^{t}}\big(x_{1};R^{-i}\big)$ w.r.t. the learned policy $\widehat{\pi}^{t}$. 
The different structures of $F$-function and $G$-function lead to different ways of bounding $\rm(i.2)$ and $\rm(i.3)$.
When we set $(\zeta_{2},\zeta_{3})=(\texttt{PES},\texttt{OPT})$, 
we have that $\rm(i.2)\leq 0$ and $\rm(i.3)\leq 0$ since $F^{-i}_{t}$ and $G^{-i}_{t}$ are the pessimistic and optimistic estimates respectively. Then, we can bound the instantaneous regret of agent $i$ in the exploitation phase as follows
\begin{equation*}
\mathrm{reg}_{i t}\leq V_{1}^{\pi_*}\big(x_{1};R\big)-V_{1}^{\widehat{\pi}^{t}}\big(x_{1};R\big)\leq 2\hat{c}(n+R_{\max})\sqrt{d^{3}H^{6}\iota/K}, \quad 1 \leq t\leq K.
\end{equation*} 
When we set $(\zeta_{2},\zeta_{3})=(\texttt{OPT},\texttt{PES})$,  we can bound $\rm(i.2)$ and $\rm(i.3)$ by $2\hat{c}(n+R_{\max})\sqrt{d^{3}H^{6}\iota/K}$ respectively with high probability.  
Thus, we bound the instantaneous regret of agent $i$ in the exploitation phase as 
\begin{equation*}
\mathrm{reg}_{i t}\leq 6\hat{c}(n+R_{\max})\sqrt{d^{3}H^{6}\iota/K}, \quad K < t\leq T.
\end{equation*} 
Combining the regrets incurred in both phases, we obtain with high probability,
\begin{equation}
\mathrm{Reg}_{i T}\leq H K+ 6\hat{c}(n+R_{\max})\sqrt{d^{3}H^{6}\iota/K}(T-K). \label{eq:areg-k-ps}
\end{equation}


\paragraph{Seller Regret.} 
We can decompose the seller regret into two parts as follows
\begin{equation}\label{equa:seller_regret_explore_exploit}
\mathrm{Reg}_{0 T}=\textstyle\sum_{t=1}^{K}\mathrm{reg}_{0 t}+\textstyle\sum_{t=K+1}^{T}\mathrm{reg}_{0 t},
\end{equation}
where $\mathrm{reg}_{0 t}:=u_{0 *}-u_{0 t}$ is the instantaneous regret of the seller. Since the seller charges a price of $0$ to all agents, the instantaneous seller regret in the exploration phase can be bounded as
\begin{equation*}
\mathrm{reg}_{0 t}\leq u_{0 *}-\min_{\pi}V^{\pi}(x_{1};r_{0})\leq u_{0 *}\leq H(n+R_{\max}), \quad 1 \leq t\leq K.
\end{equation*}
For the instantaneous seller regret in the exploitation phase ($K < t\leq T$), we have the following decomposition

\small
\begin{equation*}
\begin{aligned}
	\mathrm{reg}_{0 t}&=(n-1)\underbrace{\big[V_{1}^{\widehat{\pi}^{t}}\big(x_{1};R\big)-V_{1}^{*}\big(x_{1};R\big)\big]}_{\rm(ii.1)}+\sum_{i=1}^{n}\underbrace{\big[V_{1}^{*}\big(x_{1};R^{-i}\big)-F_{t}^{-i}\big]}_{\rm(ii.2)}+\sum_{i=1}^{n}\underbrace{\big[G_{t}^{-i}-V^{\widehat{\pi}^{t}}\big(x_{1};R^{-i}\big)\big]}_{\rm(ii.3)}.
\end{aligned}
\end{equation*}
\normalsize
Here we have $\rm(ii.1)=-\rm(i.3)$, $\rm(ii.2)=-\rm(i.1)$, and $\rm(ii.3)=-\rm(i.2)$  with $\rm(i.1),\rm(i.2),\rm(i.3)$ defined in Equation \eqref{equation:decomp_agent_regret_EWC}. Notice that $\rm(ii.1)\leq 0$ always holds regardless of the choice of $(\zeta_{2},\zeta_{3})$. We can upper bound $\rm(ii.2)$ and $\rm(ii.3)$ using the same method as bounding $\rm(i.1)$ and $\rm(i.2)$. Thus, with high probability, $\mathrm{reg}_{0 t}$ in the exploitation phase ($K < t\leq T$) is upper bounded as 
\begin{equation*}
\mathrm{reg}_{0 t}\leq\begin{cases}
	4\hat{c} n(n+R_{\max})\sqrt{d^{3}H^{6}\iota/K}& \text{if $(\zeta_{2},\zeta_{3})=(\texttt{PES},\texttt{OPT})$}\\
	0& \text{if $(\zeta_{2},\zeta_{3})=(\texttt{OPT},\texttt{PES})$}.
\end{cases}
\end{equation*} 
Combining the above results, the seller regret $\mathrm{Reg}_{0 T}$ is bounded by
\begin{equation}\label{eq:sreg-k-ps}
\begin{cases}
	H(n+R_{\max})K + 4\hat{c} n(n+R_{\max})\sqrt{d^{3}H^{6}\iota/K}(T-K)& \text{if $(\zeta_{2},\zeta_{3})=(\texttt{PES},\texttt{OPT})$}\\
	H(n+R_{\max})K& \text{if $(\zeta_{2},\zeta_{3})=(\texttt{OPT},\texttt{PES})$}.
\end{cases}
\end{equation}

\paragraph{Choice of $K$.} We determine the value of $K$ which can give a tight bound of $\max\{ n\mathrm{Reg}_T^W, \allowbreak \mathrm{Reg}^{\sharp}_T,\mathrm{Reg}_{0T}\}$ where $\mathrm{Reg}^{\sharp}_T = \sum_{i=1}^n \mathrm{Reg}_{i T}$. According to \eqref{eq:wreg-k-ps}, \eqref{eq:areg-k-ps}, and \eqref{eq:sreg-k-ps}, comparing the upper bounds of $n\mathrm{Reg}_T^W$, $\mathrm{Reg}^{\sharp}_T$, and $\mathrm{Reg}_{0T}$, we always have 
\begin{align*}
\max\{ n\mathrm{Reg}_T^W,\mathrm{Reg}^{\sharp}_T,\mathrm{Reg}_{0T}\} \leq H(n+R_{\max})nK+6\hat{c}(n+R_{\max})n\sqrt{d^{3}H^{6}\iota/K}(T-K).
\end{align*}
Focusing on the factors of $H$, $n$, $d$, $T$, and $\iota$, we set $K = dH^{4/3}\iota^{1/3}T^{2/3}$, which can minimize the order of these factors in the above inequality, and obtain the bounds in Theorem \ref{theorem:ETC}.


Next, we provide the proof sketches for the approximate individual rationality and truthfulness. Note that in the following analysis, we do not assume the agents are reporting truthfully. We denote the potentially untruthful reward function of agent $i$ at step $h$ by $\tilde{r}_{i h}$ and then $\tilde{r}_i = \{\tilde{r}_{i h}\}_{h=1}^H$. We further let $\tilde{R}^{-i}:=r_0 + \sum_{j=1,j\neq i}^{n}\tilde{r}_{j}$. 


\paragraph{Individual Rationality.} 
To prove the individual rationality, we assume that agent $i$ reports truthfully according to the reward function $r_i$ and other agents may report untruthfully according to the reward function $\tilde{r}_j$ for $j \neq i$. Under this reward setting, let $\tilde{\pi}_t^{\dag i}$ be the learned seller's policy substituting $\widehat{\pi}^t$ in Algorithm \ref{algorithm:LMVL}, which is generated by Algorithm \ref{algorithm:L3} in the current reward setting. We further denote the associated $F$ and $G$ functions as $F^{\dag,-i}_t$ and $G^{\dag,-i}_t$ generated by Algorithms \ref{algorithm:L3} and \ref{algorithm:L4} respectively. Note that we do not charge the agents in the exploration phase ($t\leq K$), and hence the utilities in this phase are always non-negative. Thus, we only need to consider the utilities in the exploitation phase ($t > K$). 
Then, according to the definition of $u_{it}$, under the current setting of the reward, the instantaneous utility $u_{it}$ of agent $i$ can be decomposed as 
\begin{equation}\label{eq2:ir 1}
u_{it}=V_{1}^{\tilde{\pi}_t^{\dag i}}(x_{1};r_{i})-p_{it}^\dag=\underbrace{\Big[V_{1}^{\tilde{\pi}_t^{\dag i}}\big(x_{1};r_i +\tilde{R}^{-i}\big)-F_{t}^{\dag, -i}\Big]}_{\rm(iii.1)}+\underbrace{\Big[G_{t}^{\dag, -i}-V_{1}^{\tilde{\pi}_t^{\dag i}}\big(x_{1};\tilde{R}^{-i}\big)\Big]}_{\rm(iii.2)},
\end{equation}
where $p_{it}^\dag = F_{t}^{\dag, -i}-G_{t}^{\dag, -i}$. 
To prove the individual rationality, we bound (iii.1) and (iii.2) from below. 
Here we denote the optimistic version of $F_{t}^{\dag, -i}$, when $\zeta_2 = \texttt{OPT}$, by $\widehat{V}_{1}^{t,\dag}\big(x_{1};\tilde{R}^{-i}\big)$ according to Algorithm \ref{algorithm:L3}, which implies $F_{t}^{\dag, -i} \leq \widehat{V}_{1}^{t,\dag}\big(x_{1};\tilde{R}^{-i}\big)$. Then, we have ${\rm(iii.1)} \geq V_{1}^{\tilde{\pi}_t^{\dag i}}\big(x_{1};r_{i }+\tilde{R}^{-i}\big)-\widehat{V}_{1}^{t,\dag}\big(x_{1};\tilde{R}^{-i}\big)$. This can be further decomposed as
\begin{equation*}
\begin{aligned}
	&V_{1}^{\tilde{\pi}_t^{\dag i}}\big(x_{1};r_{i }+\tilde{R}^{-i}\big)-\widehat{V}_{1}^{t,\dag}\big(x_{1};\tilde{R}^{-i}\big)=\underbrace{\Big[V_{1}^{*}(x_1;r_{i }+\tilde{R}^{-i})-V_{1}^{*}(x_1;\tilde{R}^{-i})\Big]}_{\text{(iii.1a)}}\\
	&+\underbrace{\Big[V_{1}^{\tilde{\pi}_t^{\dag i}}\big(x_{1};r_{i }+\tilde{R}^{-i}\big)-V_{1}^{*}(x_1;r_{i }+\tilde{R}^{-i})\Big]}_{\text{(iii.1b)}}+\underbrace{\Big[V_{1}^{*}(x_1;\tilde{R}^{-i})-\widehat{V}_{1}^{t,\dag}\big(x_{1};\tilde{R}^{-i}\big)\Big]}_{\text{(iii.1c)}}.
\end{aligned}
\end{equation*}
Note that $\text{(iii.1a)}\geq0$ always holds since both terms in $\text{(iii.1a)}$ are optimal value functions but $V_{1}^{*}(x_1;r_{i }+\tilde{R}^{-i})$ has larger reward function. Here $\text{(iii.1b)}$ is the suboptimality of policy $\tilde{\pi}_t^{\dag i}$ and $\text{(iii.1c)}$ is the estimation error of $V_{1}^{*}(x_1;\tilde{R}^{-i})$ by $\widehat{V}_{1}^{t,\dag}\big(x_{1};\tilde{R}^{-i}\big)$. We lower bound $\text{(iii.1b)}$ and $\text{(iii.1c)}$ by $-2\hat{c} (n+R_{\max})\sqrt{d^3H^6\iota / K}$ respectively with high probability. Then $\rm(iii.2)$ can be lower bounded by $-4\hat{c} (n+R_{\max})\sqrt{d^3H^6\iota / K}$. For $\rm(iii.2)$, the policy evaluation error for policy $\tilde{\pi}_t^{\dag i}$, we can lower bound it by $-2\hat{c} (n+R_{\max})\sqrt{d^3H^6\iota / K}$ invoking Lemma \ref{lemma:basic_lemma}. Recall that we set $K=dH^{4/3}\iota^{1/3}T^{2/3}$.
Then we lower bound the summation of $\rm(iii.1)$ and $\rm(iii.2)$ over $T$ episodes by $-4\hat{c}(n+R_{\max})d H^{7/3}\iota^{1/3} T^{2/3}$ and $-2\hat{c}(n+R_{\max})d H^{7/3}\iota^{1/3} T^{2/3}$ respectively. 
Combining these two parts, with high probability, we have 
\begin{align*}
U_{i T}\leq-6\hat{c}(n+R_{\max})d H^{7/3}\iota^{1/3}T^{2/3},     
\end{align*}
which indicates that the learned mechanism is $6\hat{c}(n+R_{\max})d H^{7/3}\iota^{1/3}T^{2/3}$-approximately individually rational. 

\paragraph{Truthfulness.}  
We consider two cases: (1) agent $i$ reports truthfully and others may report untruthfully (2) all agents may report untruthfully. Then we denote by $r_i$ the truthful reward and $\tilde{r}_i$ the potentially untruthful reward for all $i\in [n]$. For case (1), we adopt the same definitions of $F^{\dagger, -i}_t, G^{\dagger, -i}_t$, $\tilde{\pi}_t^{\dag i}$, and $u_{it}=V_{1}^{\tilde{\pi}_t^{\dag i}}(x_{1};r_{i})-p_{it}^\dag$ as in the above proof of individual rationality. For case (2), under the untruthful reporting of $\{\tilde{r}_i\}_{i\in[n]}$, we let $\tilde{\pi}_t^{\ddag}$ be the learned policy for the seller under the reward $\tilde{R} := r_0 + \sum_{i=1}^n \tilde{r}_i$ in Algorithm \ref{algorithm:LMVL}, $F^{\ddagger, -i}_t$ and $G^{\ddagger, -i}_t$ be the associated $F$ and $G$ functions generated by Algorithms \ref{algorithm:L3} and \ref{algorithm:L4} respectively, and $\tilde{u}_{it}=V_{1}^{\tilde{\pi}_t^{\ddag}}(x_{1};r_{i})-p_{it}^\ddag$ with $p_{it}^\ddag = F^{\ddagger, -i}_t-G^{\ddagger, -i}_t$. 
We then have the following decomposition
\begin{equation}\label{equa:truthful_decomp_0}
\tilde{U}_{i T}-U_{i T}=\textstyle\sum_{t=1}^{K}(\tilde{u}_{i t}-u_{i t})+\textstyle\sum_{t=K+1}^{T}(\tilde{u}_{i t}-u_{i t}).
\end{equation}
For the first summation, since the agents are not charged, we have
\begin{equation*}
\textstyle\sum_{t=1}^{K}(\tilde{u}_{i t}-u_{i t})\leq \textstyle\sum_{t=1}^{K}\tilde{u}_{i t}\leq\textstyle\sum_{t=1}^{K}\max_{\pi}V^{\pi}(x_{1};r_{i}) \leq H K.
\end{equation*}
We now turn to decomposing the second summation in Equation \eqref{equa:truthful_decomp_0}. 
We have for $t>K$,
\begin{equation*}
\tilde{u}_{i t}-u_{i t} =\big[V_{1}^{\tilde{\pi}_t^{\ddag}}(x_{1};r_{i})-F_{t}^{\ddagger,-i}+G_{t}^{\ddagger,-i}\big]-\big[ V_{1}^{\tilde{\pi}_t^{\dag i}}(x_{1};r_{i})-F_{t}^{\dagger,-i}+G_{t}^{\dagger,-i}\big].
\end{equation*}
Notice that when $\zeta_{1}=\texttt{ETC}$, we only use the data collected in the exploration phase to calculate the $F$ function. Thus, we have $F^{\dagger, -i}_t=F^{\ddagger, -i}_t$. Then, we can show that $\tilde{u}_{it} - u_{it}$ can be decomposed as 
\begin{equation*}
\begin{aligned}
&\tilde{u}_{it} - u_{it}=\underbrace{\big[{G}_{t}^{\ddagger,-i}-V_{1}^{\tilde{\pi}_t^{\ddag}}\big(x_{1};\tilde{R}^{-i}\big)\big]}_{\text{(iv.1)}}+\underbrace{\big[V_{1}^{\tilde{\pi}_t^{\dag i}}\big(x_{1};\tilde{R}^{-i}\big)-G_{t}^{\dagger,-i}\big]}_{\text{(iv.2)}}\\
&+\underbrace{\big[V_{1}^{\tilde{\pi}_t^{\ddag}}\big(x_{1};r_{i}+\tilde{R}^{-i}\big)-V_{1}^{\tilde{\pi}_*^{i}}\big(x_{1};r_{i}+\tilde{R}^{-i}\big)\big]}_{\text{(iv.3)}}+\underbrace{\big[V_{1}^{\tilde{\pi}_*^{i}}\big(x_{1};r_{i}+\tilde{R}^{-i}\big)-V_{1}^{\tilde{\pi}_t^{\dag i}}\big(x_{1};r_{i}+\tilde{R}^{-i}\big)\big]}_{\text{(iv.4)}}.\\      
\end{aligned}
\end{equation*}
We remark that different from the bandit setting in \citet{2020Mechanism}, 
the estimates of value functions are not linear
w.r.t. the reward functions, i.e., $\widehat{V}_{1}^{t,\pi}(x_{1};R_{1})+\widehat{V}_{1}^{t,\pi}(x_{1};R_{2})\neq\widehat{V}_{1}^{t,\pi}(x_{1};R_{1}+R_{2})$ or $\check{V}_{1}^{\pi}(x_{1};R_{1})+\check{V}_{1}^{t,\pi}(x_{1};R_{2})\neq\check{V}_{1}^{t,\pi}(x_{1};R_{1}+R_{2})$ for any reward functions $R_{1}$ and $R_{2}$, due to the truncation of $Q$-functions in Algorithm \ref{algorithm:L3} and Algorithm \ref{algorithm:L4}. 
However, the true value function, i.e., ${V}_{1}^{\pi}(x_{1};R_{1})+{V}_{1}^{\pi}(x_{1};R_{2})={V}_{1}^{\pi}(x_{1};R_{1}+R_{2})$, is linear w.r.t. the reward function. This leads to a novel and more complex decomposition in the above equation. 
Note that $\text{(iv.3)}\leq0$ since $V_1^*(x_1;r_{i}+\tilde{R}^{-i})= \max_{\pi}V_{1}^{\pi}(x_1;r_i+\tilde{R}^{-i})$. And (iv.4) is the suboptimality of policy $\tilde{\pi}_t^{\dag i}$. Then, with high probability, the term (iv.4) is upper bounded by $2\hat{c}(n+R_{\max})\sqrt{d^3H^6\iota/K}$. Here (iv.1) and (iv.2) are evaluation errors depending on the setting of $\zeta_3$ under different reward settings. When $\zeta_3 = \texttt{OPT}$, we have (iv.1) $\leq 2\hat{c} (n+R_{\max})\sqrt{d^3H^6\iota/K}$ while (iv.2) $\leq 0$. And when $\zeta_3 = \texttt{PES}$, we have $\textrm{(iv.1)} \leq 0$ and (iv.2) $\leq 2\hat{c}(n+R_{\max}) \sqrt{d^3H^6\iota/K}$. Thus, regardless of the choices for $\zeta_2, \zeta_3$, we always have 
\begin{equation*}
\tilde{u}_{it} - u_{it}\leq  4\hat{c}(n+R_{\max})\sqrt{d^3H^6\iota/K}, \qquad t > K.
\end{equation*}
Summing up the regret incurred in both the exploration and exploitation phases as in \eqref{equa:truthful_decomp_0}, and setting $K=dH^{4/3}\iota^{1/3}T^{2/3}$, with high probability, we have 
\begin{equation*}
\tilde{U}_{i T}-U_{i T}\leq \big(1+ 4\hat{c}(n+R_{\max})\big)d H^{7/3}\iota^{1/3}T^{2/3},
\end{equation*}
which implies that the mechanism learned by our algorithm is $\big(1+ 4\hat{c}(n+R_{\max})\big)d H^{7/3}\iota^{1/3}T^{2/3}$-approximately truthful.


\subsection{Proof Sketch of Theorem \ref{theorem:OPT}}\label{sec:proof42}
We assume that all agents report their rewards truthfully in the proof of the upper bounds of the welfare regret, the agent regret, and the seller regret. Although we use all the data generated in $T$ rounds to compute our mechanism when $\zeta_1=\texttt{EWC}$, we still need to perform reward-free exploration for individual rationality and truthfulness. Thus, we also decompose regrets into two components: the regret incurred in the exploration phase and the regret incurred in the exploitation phase. 
\paragraph{Welfare Regret.} 
We adopt the same decomposition as in Equation \eqref{equa:welfare_regret_explore_exploit} and decompose the welfare regret as $\mathrm{Reg}_{T}^{W}=\textstyle\sum_{t=1}^{K}\mathrm{reg}_{t}^{W}+\textstyle\sum_{t=K+1}^{T}\mathrm{reg}_{t}^{W}$.
The first summation $\sum_{t=1}^{K}\mathrm{reg}_{t}^{W}$, the welfare regret incurred in the exploration phase, can be bounded by $(n+R_{\max}) H K$ as in Section \ref{sec:5.1}. The key difference between the proofs of welfare regrets in Theorem \ref{theorem:OPT} and Theorem \ref{theorem:ETC} lies in the upper bound of $\sum_{t=K+1}^{T}\mathrm{reg}_{t}^{W}$, i.e., the regret incurred in the exploitation phase.  When $\zeta_1=\texttt{EWC}$, we use the information gathered up to round $t$ for planning in the exploitation phase, instead of just using the $K$ rounds' exploration data as we do when $\zeta_1=\texttt{ETC}$. Thus, we can bound the regret incurred in the exploitation phase by $6\hat{c}(n+R_{\max})\sqrt{d^3 H^4  (T-K)\iota^2}$ with high probability, whose proof takes inspiration from the regret proof for online linear MDPs with exploration, as the calculation of social welfare does not involve prices. 
Combining the regrets incurred in both phases,  with high probability, the following welfare regret bound holds
\begin{equation}\label{eq2:w regret}
\mathrm{Reg}_{T}^{W}\leq (n+R_{\max})HK+6\hat{c}(n+R_{\max})\sqrt{d^3 H^4  (T-K)\iota^2},
\end{equation}
where the rounds of the exploration phase $K$ will be determined later.
\paragraph{Agent Regret.} Following Equation \eqref{equa:seller_regret_explore_exploit}, we decompose the regret of agent $i$ in terms of the exploration phase and exploitation phase as $\mathrm{Reg}_{i T}=\sum_{t=1}^{K}\mathrm{reg}_{i t}+\sum_{t=K+1}^{T}\mathrm{reg}_{i t}$. For the first summation $\sum_{t=1}^{K}\mathrm{reg}_{i t}$, the agent $i$'s regret in the exploration phase, we can bound it by $H K$ as in Section \ref{sec:5.1}.
For the term $\sum_{t=K+1}^{T}\mathrm{reg}_{i t}$, recalling the decomposition in Equation \eqref{equation:decomp_agent_regret_EWC}, it can be decomposed as  
\begin{equation*}
\underbrace{\sum_{t=K+1}^{T}\big[V_{1}^{\pi_*}\big(x_{1};R\big)-V_{1}^{\widehat{\pi}^{t}}\big(x_{1};R\big)\big]}_{\rm(i.1)}+\underbrace{\sum_{t=K+1}^{T}\big[F_{t}^{-i}-V_{1}^{\pi_*^{-i}}\big(x_{1};R^{-i}\big)\big]+\big[V_{1}^{\widehat{\pi}^{t}}\big(x_{1};R^{-i}\big)-G_{t}^{-i}\big]}_{\rm(i.2)}.
\end{equation*}
For term $\rm(i.1)$, we can bound it by $6\hat{c}(n+R_{\max})\sqrt{d^3 H^4  (T-K)\iota^2}$ with high probability leveraging the information gathered up to round $t$ instead of $K$ in the exploitation phase, whose proof follows the proof for welfare regret when $\zeta_1=\texttt{EWC}$. For term $\rm(i.2)$, following the same proof in Section \ref{sec:5.1}, we get an upper bound $0$ when $(\zeta_{2},\zeta_{3})=(\texttt{PES},\texttt{OPT})$ and an upper bound $4\hat{c}(n+R_{\max})\sqrt{d^{3}H^{6}\iota/K}(T-K)$ when $(\zeta_{2},\zeta_{3})=(\texttt{OPT},\texttt{PES})$.
Combining the upper bounds of $\rm(i.1), \rm(i.2)$ for $\sum_{t=K+1}^{T}\mathrm{reg}_{i t}$ and the regret bound for the exploitation phase $\sum_{t=1}^{K}\mathrm{reg}_{i t}\leq H K$, with high probability, $\mathrm{Reg}_{i T}$ has the following upper bound,
\begin{small}
\begin{equation}
\begin{cases}HK+6\hat{c}(n+R_{\max})\sqrt{d^3 H^4  (T-K)\iota^2}
	& \text{if $(\zeta_{2},\zeta_{3})=(\texttt{PES},\texttt{OPT})$}\\
	HK+\hat{c}(n+R_{\max})\big(6\sqrt{d^3 H^4  (T-K)\iota^2}+4\sqrt{d^{3}H^{6}\iota/K}(T-K)\big)& \text{if $(\zeta_{2},\zeta_{3})=(\texttt{OPT},\texttt{PES})$}.
\end{cases}\label{eq2:a regret}
\end{equation}
\end{small}

\paragraph{Seller Regret.}
Since the trajectories we 
collected are according to the process where all the agents are engaged, we can not make a better estimation of the VCG prices even if we use the information gathered in the exploitation phase. Also, note that the seller regret comes from the estimation error of the VCG prices, we cannot improve the analysis of the seller regret. 
Thus, 
we reuse the proof in Section \ref{sec:5.1}, and can get the upper bound of seller regret $\mathrm{Reg}_{0 T}$ as
\begin{equation}\label{eq2:s regret}
\begin{cases}
H(n+R_{\max})K + 4\hat{c} n(n+R_{\max})\sqrt{d^{3}H^{6}\iota/K}(T-K)& \text{if $(\zeta_{2},\zeta_{3})=(\texttt{PES},\texttt{OPT})$}\\
H(n+R_{\max})K& \text{if $(\zeta_{2},\zeta_{3})=(\texttt{OPT},\texttt{PES})$}.
\end{cases}
\end{equation}

\paragraph{Choice of $K$.} We determine the value of $K$ which can give a tight bound of $\max\{ n\mathrm{Reg}_T^W, \allowbreak \mathrm{Reg}^{\sharp}_T,\mathrm{Reg}_{0T}\}$ where $\mathrm{Reg}^{\sharp}_T = \sum_{i=1}^n \mathrm{Reg}_{i T}$. According to \eqref{eq2:w regret}, \eqref{eq2:a regret}, and \eqref{eq2:s regret}, comparing the upper bounds of $n\mathrm{Reg}_T^W$, $\mathrm{Reg}^{\sharp}_T$, and $\mathrm{Reg}_{0T}$, we always have the upper bound of $\max\{ n\mathrm{Reg}_T^W,\mathrm{Reg}^{\sharp}_T,\mathrm{Reg}_{0T}\}$ as
\begin{align*}
n(n+R_{\max})\big(H K+ 6\hat{c}\sqrt{d^3 H^4  (T-K)\iota^2}+4\hat{c}\sqrt{d^{3}H^{6}\iota/K}(T-K)\big).
\end{align*}
Focusing on the factors of $H$, $n$, $d$, $T$, and $\iota$, we set $K = dH^{4/3}\iota^{1/3}T^{2/3}$, which can minimize the order of these factors in the above inequality, and obtain the bounds in Theorem \ref{theorem:OPT}.

\paragraph{Individual Rationality.} 
We assume that agent $i$ reports truthfully according to the reward function $r_i$ and other agents may report untruthfully according to the reward function $\tilde{r}_j$ for $j \neq i$. According to the above assumption, agent $i$ cannot manipulate the policy used during the exploitation phase, which implies that agent $i$ can not influence trajectories collected during the exploitation phase. Note that the only difference between the algorithm when $\zeta_1=\texttt{EWC}$ and $\zeta_1=\texttt{ETC}$ is the trajectories collected during exploitation are used for estimating policy and VCG prices. Thus, agent $i$ cannot affect policy and VCG price estimates obtained during exploration. Hence we can reuse the proof for individual rationality in Section \ref{sec:5.1} and get the conclusion that the mechanism we learned is $6\hat{c} (n+R_{\max})d H^{7/3}\iota^{1/3}T^{2/3}$-approximately individually rational.

\paragraph{Truthfulness.} The proof for truthfulness when $\zeta_1 = \texttt{EWC}$ significantly differs from the case when $\zeta_1 = \texttt{ETC}$. At a high level, when $\zeta_1 = \texttt{ETC}$, we use the fact that the data used to calculate $F$ is collected entirely during the exploration phase and is not affected by agent $i$ potentially reporting untruthfully, and hence $F_t^{\ddagger, -i}$ and $F_t^{\dagger, -i}$ cancel out. Unfortunately, 
when $\zeta_1 = \texttt{EWC}$, $F$ depends the untruthful behavior of agent $i$. The trajectories collected during exploitation affect $F$. The policy used for collecting these trajectories is affected by the agent $i$'s report. Because agent $i$'s untruthfulness impacts $F$, we need to bound the difference between $F_t^{\dagger, -i}$ and $F_t^{\ddagger, -i}$, which is different from the proof of truthfulness in Section~\ref{sec:5.1}.
Thus, we follow the decomposition in Equation~\eqref{equa:truthful_decomp_0}. For the first summation in Equation~\eqref{equa:truthful_decomp_0}, which corresponds to the exploration phase, we can upper bound it by $HK$. 
For the second summation that relates to the exploitation phase, regardless of other agents' truthfulness, the amount of utility an agent gains from untruthful reporting $\tilde{u}_{it} - u_{it}$ for $t>K$ can be decomposed as
\begin{align*}
&\tilde{u}_{it} - u_{it} = \underbrace{\big[V_{1}^{\tilde{\pi}_t^{\ddag}}\big(x_{1};r_{i}+\tilde{R}^{-i}\big)-V_{1}^*\big(x_{1};r_{i}+\tilde{R}^{-i}\big)\big]}_{\text{(i.1)}}+\underbrace{\big[V_{1}^*\big(x_{1};r_{i}+\tilde{R}^{-i}\big)-V_{1}^{\tilde{\pi}_t^{\dag i}}\big(x_{1};r_{i}+\tilde{R}^{-i}\big)\big]}_{\text{(i.2)}}\\
&\qquad \qquad \quad  +\underbrace{\big[{G}_{t}^{\ddagger,-i}-V_{1}^{\tilde{\pi}_t^{\ddag}}\big(x_{1};\tilde{R}^{-i}\big)\big]}_{\text{(i.3)}}+\underbrace{\big[V_{1}^{\tilde{\pi}_t^{\dag i}}\big(x_{1};\tilde{R}^{-i}\big)-G_{t}^{\dagger,-i}\big]}_{\text{(i.4)}} + \underbrace{\big[F_t^{\dagger, -i} - F_t^{\ddagger, -i}\big]}_{\textrm{(1.5)}}.
\end{align*}
Following Section~\ref{sec:5.1}, regardless of the choice of $\zeta_3$, with high probability, we have
\[
\textrm{(i.1)} + \textrm{(i.2)} + \textrm{(i.3)} + \textrm{(i.4)} \leq 4\hat{c} (n+R_{\max})\sqrt{d^3 H^6 \iota/K}.
\] 
We next focus on the upper bound of (i.5). When $\zeta_1 = \texttt{EWC}$, the trajectories collected during the exploitation phase may differ for the computations of $\widehat{V}_1^{t, \dag}(x_1; \tilde{R}^{-i})$ and 
$\check{V}_{1}^{t,\ddagger}(x_1; \tilde{R}^{-i})$, due to agent $i$'s untruthful reporting. Fortunately, the policy evaluation error can still be bounded. The reward-free exploration procedure in Algorithm~\ref{algorithm:L1} ensures that the data collected during exploitation cannot affect the estimated value functions too much. The estimation error surrounding estimated value functions is already small due to the exploration phase. As a result, adding more trajectories during exploitation cannot significantly alter our estimated values, thereby controlling the policy evaluation error. More formally, we have
\begin{align*}
\text{(i.5)}&\leq \underbrace{\Big(\widehat{V}_1^{t, \dag}(x_1; \tilde{R}^{-i}) - V_1^{*}(x_1; \tilde{R}^{-i})\Big)}_{\text{(ii.1)}} + \underbrace{\Big(V_1^{*}(x_1; \tilde{R}^{-i}) - \check{V}_{1}^{t,\ddagger}(x_1; \tilde{R}^{-i})\Big)}_{\text{(ii.2)}},
\end{align*}
where (ii.1) and (ii.2) can be upper bounded by $2\hat{c}\sqrt{d^3 H^6 \iota/K}$ with high probability respectively. In summary, we have that, with high probability, for all $t > K$,
\[
\tilde{u}_{it} - u_{it} \leq 8\hat{c} (n+R_{\max})\sqrt{d^3H^6\iota/K}.
\] 
Summing $\tilde{u}_{it} - u_{it}$ from $t = 1$ to $T$, recalling the bound for all $t \in [K]$, and setting $K=d H^{4/3}\iota^{1/3}T^{2/3}$, with high probability, we get
\begin{equation*}
\Tilde{U}_{iT}-U_{i T}
\leq (1 +8\hat{c}(n+R_{\max}))d H^{7/3}\iota^{1/3}T^{2/3},
\end{equation*}
which implies the mechanism we learned is $(1 +8\hat{c}(n+R_{\max}))d H^{7/3}\iota^{1/3}T^{2/3}$-approximately truthful.

\subsection{Proof Sketch of Theorem \ref{theorem:lowerbound}}

Although the previous work \citet{2020Mechanism} studies the lower bound for mechanism design in the bandit setting, we remark that deriving the lower bound for our problem is non-trivial which requires different constructions and proof techniques from that of this earlier work. Our lower bound takes into account the function approximation and the transition model within the finite horizon, which cannot be handled by \citet{2020Mechanism}. In addition, our work invalidates the Gaussian reward construction in  \citet{2020Mechanism} because of the bounded reward assumption in our work. We use a different construction with the Bernoulli reward and apply a different anti-concentration analysis.

Our lower bound is devised by considering two hard cases for the Markov VCG learning with linear function approximation. 
For the first hard case, we mimic the strategy of the lower bound design as in \citet{2020Mechanism} with constructing two problems $\theta_{0}$ and $\theta_{1}$ that are hard to distinguish. Then, the lower bound is obtained by further lower bounding specific quantities w.r.t. $\theta_{0}$ and $\theta_{1}$. Though we follow such a proving strategy, the model construction is specific to our MDP setting and different from the existing work as discussed above. Specifically, we consider constructing two linear MDPs for the two problems $\theta_{0}$ and $\theta_{1}$ that are hard to distinguish, i.e., they share the same linear feature mapping and deterministic transition kernel but have a small difference in the distribution of reward functions. In addition, we let the dimension of the linear space be $d = n+2$. Note that due to the bounded reward assumption in this work, we define Bernoulli reward functions which further leads to a different anti-concentration analysis.
By bounding the specific quantities associated with $\theta_0$ and $\theta_1$, we obtain a dimension-free lower bound in an order of  $\Omega(n^{4/3} H^{2/3} T^{2/3})$. 

Moreover, to further understand the dependence on any dimension $d$, our second hard case is constructed by the observation that $\max\big( n \mathrm{Reg}_{T}^{W},\mathrm{Reg}_{T}^{\sharp},\mathrm{Reg}_{0 T}\big)\geq n \mathrm{Reg}_{T}^{W}$ always holds. This further inspires us to connect the lower bound to the problem of learning a $d$ dimensional linear MDP with $n+1$ reward functions. We thus prove that the lower bound of $n \mathrm{Reg}_{T}^{W}$ is $\Omega\big(n(n+R_{\max})d\sqrt{H T}\big)$, where the factor $n+R_{\max}$ reflects the impact of the $n$ agent reward functions and the seller reward function 
on the lower bound. Combining the above two hard cases, we eventually obtain the lower bound for our mechanism design problem, which is $\Omega\big(n^{4/3} H^{2/3} T^{2/3} +n(n+R_{\max})d\sqrt{HT}\big)$. Please refer to Appendix \ref{sec:proof-lower} for the detailed proof.

\section{Conclusion}
In this paper, we consider the problem where the agents interact with the mechanism designer according to an unknown MDP. We focus on the online setting with linear function approximation and attempt to recover the dynamic VCG mechanism over multiple rounds of interaction. We propose novel algorithms to learn the mechanism and show that the regret of our proposed method is upper bounded by $\tilde{\cO}(T^{2/3})$, where $T$ is the total number of rounds. We further devise a lower bound, incurring the same $\Omega(T^{2 / 3})$ regret as the upper bound. Our work establishes the regret guarantee for online RL in solving dynamic mechanism design problems without prior knowledge of the underlying model.

\bibliographystyle{ims}
\bibliography{reference}

\newpage 
\appendix

\addtocontents{toc}{\protect\setcounter{tocdepth}{2}}

\centerline{ {\LARGE \textbf{Appendix}} }

\vspace{1cm}

{\hypersetup{
	      linkcolor=black,
	      }
\tableofcontents
}
\newpage 

\section{Table of Notation} \label{sec:tab_notation}
To summarize our notations, we present the following table of notation. 

\begin{table}[!h]
	\begin{small}
		\caption{Table of Notation}
		\centering
		\renewcommand*{\arraystretch}{1.3}
		\begin{tabular}{ >{\centering\arraybackslash}m{2.55cm} | >{\centering\arraybackslash}m{12.5cm} } 
			\hline\hline
			Notation & Meaning \\ 
			\hline
			
			$R$  & summation of the reward functions of the seller and the agents, i.e., $\sum_{i=0}^n r_i$ \\ 
			$R^{-i}$  & summation of the reward functions except that of agent $i$, i.e., $\sum_{j=0,j\neq i}^n r_j$\\ 
			$V^*( ;r)$ & $\max_\pi V^\pi( ;r)$ for any value function $r$ \\
			$\widehat{\pi}^{t}$ & seller's policy in Alg. \ref{algorithm:LMVL} w.r.t. the reward function $R$, generated by Alg. \ref{algorithm:L3} \\
			$\widehat{V}_{h}^{t,*}(x_{1};R)$ & optimistic value function generated by Alg. \ref{algorithm:L3} w.r.t. $R$  \\
			$\widehat{V}_{h}^{t,*}(x_{1};R^{-i})$ & optimistic value function generated by Alg. \ref{algorithm:L3} w.r.t. $R^{-i}$  \\
			$\check{V}_{h}^{t,*}(x_{1};R^{-i})$ & pessimistic value function generated by Alg. \ref{algorithm:L3} w.r.t. $R^{-i}$\\
			$\widehat{V}_{h}^{t,\widehat{\pi}^{t}}(x_{1};R^{-i})$ & optimistic value function generated by Alg. \ref{algorithm:L4} w.r.t. $R^{-i}$ and $\widehat{\pi}^{t}$\\
			$\check{V}_{h}^{t,\widehat{\pi}^{t}}(x_{1};R^{-i})$ & pessimistic value function generated by Alg. \ref{algorithm:L4} w.r.t. $R^{-i}$ and $\widehat{\pi}^{t}$\\
			$F_{t}^{-i}$ & $\widehat{V}_{1}^{t,*}(x_{1};R^{-i})$ if $\zeta_2=\texttt{OPT}$; $\check{V}_{1}^{t,*}(x_{1};R^{-i})$ if $\zeta_2=\texttt{PES}$ \\
			$G_{t}^{-i}$ & $\widehat{V}_{1}^{t,\widehat{\pi}^{t}}(x_{1};R^{-i})$ if $\zeta_3=\texttt{OPT}$; $\check{V}_{1}^{t,\widehat{\pi}^{t}}(x_{1};R^{-i})$ if $\zeta_3=\texttt{PES}$ \\
			$\iota$ & the logarithmic term $\log(36 n d H T/\delta)$\\
			\hline
			$\tilde{r}_i$  & potentially untruthful reward function for agent $i$, $i\in [n]$\\ 
			$\tilde{R}^{-i}$  & $r_0 + \sum_{j=1,j\neq i}^n \tilde{r}_j$\\ 
			$\tilde{\pi}_t^{\dag i}$ & seller's policy in Alg. \ref{algorithm:LMVL} w.r.t. the reward function $r_i + \tilde{R}^{-i}$, generated by Alg. \ref{algorithm:L3} \\
			$\widehat{V}_{h}^{t,\dag}(x_{1};r_i + \tilde{R}^{-i})$ & optimistic value by Alg. \ref{algorithm:L3} w.r.t. $r_i + \tilde{R}^{-i}$ if agents are untruthful except agent $i$  \\
			$\widehat{V}_{h}^{t,\dag}(x_{1};\tilde{R}^{-i})$ & optimistic value by Alg. \ref{algorithm:L3} w.r.t. $\tilde{R}^{-i}$ if agents are untruthful except agent $i$  \\
			$\check{V}_{h}^{t,\dag}(x_{1};\tilde{R}^{-i})$ & pessimistic value by Alg. \ref{algorithm:L3} w.r.t. $\tilde{R}^{-i}$ if agents are untruthful except agent $i$\\
			$\widehat{V}_{h}^{t,\tilde{\pi}_t^{\dag i}}(x_{1};\tilde{R}^{-i})$ & optimistic value by Alg. \ref{algorithm:L4} w.r.t. $\tilde{R}^{-i}$, $\tilde{\pi}_t^{\dag i}$ if agents are untruthful except agent $i$\\
			$\check{V}_{h}^{t,\tilde{\pi}_t^{\dag i}}(x_{1};\tilde{R}^{-i})$ & pessimistic value by Alg. \ref{algorithm:L4} w.r.t. $\tilde{R}^{-i}$, $\tilde{\pi}_t^{\dag i}$ if agents are untruthful except agent $i$\\
			$F_{t}^{\dag,-i}$ & $\widehat{V}_{1}^{t,\dag}(x_{1};\tilde{R}^{-i})$ if $\zeta_2=\texttt{OPT}$; $\check{V}_{1}^{t,\dag}(x_{1};\tilde{R}^{-i})$ if $\zeta_2=\texttt{PES}$ \\
			$G_{t}^{\dag,-i}$ & $\widehat{V}_{1}^{t,\tilde{\pi}_t^{\dag i}}(x_{1};\tilde{R}^{-i})$ if $\zeta_3=\texttt{OPT}$; $\check{V}_{1}^{t,\tilde{\pi}_t^{\dag i}}(x_{1};\tilde{R}^{-i})$ if $\zeta_3=\texttt{PES}$ \\
			\hline
			$\tilde{R}$ & $r_0 + \sum_{i=1}^n \tilde{r}_i$\\
			$\tilde{\pi}_t^{\ddag}$ & seller's policy in Alg. \ref{algorithm:LMVL} w.r.t. the reward function $\tilde{R}$, generated by Alg. \ref{algorithm:L3} \\
			$\widehat{V}_{h}^{t,\ddag}(x_{1};\tilde{R})$ & optimistic value by Alg. \ref{algorithm:L3} w.r.t. $\tilde{R}$ if all agents are untruthful\\
			$\widehat{V}_{h}^{t,\ddag}(x_{1};\tilde{R}^{-i})$ & optimistic value by Alg. \ref{algorithm:L3} w.r.t. $\tilde{R}^{-i}$ if all agents are untruthful\\
			$\check{V}_{h}^{t,\ddag}(x_{1};\tilde{R}^{-i})$ & pessimistic value by Alg. \ref{algorithm:L3} w.r.t. $\tilde{R}^{-i}$ if all agents are untruthful\\
			$\widehat{V}_{h}^{t,\tilde{\pi}_t^{\ddag}}(x_{1};\tilde{R}^{-i})$ & optimistic value by Alg. \ref{algorithm:L4} w.r.t. $\tilde{R}^{-i}$, $\tilde{\pi}_t^{\ddag}$ if all agents are untruthful\\
			$\check{V}_{h}^{t,\tilde{\pi}_t^{\ddag}}(x_{1};\tilde{R}^{-i})$ & pessimistic value by Alg. \ref{algorithm:L4} w.r.t. $\tilde{R}^{-i}$, $\tilde{\pi}_t^{\ddag}$ if all agents are untruthful\\
			$F_{t}^{\ddag,-i}$ & $\widehat{V}_{1}^{t,\ddag}(x_{1};\tilde{R}^{-i})$ if $\zeta_2=\texttt{OPT}$; $\check{V}_{1}^{t,\ddag}(x_{1};\tilde{R}^{-i})$ if $\zeta_2=\texttt{PES}$ \\
			$G_{t}^{\ddag,-i}$ & $\widehat{V}_{1}^{t,\tilde{\pi}_t^{\ddag}}(x_{1};\tilde{R}^{-i})$ if $\zeta_3=\texttt{OPT}$; $\check{V}_{1}^{t,\tilde{\pi}_t^{\ddag}}(x_{1};\tilde{R}^{-i})$ if $\zeta_3=\texttt{PES}$ \\
			\hline \hline
		\end{tabular}
		\label{tab:notation}
	\end{small}
\end{table}

\section{Proof of Lemma~\ref{lemma:Markov VCG mechanism}}
\label{sec:proof_sketch_of_lemma_markov_vcg_mechanism}


\begin{proof} 
	The detailed proof for these three properties can be found in Appendix B of~\citet{lyu2022pessimism}. We include a sketch of the proof here for completeness. The proof for the linear Markov VCG mechanism's properties is provided as follows:
	\begin{enumerate} 
		
		\item \textit{Truthfulness}: We begin by noting that when agent $i$ reports their rewards untruthfully, the untruthful reporting may change the optimal policy of $V_1^\pi(x_1; R)$ by altering only the reported value of $r_i$ and the associated value function $V_1^\pi( ; r_i)$. However, agent $i$ cannot affect the value of $V_1^\pi(x_1;R^{-i})$, as $R^{-i}$ is independent of $r_i$.
		
		With the previous observation in mind, let $\tR_i$ be the untruthful value function reported by agent $i$ and $\Tilde{\pi} = \argmax_{\pi \in \Pi} V_1^\pi(x_1;\tR_i + R^{-i})$. Under the linear Markov VCG mechanism, agent $i$ attains the following utility
		\[
		\Tilde{u}_i = V_1^{\Tilde{\pi}}(x_1; r_i) - V_1^{\pi_*^{-i}}(x_1; R^{-i}) + V_1^{\Tilde{\pi}}(x_1; R^{-i}) = V_1^{\Tilde{\pi}}(x_1; R) - V_1^{\pi_*^{-i}}(x_1; R^{-i}).
		\]
		Similarly, we know $u_i = V_{1}^{\pi_*}(x_1;R) - V_1^{\pi_*^{-i}}(x_1;R^{-i})$ when agent $i$ reports truthfully. Since $\pi_*$ is the maximizer of $V_1^\pi(x_1;R)$, we know $u_i \geq \Tilde{u}_i$, thus proving truthfulness.
		
		\item \textit{Individual Rationality}: For any agent $i$, their utility is given by
		\begin{equation}\label{equa:VCG IR}
			\begin{aligned}
				u_{i *} &= V_1^{\pi_*}(x_1; r_i) - p_{i *} = V_{1}^{\pi_*}(x_1; R) - V_1^{\pi_*^{-i}}(x_1; R^{-i})\\
				&\geq V_1^{\pi_*^{-i}}(x_1; R) - V_1^{\pi_*^{-i}}(x_1; R^{-i}) = V_1^{\pi_*^{-i}}(x_1; r_i) \geq 0,
			\end{aligned}
		\end{equation}
		where we use the fact that $r_{i, h}(s,a) \geq 0$ for all $(i,h,s,a) \in [n]\times[H]\times\cS\times\cA$. 
		\item \textit{Efficiency}: Under truthful reporting, the chosen policy $\pi_*$ is the maximizer of the value-function of welfare $V_{1}^{\pi}(x_{1};R)$ and hence is efficient.
	\end{enumerate}
	This completes the proof.
\end{proof}

\section{Proof of Theorems \ref{theorem:ETC} and \ref{theorem:OPT}} \label{sec:proof-start}

We begin by introducing a crucial result that will be used throughout the rest of the section. This lemma presents the estimation errors of certain value functions by their corresponding optimistic or pessimistic value estimates. 
We refer readers to the table of notation in Section \ref{sec:tab_notation} for detailed definitions of the policies, rewards, and value functions in this lemma.

\begin{lemma}\label{lemma:basic_lemma}
	For both when $\zeta_1 = {\texttt{ETC}}$ and when $\zeta_1 = \texttt{EWC}$, let $\iota=\log(36 n d H T/\delta)$. With probability at least $1 - \delta$, the following statements hold true jointly for all $t > K$ and some absolute constant $\hat{c}$.
	\begin{enumerate}
		\item Regardless of any agent's truthfulness, the policy used is sufficiently close to the one that maximizes the value functions of the \emph{reported} reward functions. More specifically, $V^{*}_1(x_1; \fR) - V^{\pi}_1(x_1; \fR) \leq 2\hat{c}\sqrt{d^3H^6\iota/K}$ for all $(\fR, \pi) \in \{(R, \widehat{\pi}^t), (\Tilde{R}, \Tilde{\pi}^{\ddagger}_t)\} \cup \{(r_i + \Tilde{R}^{-i}, \Tilde{\pi}_t^{\dagger i})\}_{i = 1}^n$.
		\item For all $i \in [n]$, Algorithm~\ref{algorithm:L3} returns a sufficiently good estimate regardless of agent $i$'s or other agents' truthfulness. More specifically, $0 \leq \widehat{V}_1^{t, \pi}(x_1; \fR) - V^*(x_1; \fR) \leq 2\hat{c}\sqrt{d^3H^6\iota/K}$ and $- 2\hat{c}\sqrt{d^3H^6\iota/K}\leq \check{V}_1^{t, \pi}(x_1; \fR) - V^*(x_1; \fR) \leq 0$, for all $(\fR, \pi) \in \{(R^{-i}, \star), (\Tilde{R}^{-i}, \dagger), (\Tilde{R}^{-i}, \ddagger)\}_{i = 1}^n$.
		\item For all $i \in [n]$, Algorithm~\ref{algorithm:L4} returns a sufficiently good estimate regardless of agent $i$'s or other agents' truthfulness. More specifically, $0\leq \widehat{V}_1^{t,\pi}(x_1; \fR) - V^{\pi}_1(x_1; \fR)\leq 2\hat{c} \sqrt{d^3H^6\iota/K}$ and $-2\hat{c} \sqrt{d^3H^6\iota/K} \leq \check{V}_1^{t,\pi}(x_1; \fR) - V^{\pi}_1(x_1; \fR)$, for all $(\fR, \pi) \in \allowbreak\{(R^{-i}, \widehat{\pi}^t),\allowbreak (\Tilde{R}^{-i}, \Tilde{\pi}_t^{\dagger i}), (\Tilde{R}^{-i}, \Tilde{\pi}_t^{\ddagger})\}_{i = 1}^n$.
	\end{enumerate}  
\end{lemma}
Please see Appendix \ref{sec:proof_basic_lemma} for the detailed proof. At the high level, the first clause ensures that the policy executed during exploitation is always sufficiently close to the one that maximizes the sum of the reported reward functions. The second and third clauses ensure that the price estimation is sufficiently good.
With Lemma~\ref{lemma:basic_lemma}, we can obtain the proofs of Theorems \ref{theorem:ETC} and \ref{theorem:OPT}. For a concise presentation, we ignore presenting the probability for a certain inequality holds when calling Lemma \ref{lemma:basic_lemma}. Overall, the results in Theorem \ref{theorem:ETC} and Theorem \ref{theorem:OPT} will hold with probability at least $1-\delta$ respectively, according to the above lemma.

\subsection{Proof of Theorem \ref{theorem:ETC}}
\label{subsec:proof_of_theorem_etc}
\begin{proof} We prove each bound in Theorem~\ref{theorem:ETC} separately. Overall, the inequalities in  Lemma \ref{lemma:basic_lemma} for the proof of Theorem~\ref{theorem:ETC} hold together with probability at least $1-\delta$. For conciseness, we ignore the detailed description of probabilities for each of these inequalities in our proof.
	
	\paragraph{Welfare Regret.} Recall that in Equation \eqref{equa:regret}, the social welfare regret is defined as $\mathrm{Reg}_{T}^{W}=\sum_{t=1}^{T}\mathrm{reg}_{t}^{W}$ where $\mathrm{reg}_{t}^{W}=V_{1}^{\pi_*}(x_1;R)-V_{1}^{\widehat{\pi}^{t}}(x_1;R)$. We begin by decomposing the regret into two parts, the regret suffered in the exploration phase and the regret suffered in the exploitation phase, as follows,
	\begin{equation}\label{decomp:welfare_two_phase}
		\mathrm{Reg}_{T}^{W}=\sum_{t=1}^{K}\mathrm{reg}_{t}^{W}+\sum_{t=K+1}^{T}\mathrm{reg}_{t}^{W}.
	\end{equation}
	For the first summation in Equation \eqref{decomp:welfare_two_phase}, we have
	\begin{equation}\label{equa:ETC wel explore}
		\sum_{t=1}^{K}\mathrm{reg}_{t}^{W}\leq K H(n+R_{\max})=H(n+R_{\max})K,
	\end{equation} recalling that 
	$\mathrm{reg}_{t}^{W}\leq H(n+R_{\max})$ due to the upper bound of the reward functions.
	
	We now turn to the second summation. By Lemma~\ref{lemma:basic_lemma}, for $t > K$ we have
	\begin{align}
		\mathrm{reg}_{t}^{W}&=V_{1}^{\pi_*}(x_1;R)-V_{1}^{\widehat{\pi}^{t}}(x_1;R) \leq 2\hat{c} (n+R_{\max} )\sqrt{d^3H^6\iota/K}. \label{eq:sowell_ind}
	\end{align}
	Summing the above equation form $t=K+1$ to $T$, we have
	\begin{equation}\label{equa:ETC_wel_exploit}
		\sum_{t=K+1}^{T}\mathrm{reg}_{t}^{W}\leq 2\hat{c}(n+R_{\max} ) \sqrt{d^{3}H^{6}\iota/K} (T-K).
	\end{equation}
	Combining Equations \eqref{decomp:welfare_two_phase}, \eqref{equa:ETC wel explore},  and \eqref{equa:ETC_wel_exploit}, we have
	\begin{equation}
		\mathrm{Reg}_{t}^{W}\leq H(n+R_{\max})K + 2\hat{c} (n+R_{\max} )\sqrt{d^3H^6\iota/K} (T-K), \label{eq:wreg-k-p}
	\end{equation}
	where the value of $K$ will be determined by jointly considering the upper bounds of $n\mathrm{Reg}_T^W$, $\mathrm{Reg}^{\sharp}_T$, and $\mathrm{Reg}_{0T}$.

	\paragraph{Agent Regret.} Recall that in Equation \eqref{equa:regret}, the agent regret is defined as $\mathrm{Reg}_{i T}=\sum_{t=1}^{T}\mathrm{reg}_{i t}$, where $\mathrm{reg}_{i t}=u_{i *}-u_{i t}$.
	Similar to our proof for welfare regret, we decompose the regret to that incurred during exploration and exploitation,
	\begin{equation}\label{decomp:agent_regret_two_phase}
		\mathrm{Reg}_{i T}=\sum_{t=1}^{K}\mathrm{reg}_{i t}+\sum_{t=K+1}^{T}\mathrm{reg}_{i t}.
	\end{equation}
	For the first summation in Equation \eqref{decomp:agent_regret_two_phase}, we begin by upper bounding the instantaneous regret of agent $i$ during the exploration phase. As the price charged to the agents is set to 0 during the exploration phase, for any $t \in [K]$, we have
	\begin{equation*}
		\textrm{reg}_{it} \leq u_{i *} - \min_\pi V_1^\pi(x_1; r_i) \leq u_{i *} = V_1^{\pi_*}(x_1; r_i) - p_{i *},
	\end{equation*} where we recall $p_{i *} = V_1^{\pi^{-i}_*}(x_1; R^{-i}) - V_1^{\pi_*}(x_1; R^{-i})$ and use the fact that $r_i \geq 0$. By definition of $\pi^{-i}_*$, we know that $p_{i *} \geq 0$ and 
	$V_1^{\pi_*}(x_1; r_i) \leq H$, using the fact that $r_i \leq 1$. We then have
	\begin{equation*}
		\sum_{t=1}^{K}\mathrm{reg}_{i t}\leq \sum_{t=1}^{K} V_1^{\pi_*}(x_1; r_i) \leq H K.
	\end{equation*}
	Bounding the instantaneous agent regret during the exploitation phase is more complicated, as it depends on not only the suboptimality of the learned policy $\widehat{\pi}^t$ itself, but also the suboptimality incurred by estimation of the VCG price, $p_{it} = F^{-i}_t - G^{-i}_t$. To handle this challenge, we propose the following decomposition for $t > K$,
	\begin{equation}\label{equa:r_it decomp}
		\begin{aligned}
			\mathrm{reg}_{i t}&=u_{i *}-u_{i t}\\
			&=\big[V_{1}^{\pi_{*}}\big(x_{1};r_{i}\big)-V_{1}^{\pi^{-i}_{*}}(x_{1};R^{-i})+V_{1}^{\pi_{*}}(x_{1};R^{-i})\big]-\big[V_{1}^{\widehat{\pi}^{t}}\big(x_{1};r_{i}\big)-F_{t}^{-i}+G_{t}^{-i}\big]\\
			&=\underbrace{\big[V_{1}^{\pi_{*}}(x_{1};R)-V_{1}^{\widehat{\pi}^{t}}(x_{1};R)\big]}_{\displaystyle\rm{(i)}}+\underbrace{\big[F_{t}^{-i}-V_{1}^{\pi_{*}^{-i}}(x_{1};R^{-i})\big]}_{\displaystyle \rm{(ii)}}+\underbrace{\big[V_{1}^{\widehat{\pi}^{t}}(x_{1};R^{-i})-G_{t}^{-i}\big]}_{\displaystyle \rm{(iii)}},
		\end{aligned}
	\end{equation} 
	where the second equation uses the fact that $V_{1}^{\pi}\big(x_{1};r_{i}\big)+V_{1}^{\pi}(x_{1};R^{-i}) = V_{1}^{\pi}(x_{1};R)$ for any $\pi$. The above decomposition allows us to bound the agent regret in terms of (i) suboptimality of $\widehat{\pi}^t$, (ii) estimation error of $F_t^{-i}$, and (iii) policy evaluation error of $G^{-i}_t$.
	
	For term (i), by the result already obtained in Equation \eqref{eq:sowell_ind} for the welfare regret, we have for all $ t > K $, 
	\begin{equation*}
		V_{1}^{\pi_{*}}(x_{1};R)-V_{1}^{\widehat{\pi}^{t}}(x_{1};R)\leq 2\hat{c}(n+R_{\max} )\sqrt{d^{3}H^{6}\iota/K}.
	\end{equation*} 
	We now bound term (ii). Let $\widehat{\pi}_t^{-i}$ be the fictitious policy generated by Algorithm~\ref{algorithm:L3} when calculating $F_t^{-i}$. For $t > K$, when $\zeta_{2}=\texttt{PES}$, $F_t^{-i}=\check{V}_{1}^{t,\widehat{\pi}_t^{-i}}(x_{1};R^{-i})$ , we have
	\begin{equation*}
		\displaystyle{ \rm{(ii)}}=\check{V}_{1}^{t,\widehat{\pi}_t^{-i}}(x_{1};R^{-i})-V_{1}^{\pi_{*}^{-i}}(x_{1};R^{-i}) \leq 0,
	\end{equation*} where the inequality is by Lemma~\ref{lemma:basic_lemma}. When $\zeta_2 = \texttt{OPT}$, $F_t^{-i}=\widehat{V}_{1}^{t,\widehat{\pi}_t^{-i}}(x_{1};R^{-i})$, we have
	\[
	\textrm{(ii)} = \widehat{V}_{1}^{t,\widehat{\pi}_t^{-i}}(x_{1};R^{-i})-V_{1}^{\pi_{*}^{-i}}(x_{1};R^{-i}) \leq 2\hat{c}(n+R_{\max} )\sqrt{d^3H^6\iota/K},
	\] where the inequality also stems from Lemma~\ref{lemma:basic_lemma}.
	
	Term (iii) is controlled in a similar way. By Lemma \ref{lemma:basic_lemma}, for $t\geq K$, when $\zeta_{3}=\texttt{OPT}$, $G_t^{-i}=\widehat{V}_{1}^{t,\widehat{\pi}^t}(x_{1};R^{-i})$, we have 
	\begin{align*}
		\displaystyle{ \rm{(iii)}}=V_{1}^{\widehat{\pi}^{t}}(x_{1};R^{-i})-\widehat{V}_{1}^{t,\widehat{\pi}^{t}}(x_{1};R^{-i})\leq 0,
	\end{align*} 
	and when $\zeta_{3}=\texttt{PES}$, $G_t^{-i}=\check{V}_{1}^{t,\widehat{\pi}^t}(x_{1};R^{-i})$, 
	we have
	\begin{equation*}
		\displaystyle{ \rm{(iii)}}=V_{1}^{\widehat{\pi}^{t}}(x_{1};R^{-i})-\check{V}_{1}^{t,\widehat{\pi}^{t}}(x_{1};R^{-i})\leq 2\hat{c}(n+R_{\max} )\sqrt{d^{3}H^{6}\iota/K}.
	\end{equation*}
	Combining the regrets incurred in both phases, by $\mathrm{Reg}_{i T}=\sum_{t=1}^{T}\mathrm{reg}_{i t}$, we obtain
	\begin{equation}\label{eq:areg-k-p}
		\mathrm{Reg}_{i T}\leq \begin{cases}
			HK + 2\hat{c}(n+R_{\max} )\sqrt{d^{3}H^{6}\iota/K}(T-K) &\textrm{ if $(\zeta_{2},\zeta_{3})=(\texttt{PES},\texttt{OPT})$}\\
			HK + 6\hat{c}(n+R_{\max} )\sqrt{d^{3}H^{6}\iota/K}(T-K) &\textrm{ if $(\zeta_{2},\zeta_{3})=(\texttt{OPT},\texttt{PES})$}.
		\end{cases}
	\end{equation}

	\paragraph{Seller Regret.} Recall that in Equation \eqref{equa:regret}, the seller regret is defined as $\mathrm{Reg}_{0 T}=\sum_{t=1}^{T}\mathrm{reg}_{0 t}$ where $\mathrm{reg}_{0 t}=u_{0 *}-u_{0 t}$. Thus, we have the following decomposition
	\begin{equation}\label{decomp:seller_regret_two_phase}
		\mathrm{Reg}_{0 T}=\sum_{t=1}^{K}\mathrm{reg}_{0 t}+\sum_{t=K+1}^{T}\mathrm{reg}_{0 t}.
	\end{equation}
	We begin with bounding the first summation. Recall that $\mathrm{reg}_{0 t} = u_{0 *} - u_{0 t}$. During exploration, as the seller charges a price of 0 to all agents, their utility is lower bounded by $u_{0 t} = \min_{\pi}V^{\pi}(x_1; r_0) + 0 = \min_{\pi}V^{\pi}(x_1; r_0)$. As $r_0 \geq 0$, we know that for all $t \in [K]$,
	\begin{align*}
		\sum_{t = 1}^K \mathrm{reg}_{0 t} \leq \sum_{t = 1}^K u_{0 *} \leq K u_{0 *}.
	\end{align*} 
	Recall that
	\begin{align*}
		u_{0 *} &= V_1^{\pi_*}(x_1; r_0) + \sum_{i = 1}^n p_{i *} = V_1^{\pi_*}(x_1; r_0) + \sum_{i = 1}^n \rbr{V^{\pi^{-i}_*}(x_1; R^{-i}) - V^{\pi_*}(x_1; R^{-i})}\\
		&= -(n - 1)V_1^{\pi_*}(x_1; R) + \sum_{i = 1}^n V^{\pi^{-i}_*}(x_1; R^{-i}).
	\end{align*} Since $r_i \geq 0$, $R = R^{-i} + r_i \geq R^{-i}$, we have
	\begin{align*}
		u_{0 *} &\leq -(n - 1)V_1^{\pi_*}(x_1; R) + \sum_{i = 1}^n V^{\pi^{-i}_*}(x_1; R) \leq -(n - 1)V_1^{\pi_*}(x_1; R) + \sum_{i = 1}^n V^{\pi_*}(x_1; R)\\
		&= V^{\pi_*}(x_1; R) \leq H(n + R_{\max}),
	\end{align*} 
	according to the definitions of $\pi_*$ and $\pi^{-i}_*$. We then have the following upper bound for the first summation in Equation~\eqref{decomp:seller_regret_two_phase} as
	\[
	\sum_{t=1}^{K}\mathrm{reg}_{0 t} \leq K u_{0*} \leq (n + R_{\max}) H K.
	\]
	We now bound the second summation in Equation \eqref{decomp:seller_regret_two_phase}. The seller's instantaneous regret during exploration can be decomposed as
	\small
	\begin{equation*}
		\begin{aligned}
			\mathrm{reg}_{0 t}&=u_{0 *}-u_{0 t}\\
			&=\bigg[V_{1}^{\pi_{*}}(x_{1};r_{0 })+\sum_{i=1}^{n}p_{i*}\bigg]-\bigg[V_{1}^{\widehat{\pi}^{t}}(x_{1};r_{0 })+\sum_{i=1}^{n}p_{i t}\bigg]\\
			&=\bigg[V_{1}^{\pi_{*}}(x_{1};r_{0 })+\sum_{i=1}^{n}\Big[V_{1}^{\pi^{-i}_{*}}(x_{1};R^{-i})-V_{1}^{\pi_{*}}(x_{1};R^{-i})\Big]\bigg]-\bigg[V_{1}^{\widehat{\pi}^{t}}(x_{1};r_{0 })+\sum_{i=1}^{n}\big(F_{t}^{-i}-G_{t}^{-i}\big)\bigg]\\
			&=\bigg[-(n-1)V_{1}^{\pi_{*}}(x_{1};R)+\sum_{i=1}^{n} V_{1}^{\pi^{-i}_{*}}(x_{1};R^{-i})\bigg]\\
			&\qquad -\bigg[-(n-1)V_{1}^{\widehat{\pi}^{t}}(x_{1};R)+\sum_{i=1}^{n}\big[F_{t}^{-i}-G_{t}^{-i}+V_{1}^{\widehat{\pi}^{t}}(x_{1};R^{-i})\big]\bigg]\\
			&=(n-1)\underbrace{\big[V_{1}^{\widehat{\pi}^{t}}(x_{1};R)-V_{1}^{\pi_*}(x_{1};R)\big]}_{\displaystyle{ \rm{(i)}}}+\sum_{i=1}^{n}\underbrace{\big[V_{1}^{\pi^{-i}_{*}}(x_{1};R^{-i})-F_{t}^{-i}\big]}_{\displaystyle{\rm{(ii)}}}+\sum_{i=1}^{n}\underbrace{\big[G_{t}^{-i}-V^{\widehat{\pi}^{t}}(x_{1};R^{-i})\big]}_{\displaystyle{ \rm{(iii)}}}.
		\end{aligned}
	\end{equation*}
	\normalsize
	For term (i), we have $\displaystyle{ \rm{(i)}}\leq0$ due to the optimality of $V_{1}^{*}$.
	For term (ii), when $\zeta_{2}=\texttt{OPT}$, by the construction of $F_t^{-i}$, we have
	\[
	\textrm{(ii)} = V_{1}^{\pi^{-i}_{*}}(x_{1};R^{-i}) - F_t^{-i} = V_{1}^{\pi^{-i}_{*}}(x_{1};R^{-i}) - \widehat{V}_1^{t, \widehat{\pi}_t^{-i}}(x_1; R^{-i}) \leq 0,
	\] where we invoke Lemma~\ref{lemma:basic_lemma} for the inequality. When $\zeta_2 = \texttt{PES}$, we obtain that
	\[
	\textrm{(ii)} = V_{1}^{\pi^{-i}_{*}}(x_{1};R^{-i}) - F_t^{-i} = V_{1}^{\pi^{-i}_{*}}(x_{1};R^{-i}) - \check{V}_1^{t, \widehat{\pi}_t^{-i}}(x_1; R^{-i}) \leq 2\hat{c} (n+R_{\max} )\sqrt{d^3H^6\iota /K},
	\] where the last inequality also uses Lemma~\ref{lemma:basic_lemma}.
	
	For term (iii), further invoking Lemma~\ref{lemma:basic_lemma}, when $\zeta_{3}=\texttt{PES}$, $\displaystyle{ \rm{(iii)}}\leq0$, and when $\zeta_{3}=\texttt{OPT}$, 
	we have \begin{equation*}
		\displaystyle{ \rm{(iii)}}=\widehat{V}_{1}^{t,\widehat{\pi}^{t}}(x_{1};R^{-i})-V_{1}^{\widehat{\pi}^{t}}\big(x_1;R^{-i}\big)\leq 2\hat{c}(n+R_{\max} )\sqrt{d^{3}H^{6}\iota/K}.
	\end{equation*}
	Combining the bounds for terms (i), (ii), and (iii) above, we have
	\begin{equation*}
		\sum_{t=K+1}^{T}\mathrm{reg}_{0 t}\leq\begin{cases}
			4\hat{c} n(n+R_{\max})\sqrt{d^{3}H^{6}\iota/K}(T-K)& \text{if $(\zeta_{2},\zeta_{3})=(\texttt{PES},\texttt{OPT})$}\\
			0& \text{if $(\zeta_{2},\zeta_{3})=(\texttt{OPT},\texttt{PES})$},
		\end{cases}
	\end{equation*}
	where $\hat{c}$ is some absolute constant. By adding the regret incurred in the exploration phase, this result further gives the upper bound of the seller regret $\mathrm{Reg}_{0 T}$ as
	\begin{equation}\label{eq:sreg-k-p}
		\begin{cases}
			H(n+R_{\max})K + 4\hat{c} n(n+R_{\max})\sqrt{d^{3}H^{6}\iota/K}(T-K)& \text{if $(\zeta_{2},\zeta_{3})=(\texttt{PES},\texttt{OPT})$}\\
			H(n+R_{\max})K& \text{if $(\zeta_{2},\zeta_{3})=(\texttt{OPT},\texttt{PES})$}.
		\end{cases}
	\end{equation}

	\paragraph{Choice of $K$.} Now we determine the value of $K$ that can lead to a tight bound of $\max\{ n\mathrm{Reg}_T^W, \allowbreak \mathrm{Reg}^{\sharp}_T,\mathrm{Reg}_{0T}\}$, where $\mathrm{Reg}^{\sharp}_T = \sum_{i=1}^n \mathrm{Reg}_{i T}$ as defined in Equation \eqref{equa:regret}. According to Equations \eqref{eq:wreg-k-p}, \eqref{eq:areg-k-p}, and \eqref{eq:sreg-k-p}, comparing the upper bounds of $n\mathrm{Reg}_T^W$, $\mathrm{Reg}^{\sharp}_T$, and $\mathrm{Reg}_{0T}$, we always have 
	\begin{align*}
		\max\{ n\mathrm{Reg}_T^W,\mathrm{Reg}^{\sharp}_T,\mathrm{Reg}_{0T}\} \leq H(n+R_{\max})nK+6\hat{c}(n+R_{\max})n\sqrt{d^{3}H^{6}\iota/K}(T-K).
	\end{align*}
	Focusing on the factors of $H$, $n$, $d$, $T$, and $\iota$, we set $K = dH^{4/3}\iota^{1/3}T^{2/3}$, which can minimize the order of these factors in the above inequality, and obtain the bound 
	\begin{align*}
		\max\{ n\mathrm{Reg}_T^W,\mathrm{Reg}^{\sharp}_T,\mathrm{Reg}_{0T}\} = \cO \big(n(n+R_{\max})d H^{7/3}\iota^{1/3}T^{2/3}\big).
	\end{align*}
	Thus, plugging $K = dH^{4/3}\iota^{1/3}T^{2/3}$ into \eqref{eq:wreg-k-p}, we have 
	\begin{equation*}
		\mathrm{Reg}_{T}^{W}\leq(1+2\hat{c})(n+R_{\max})d H^{7/3}\iota^{1/3}T^{2/3}.
	\end{equation*}
	Plugging the value of $K$ into \eqref{eq:areg-k-p}, we have 
	\begin{align*}
		\mathrm{Reg}_{i T}\leq \begin{cases}
			\big(1 + 2\hat{c}(n+R_{\max} )\big)dH^{7/3}\iota^{1/3}T^{2/3} &\textrm{ if $(\zeta_{2},\zeta_{3})=(\texttt{PES},\texttt{OPT})$}\\
			\big(1 + 6\hat{c}(n+R_{\max} )\big)dH^{7/3}\iota^{1/3}T^{2/3} &\textrm{ if $(\zeta_{2},\zeta_{3})=(\texttt{OPT},\texttt{PES})$}.
		\end{cases}
	\end{align*}
	Plugging the value of $K$ into \eqref{eq:sreg-k-p}, we obtain 
	\begin{equation*}
		\mathrm{Reg}_{0 T}\leq\begin{cases}
			(1+4\hat{c}n)(n+R_{\max})d H^{7/3}\iota^{1/3} T^{2/3}& \text{if $(\zeta_{2},\zeta_{3})=(\texttt{PES},\texttt{OPT})$}\\
			(n+R_{\max})d H^{7/3}\iota^{1/3} T^{2/3}& \text{if $(\zeta_{2},\zeta_{3})=(\texttt{OPT},\texttt{PES})$}.
		\end{cases}
	\end{equation*}
	This completes the proof of the upper bounds of the welfare regret, the agent regret, and the seller regret.

	\paragraph{Individual Rationality.}  We note that for the proof of individual rationality, we do not require the truthfulness of agents other than agent $i$. Recall that if we do not charge the agents in the exploration phase, for any agent $i$, we always have utility $u_{i t} \geq 0$ during exploration because $r_{i} \geq 0$. Thus, we only need to bound from below agent $i$'s utility during the exploitation phase. When agent $i$ reports according to the reward function $r_i$ but other agents report rewards potentially untruthfully according to $\tilde{r}_j$ for $j\neq i$, we define $\tilde{R}^{-i}:=r_0 + \sum_{j\in [n], i\neq j} \tilde{r}_j$ and let $\tilde{\pi}_t^{\dag i}$ substitute $\widehat{\pi}^t$ in Algorithm \ref{algorithm:LMVL}, which is generated by Algorithm \ref{algorithm:L3} in the current reward setting. We further define the associated $F$ and $G$ generated by Algorithms \ref{algorithm:L3} and \ref{algorithm:L4} respectively as follows
	\begin{equation}\label{def:F,G}
		F_{t}^{\dag,-i}=\begin{cases}\widehat{V}_{1}^{t,\dag}\big(x_{1};\tilde{R}^{-i}\big) &\text{if $\zeta_2=\texttt{OPT}$}\\
			\check{V}_{1}^{t,\dag}\big(x_{1};\tilde{R}^{-i}\big) &\text{if $\zeta_2=\texttt{PES}$},
		\end{cases}   
		\qquad 
		G_{t}^{\dag,-i}=\begin{cases}\widehat{V}_{1}^{t,\tilde{\pi}_t^{\dag i}}\big(x_{1};\tilde{R}^{-i}\big) &\text{if $\zeta_3=\texttt{OPT}$}\\
			\check{V}_{1}^{t,\tilde{\pi}_t^{\dag i}}\big(x_{1};\tilde{R}^{-i}\big) &\text{if $\zeta_3=\texttt{PES}$}.
		\end{cases}   
	\end{equation}
	For all $t > K$, according to the definition of $u_{i t}$, under the current reward setting, we have
	\begin{equation}\label{equa:IR ETC decomp}
		\begin{aligned}
			u_{i t} &=V_{1}^{\tilde{\pi}_t^{\dag i}}(x_{1};r_{i})- p_{it}^{\dagger} \\
			&=V_{1}^{\tilde{\pi}_t^{\dag i}}\big(x_1;r_{i }\big)-F_{t}^{\dag, -i}+G_{t}^{\dag, -i}\\
			&=\big[V_{1}^{\tilde{\pi}_t^{\dag i}}\big(x_1;r_{i }\big)+V_{1}^{\tilde{\pi}_t^{\dag i}}\big(x_1;\tilde{R }^{-i}\big)-F_{t}^{\dag, -i}]+\big[G_{t}^{\dag, -i}-V_{1}^{\tilde{\pi}_t^{\dag i}}\big(x_1;\tilde{R }^{-i}\big)\big]\\
			&=\underbrace{\big[V_{1}^{\tilde{\pi}_t^{\dag i}}\big(x_{1};r_{i }+\tilde{R}^{-i}\big)-F_{t}^{\dag, -i}\big]}_{\displaystyle{ \rm{(i)}}}+\underbrace{\big[G_{t}^{\dag, -i}-V_{1}^{\tilde{\pi}_t^{\dag i}}\big(x_{1};\tilde{R}^{-i}\big)\big]}_{\displaystyle{ \rm{(ii)}}},
		\end{aligned}
	\end{equation}
	where $p_{it}^\dag = F_{t}^{\dag, -i}-G_{t}^{\dag, -i}$. For term (i) in Equation \eqref{equa:IR ETC decomp}, by the definition of $V_1^*(x_1,r):=\max_\pi V_1^\pi(x_1,r)$ for any $r$, we have
	\begin{equation*}
		\begin{aligned}
			\displaystyle{ \rm{(i)}}&\geq V_{1}^{\tilde{\pi}_t^{\dag i}}\big(x_{1};r_{i }+\tilde{R}^{-i}\big)-\widehat{V}_{1}^{t,\dag}\big(x_{1};\tilde{R}^{-i}\big)\\
			&=\underbrace{\big[V_{1}^{*}(x_1;r_{i }+\tilde{R}^{-i})-V_{1}^{*}(x_1;\tilde{R}^{-i})\big]}_{\text{(i.a)}}+\underbrace{\big[V_{1}^{\tilde{\pi}_t^{\dag i}}\big(x_{1};r_{i }+\tilde{R}^{-i}\big)-V_{1}^{*}(x_1;r_{i }+\tilde{R}^{-i})\big]}_{\text{(i.b)}}\\
			&\quad +\underbrace{\big[{V_{1}^{*}}(x_1;\tilde{R}^{-i})-\widehat{V}_{1}^{t,\dag}\big(x_{1};\tilde{R}^{-i}\big)\big]}_{\text{(i.c)}},
		\end{aligned}
	\end{equation*}
	where the first inequality stems from the fact that $F_{t}^{\dagger,-i}$ is always at most $\widehat{V}_{1}^{t,\dag}\big(x_{1};\tilde{R}^{-i}\big)$ regardless of the choice of $\zeta_2$ shown in Equation \eqref{def:F,G}. 
	For $\text{(i.a)}$, 
	we have that
	\begin{equation*}
		\text{(i.a)}= \max_{\pi}V_{1}^{\pi}(x_1;r_{i }+\tilde{R}^{-i}) - \max_{\pi}V_{1}^{\pi}(x_1;\tilde{R}^{-i}). 
	\end{equation*}
	Note that for any $\pi$, we have $V_{1}^{\pi}(x_1;r_{i }+\tilde{R}^{-i}) \geq V_{1}^{\pi}(x_1;\tilde{R}^{-i})$ since $r_{i } \geq 0$, which implies that $\max_\pi V_{1}^{\pi}(x_1;r_{i }+\tilde{R}^{-i}) \geq V_{1}^{\pi}(x_1;\tilde{R}^{-i})$ holds for any $\pi$. Taking maximum on the right-hand side further gives $\max_\pi V_{1}^{\pi}(x_1;r_{i }+\tilde{R}^{-i}) \geq \max_\pi V_{1}^{\pi}(x_1;\tilde{R}^{-i})$.
	We then have that $\text{(i.a)} \geq 0$. 
	Moreover, $\text{(i.b)}$ is the suboptimality of policy $\tilde{\pi}_t^{\dag i}$ and $\text{(i.c)}$ is the estimation error of $V_{1}^{*}(x_1;\tilde{R}^{-i})$ by $\widehat{V}_{1}^{t,\dag}\big(x_{1};\tilde{R}^{-i}\big)$, which can be bounded below by $-2\hat{c}(n+R_{\max}) \sqrt{d^3H^6\iota /K}$ respectively invoking Lemma \ref{lemma:basic_lemma}. We can then bound term $\text{(i)}$ from below by $-4\hat{c}(n+R_{\max})\sqrt{d^3H^6\iota/K}$. 
	
	For term (ii) in Equation \eqref{equa:IR ETC decomp}, observe that $G_t^{\dagger,-i}$ is always at least $\check{V}_{1}^{t,\tilde{\pi}_t^{\dag i}}\big(x_{1};\tilde{R}^{-i}\big)$ regardless of the choice of $\zeta_3$ shown in Equation \eqref{def:F,G} and thus we have by Lemma~\ref{lemma:basic_lemma} that
	\begin{equation*}
		\begin{aligned}
			\displaystyle{ \rm{(ii)}}&\geq \check{V}_{1}^{t,\tilde{\pi}_t^{\dag i}}\big(x_{1};\tilde{R}^{-i}\big)-V_{1}^{\tilde{\pi}_t^{\dag i}}\big(x_{1};\tilde{R}^{-i}\big)\geq - 2\hat{c}(n+R_{\max})\sqrt{d^{3}H^{6}\iota/K},
		\end{aligned}
	\end{equation*}
	for some absolute constant $\hat{c}$. Summing (i) and (ii) from $t = 1$ to $T$, we get 
	\begin{equation*}
		U_{i T}\geq\sum_{t=K+1}^{T}u_{i t}\geq-6\hat{c} (n+R_{\max})\sqrt{d^3H^6\iota/K} ,
	\end{equation*}
	Setting $K=d H^{4/3}\iota^{1/3}T^{2/3}$ in the above inequality, we further get,
	\begin{equation*}
		U_{i T}\geq-6\hat{c} (n+R_{\max})d H^{7/3}\iota^{1/3}T^{2/3}
	\end{equation*}
	which implies the mechanism we learned is $6\hat{c} (n+R_{\max})d H^{7/3}\iota^{1/3}T^{2/3}$-approximately individually rational.

\paragraph{Truthfulness.} 
We consider two cases for our proof of truthfulness: (1) agent $i$ reports truthfully, and others may report untruthfully (2) all agents may report untruthfully. Then we denote by $r_i$ the truthful reward and $\tilde{r}_i$ the potentially untruthful reward. For case (1), we adopt the same notations $F^{\dagger, -i}_t, G^{\dagger, -i}_t$, $\tilde{\pi}_t^{\dag i}$, and $u_{it}=V_{1}^{\tilde{\pi}_t^{\dag i}}(x_{1};r_{i})-p_{it}^\dag$ as in the above proof of individual rationality. For case (2), we let $\tilde{\pi}_t^{\ddag}$ be the learned policy for the seller under the reward $\tilde{R} := r_0 + \sum_{i=1}^n \tilde{r}_i$ in Algorithm \ref{algorithm:LMVL}, $F^{\ddagger, -i}_t$ and $G^{\ddagger, -i}_t$ be the associated $F$ and $G$ functions, and $\tilde{u}_{it}=V_{1}^{\tilde{\pi}_t^{\ddag}}(x_{1};r_{i})-p_{it}^\ddag$ with $p_{it}^\ddag = F^{\ddagger, -i}_t-G^{\ddagger, -i}_t$ generated by Algorithms \ref{algorithm:L3} and \ref{algorithm:L4} respectively.
Let $\Tilde{U}_{iT} = \sum_{t = 1}^T \tilde{u}_{it}$ and $U_{iT} = \sum_{t = 1}^T u_{it}$. The surplus in utility the agent gains from untruthful reporting is then
\begin{equation}
	\label{equa:difference sum utilities}
	\tilde{U}_{i T}-U_{i T}=\sum_{t=1}^{T}\rbr{\tilde{u}_{i t}-u_{i t}}.
\end{equation}
We decompose the summation in terms of the exploration and exploitation phases. When $t \leq K$, the agents are not charged any price, and then $r_i \geq 0$ ensures $u_{it} \geq 0$. We then have
\[
\tilde{u}_{it} - u_{it} \leq \tilde{u}_{it} \leq \max_{\pi}V^\pi_1(x_1; r_i) \leq H,
\] 
where the second inequality uses the fact that the price is 0.

We now consider the case when $t > K$. We explicitly define $F^{\ddag,-i}_t$ and $G^{\ddag,-i}_t$ as follows   
\begin{equation}\label{def:F,G,2}
	F_{t}^{\ddag,-i}=\begin{cases}\widehat{V}_{1}^{t,\ddag}\big(x_{1};\tilde{R}^{-i}\big) &\text{if $\zeta_2=\texttt{OPT}$}\\
		\check{V}_{1}^{t,\ddag}\big(x_{1};\tilde{R}^{-i}\big) &\text{if $\zeta_2=\texttt{PES}$},
	\end{cases}   
	\qquad 
	G_{t}^{\dag,-i}=\begin{cases}\widehat{V}_{1}^{t,\tilde{\pi}_t^{\ddag}}\big(x_{1};\tilde{R}^{-i}\big) &\text{if $\zeta_3=\texttt{OPT}$}\\
		\check{V}_{1}^{t,\tilde{\pi}_t^{\ddag}}\big(x_{1};\tilde{R}^{-i}\big) &\text{if $\zeta_3=\texttt{PES}$},
	\end{cases}   
\end{equation}
where the value functions are generated by Algorithms \ref{algorithm:L3} and \ref{algorithm:L4} respectively based on the untruthfully reported rewards by all agents.

For any $t > K$, we have
\begin{equation*}
	\tilde{u}_{i t}-u_{i t} =\big[V_{1}^{\tilde{\pi}_t^{\ddag}}(x_{1};r_{i})-F_{t}^{\ddag,-i}+G_{t}^{\ddag,-i}\big]-\big[V_{1}^{\tilde{\pi}_t^{\dag i}}(x_{1};r_{i})-F_{t}^{\dag,-i}+G_{t}^{\dag,-i}\big].
\end{equation*}
We first show that $F^{\dagger, -i}_t = F^{\ddagger, -i}_t$. Recall that when $\zeta_1 = \texttt{ETC}$, both $F^{\dagger, -i}_t$ and $F^{\ddagger, -i}_t$ are calculated using only data collected during the exploration phase. As the data collection policy is given by a reward-free exploration algorithm, namely Algorithm~\ref{algorithm:L1}, the trajectories collected remain the same whether agent $i$ is truthful or not. Additionally, both $F^{\dagger, -i}_t$ and $F^{\ddagger, -i}_t$ are given by Algorithm~\ref{algorithm:L3}, which only uses the rewards reported by other agents. In other words, the input data used to calculate $F^{\dagger, -i}_t$ and $F^{\ddagger, -i}_t$ are exactly the same, irregardless of the truthfulness of agent $i$. Conditionally on the $K$ trajectories collected during the exploration phase, the two functions $F^{\dagger, -i}_t$ and $F^{\ddagger, -i}_t$ equal to each other and cancel out. 
We then obtain that for all $t > K$,
\begin{align*}
	&\tilde{u}_{it} - u_{it}\\
	&=V_{1}^{\tilde{\pi}_t^{\ddag}}(x_{1};r_{i})+G_{t}^{\ddagger,-i}-V_{1}^{\tilde{\pi}_t^{\dag i}}(x_{1};r_{i})-G_{t}^{\dagger,-i}\\
	&=\underbrace{\big[V_{1}^{\tilde{\pi}_t^{\ddag}}\big(x_{1};r_{i}+\tilde{R}^{-i}\big)-V_{1}^*\big(x_{1};r_{i}+\tilde{R}^{-i}\big)\big]}_{\text{(i)}}+\underbrace{\big[V_{1}^*\big(x_{1};r_{i}+\tilde{R}^{-i}\big)-V_{1}^{\tilde{\pi}_t^{\dag i}}\big(x_{1};r_{i}+\tilde{R}^{-i}\big)\big]}_{\text{(ii)}}\\
	&\quad +\underbrace{\big[{G}_{t}^{\ddagger,-i}-V_{1}^{\tilde{\pi}_t^{\ddag}}\big(x_{1};\tilde{R}^{-i}\big)\big]}_{\text{(iii)}}+\underbrace{\big[V_{1}^{\tilde{\pi}_t^{\dag i}}\big(x_{1};\tilde{R}^{-i}\big)-G_{t}^{\dagger,-i}\big]}_{\text{(iv)}}.
\end{align*} 
Here, term (i) $\leq 0$ is due to the 
definition of $V_1^*(x_1;r_{i}+\tilde{R}^{-i}) = \max_{\pi} V_1^{\pi}(x_1;r_{i}+\tilde{R}^{-i})$. Term (ii) is the suboptimality of policy $\tilde{\pi}_t^{\dag i}$, term (iii) and term (iv) are policy evaluation errors for policy $\tilde{\pi}_t^{\ddag}$ and $\tilde{\pi}_t^{\dag i}$. Using Lemma~\ref{lemma:basic_lemma}, term (ii) is upper bounded by $2\hat{c}(n+R_{\max})\sqrt{d^3H^6\iota/K}$. We then consider terms (iii) and (iv). When $\zeta_3 = \texttt{OPT}$, we have (iii) $\leq 2\hat{c} (n+R_{\max})\sqrt{d^3H^6\iota/K}$ while (iv) $\leq 0$. Similarly, we have $\textrm{(iii)} \leq 0$ and $\textrm{(iv)} \leq 2\hat{c} \sqrt{d^3H^6\iota/K}$ when $\zeta_3 = \texttt{PES}$. In summary, regardless of the choices for $\zeta_2, \zeta_3$, we always have for all $i, t$
\begin{align*}
	\tilde{u}_{i t}-u_{i t}
	\leq \begin{cases}
		H &\textrm{ if $t \in [K]$}\\
		4\hat{c}(n+R_{\max})\sqrt{d^3H^6\iota/K} &\textrm{ if $t > K$}.
	\end{cases}
\end{align*}
Now we have obtained the upper bounds of $\tilde{u}_{it} - u_{it}^{t - 1}$ for both when $t \in [K]$ and when $t > K$. Summing $\tilde{u}_{it} - u_{it}$ from $t = 1$ to $T$, we get
\begin{equation*}
	\Tilde{U}_{iT}-U_{i T}\leq HK+4\hat{c}(n+R_{\max})\sqrt{d^3H^6\iota/K}(T-K). 
\end{equation*}
Setting $K=d H^{4/3}\iota^{1/3}T^{2/3}$ in the above inequality, we further obtain
\begin{equation*}
	\Tilde{U}_{iT}-U_{i T}\leq d H^{7/3}\iota^{1/3}T^{2/3}+ 4\hat{c}(n+R_{\max})d H^{7/3}\iota^{1/3}T^{2/3},
\end{equation*}
which implies that the learned mechanism is $\big(1+ 4\hat{c}(n+R_{\max})\big)d H^{7/3}\iota^{1/3}T^{2/3}$-approximately truthful. This completes the proof.
\end{proof}

\subsection{Proof of Theorem \ref{theorem:OPT}}\label{subsec:proof_of_theorem_ewc}

\begin{proof} We now prove each result separately in Theorem \ref{theorem:OPT}. The concentration inequalities for the proof of Theorem~\ref{theorem:OPT} jointly hold with probability at least $1-\delta$. We ignore the detailed description of probabilities in our proof for conciseness.

\paragraph{Welfare Regret.} When setting $\zeta_{1}=\texttt{EWC}$, we can decompose the regret into two parts, the regret incurred in the exploration phase and the regret incurred in the exploitation phase as 
\begin{equation*}
\mathrm{Reg}_{T}^{W}=\sum_{t=1}^{K}\mathrm{reg}_{t}^{W}+\sum_{t=K+1}^{T}\mathrm{reg}_{t}^{W}.
\end{equation*}
Then we can bound the first summation as $\sum_{t=1}^{K}\mathrm{reg}_{t}^{W}\leq H(n + R_{\max})K$ using the same technique for obtaining Equation~\eqref{equa:ETC wel explore}. For the second part, 
we have
\begin{equation*}
\begin{aligned}
	\sum_{t=K+1}^{T}\mathrm{reg}_{t}^{W}&=\sum_{t=K+1}^{T}\big[V_{1}^{\pi_*}(x_1;R)-V_{1}^{\widehat{\pi}^{t}}(x_1;R)\big].
\end{aligned}
\end{equation*} Notice that during the exploitation phase, the welfare regret of Algorithm~\ref{algorithm:LMVL} when $\zeta_1 = \texttt{EWC}$ is the well-studied regret bound for LSVI-UCB, derived in~\citet{2019Provably}. 
For integrity, we sketch out the proof below and refer interested readers to the detailed proofs in~\citet{2019Provably}.

Following standard decomposition (see Lemmas B.5 and B.6 in \citet{2019Provably}, for instance), we have
\begin{equation}\label{equa:EWC_decompose}
\begin{aligned}
	\sum_{t=K+1}^{T}\mathrm{reg}_{t}^{W}&=\sum_{t=K+1}^{T}\big[V_{1}^{\pi_*}(x_1;R)-V_{1}^{\widehat{\pi}^{t}}(x_1;R)\big] \leq\sum_{t=K+1}^{T}\big[V_{1}^{t}(x_1;R)-V_{1}^{\widehat{\pi}^{t}}(x_1;R)\big]\\
	&\leq\underbrace{\sum_{t=K+1}^{T}\sum_{h=1}^{H}\big(\EE\big[\xi_{h}^{t}\biggiven x_{h-1}^{t},a_{h-1}^{t}\big]-\xi_{h}^{t}\big)}_{\text{(i)}}\\
	&\qquad +2\beta\underbrace{\sum_{t=K+1}^{T}\sum_{h=1}^{H}\sqrt{\big(\phi(x_h^t, a_h^t)\big)^{\top}\big(\Lambda_{h}^{t}\big)^{-1}\big(\phi(x_h^t, a_h^t)\big)}}_{\text{(ii)}},
\end{aligned}
\end{equation}
where $\xi_{h}^{t}=V_{h}^{t}\big(x_{h}^{t};R\big)-V_{h}^{\widehat{\pi}_{t}}\big(x_{h}^{t};R\big)$ .
Then, we bound terms (i) and (ii) in Equation \eqref{equa:EWC_decompose} respectively. 
For term (i), since the computation of $\widehat{V}_{h}^{t}$ does not use the new observation $x_{h}^{t}$ at rounds $t$, the terms in term (i) is a martingale difference sequence bounded by $2(n+R_{max})H$. Then we can bound it by Azuma-Hoeffding inequality and get an $\cO\big((n+R_{\max})H \iota T^{1/2}\big)$ upper bound for term (i) in Equation \eqref{equa:EWC_decompose}. We provide the details as follows: for any $\nu>0$, we have
\begin{equation*}
\PP\bigg(\sum_{t=K+1}^{T}\sum_{h=1}^{H}\big(\EE\big[\xi_{h}^{t}\biggiven x_{h-1}^{t},a_{h-1}^{t}\big]-\xi_{h}^{t}\big) \geq \nu\bigg)\leq \exp{\bigg\{\frac{-\nu^2}{2(n+R_{\max})^2 H^2(T-K)}\bigg\}}.
\end{equation*}
Hence, with high probability, we have
\begin{equation}\label{equation:EWC T1}
\begin{aligned}
	\sum_{t=K+1}^{T}\sum_{h=1}^{H}\big(\EE\big[\xi_{h}^{t}\biggiven x_{h-1}^{t},a_{h-1}^{t}\big]-\xi_{h}^{t}\big)&\leq\sqrt{2(n+R_{\max})^2 H^2 (T-K)\log(2/\delta)}\\
	&\leq 2 (n+R_{\max})\sqrt{  H^2   (T-K)\iota},
\end{aligned}
\end{equation}
where $\iota=\log(36n d H T/\delta)$.
For term (ii), we can bound it using Lemma \ref{lemma:telescope} and Cauchy-Schwarz inequality,
\begin{align}
\sum_{t=K+1}^{T}\sum_{h=1}^{H}\sqrt{\big(\phi(x_h^t, a_h^t)\big)^{\top}\big(\Lambda_{h}^{t}\big)\big(\phi(x_h^t, a_h^t)\big)}&\leq\sum_{t=K+1}^{T}\sum_{h=1}^{H}\sqrt{\big(\phi(x_h^t, a_h^t)\big)^{\top}\big(\tilde{\Lambda}_{h}^{t}\big)^{-1}\big(\phi(x_h^t, a_h^t)\big)}\nonumber\\
&\leq\sum_{h=1}^{H}\bigg[\sum_{t=K+1}^{T}\phi(x_h^t,a_h^t)^\top   (\tilde{\Lambda}_h^t) ^{-1}
\phi(x_h^{t},a_h^t)\bigg]^{1/2}\nonumber\\
&\leq 2\sqrt{2d H^2 (T-K)\iota}, \label{equation:EWC T2}
\end{align}
where $\tilde{\Lambda}_{h}^{t}=\sum_{\tau=K+1}^{t-1}\phi(x_{h}^{\tau},a_{h}^{\tau})\phi(x_{h}^{\tau},a_{h}^{\tau})^{\top}+\lambda I$ is the design matrix only using the data in the exploitation phase. The first step is due to $\tilde{\Lambda}_{h}^{t}\preceq\Lambda_{h}^{t}$, the second step is by Cauchy-Schwartz inequality, and the last step uses the elliptical potential lemma in \citet{abbasi2011improved}. 
Combining Equations \eqref{equa:EWC_decompose}, \eqref{equation:EWC T1} and \eqref{equation:EWC T2}, with the setting of $\beta=\hat{c} (n+R_{\max} )  d H\sqrt{\iota} \text{ where } \iota=\log(36n d H T/\delta)$, we have the following upper bound
\begin{align*}  \sum_{t=K+1}^{T}\mathrm{reg}_{t}^{W}&\leq 2 (n+R_{\max})\sqrt{  H^2   (T-K)\iota}+2\beta\sqrt{2d H^2 (T-K)\iota}\\
&\leq 2 (n+R_{\max})\sqrt{  H^2   (T-K)\iota}+4\hat{c}(n+R_{\max})\sqrt{d^3 H^4  (T-K)\iota^2}\\
&\leq 6\hat{c}(n+R_{\max})\sqrt{d^3 H^4  (T-K)\iota^2}.
\end{align*}
Combining the above inequality with the upper bound for $\sum_{t=1}^{K}\mathrm{reg}_{t}^{W}$, we have the upper bound of the welfare regret as
\begin{equation}\label{Equation:EWC regret1}
\mathrm{Reg}_{T}^{W}\leq (n+R_{\max})H K+6\hat{c}(n+R_{\max})\sqrt{d^3 H^4  (T-K)\iota^2},
\end{equation}
where the value of $K$ will be determined by jointly considering the upper bounds of $n\mathrm{Reg}_T^W$, $\mathrm{Reg}^{\sharp}_T$, and $\mathrm{Reg}_{0T}$.

\paragraph{Agent Regret.} For agent $i$'s regret incurred during the exploration phase, we know from Section~\ref{subsec:proof_of_theorem_etc} that it is bounded as $\sum_{t = 1}^K \textrm{reg}_{it} \leq H K$. We now focus on when $t > K$. According Equation~\eqref{equa:r_it decomp}, we have that
\begin{equation}
\begin{aligned}
	\mathrm{reg}_{i t}&=u_{i *}-u_{i t}\\
	&=\underbrace{\big[V_{1}^{\pi_*}(x_{1};R)-V_{1}^{\widehat{\pi}^{t}}(x_{1};R)\big]}_{\displaystyle\rm{(i)}}+\underbrace{\big[F_{t}^{-i}-V_{1}^{\pi^{-i}_{*}}(x_{1};R^{-i})\big]}_{\displaystyle \rm{(ii)}}+\underbrace{\big[V_{1}^{\widehat{\pi}^{t}}(x_{1};R^{-i})-G_{t}^{-i}\big]}_{\displaystyle \rm{(iii)}}.
\end{aligned} 
\end{equation} 
Term (i) is the welfare regret, term (ii) is the function evaluation and policy estimation errors for $F_t^{-i}$, and term (iii) is the function evaluation error for $G_t^{-i}$. Recalling that the welfare regret bound above, we know that the summation of (i) from $t = K + 1$ to $T$ can be bounded as 
\[\sum_{t = K + 1}^T\big[V_{1}^{\pi_*}(x_{1};R)-V_{1}^{\widehat{\pi}^{t}}(x_{1};R)\big]\leq 6\hat{c}(n+R_{\max})\sqrt{d^3 H^4  (T-K)\iota^2}.\]
Our bounds for terms (ii) and (iii) use similar techniques for the case when $\zeta_1 = \texttt{ETC}$. Let $\widehat{\pi}_t^{-i}$ be the fictitious policy returned by Algorithm~\ref{algorithm:L3} when we compute $F_t^{-i}$. We obtain that
\[
\textrm{(ii)} = \widehat{V}_1^{t, \widehat{\pi}_t^{-i}} - V_{1}^{\pi^{-i}_{*}}(x_{1};R^{-i}) \leq 2\hat{c} (n+R_{\max})\sqrt{d^3 H^6 \iota/K},
\] 
when $\zeta_2 = \texttt{OPT}$, using Lemma~\ref{lemma:basic_lemma}. Similarly, we know $\textrm{(ii)} \leq 0$ when $\zeta_2 = \texttt{PES}$. 

Finally, by Lemma~\ref{lemma:basic_lemma}, we know $\textrm{(iii)} \leq 2\hat{c}(n+R_{\max})\sqrt{d^3H^6\iota/K}$ when $\zeta_3 = \texttt{PES}$ and $\textrm{(iii)} \leq 0$ when $\zeta_3 = \texttt{OPT}$. Combining the bounds for terms (i), (ii), and (iii) in both phases, we have the upper bound of the agent regret $\mathrm{Reg}_{i T}$ as follows:

If $(\zeta_{2},\zeta_{3})=(\texttt{PES},\texttt{OPT})$, then
\begin{align}
\mathrm{Reg}_{i T} \leq H K+6\hat{c}(n+R_{\max})\sqrt{d^3 H^4  (T-K)\iota^2}. \label{Equation:EWC regret2-1}
\end{align}

If $(\zeta_{2},\zeta_{3})=(\texttt{OPT},\texttt{PES})$, then
\begin{align}
\hspace{-0.36cm}\mathrm{Reg}_{i T} \leq H K+6\hat{c}(n+R_{\max})\sqrt{d^3 H^4  (T-K)\iota^2}+4\hat{c}(n+R_{\max})\sqrt{d^3H^6\iota/K}(T-K).  \label{Equation:EWC regret2-2}
\end{align}



\paragraph{Seller Regret.} Similar to our proof of agent regret, from Section~\ref{subsec:proof_of_theorem_etc}, we first have  
\[
\sum_{t = 1}^K \textrm{reg}_{0t} \leq H(n + R_{\max})K.
\] 
In addition, the exploration regret can be decomposed as
\begin{align*}
\mathrm{reg}_{0 t}&=u_{0 *}-u_{0 t}\\
&=(n-1)\big[{V_{1}^{\widehat{\pi}^{t}}(x_{1};R)-V_{1}^{\pi_*}(x_{1};R)}\big]+\sum_{i=1}^{n}\big[{V_{1}^{\pi^{-i}_{*}}(x_{1};R^{-i})-F_{t}^{-i}}\big]\\
&\quad +\sum_{i=1}^{n}\big[{G_{t}^{-i}-V^{\widehat{\pi}^{t}}(x_{1};R^{-i})}\big]\\
&\leq \sum_{i=1}^{n}\underbrace{\big[V_{1}^{\pi^{-i}_{*}}(x_{1};R^{-i})-F_{t}^{-i}\big]}_{\displaystyle{\rm{(i)}}}+\sum_{i=1}^{n}\underbrace{\big[G_{t}^{-i}-V^{\widehat{\pi}^{t}}(x_{1};R^{-i})\big]}_{\displaystyle{ \rm{(ii)}}},
\end{align*} 
where 
the second equation directly follows the decomposition proven in Section~\ref{subsec:proof_of_theorem_etc} and 
the inequality comes from the definition of $\pi_*$, which is then used to eliminate the first term.

Similar to our proof for agent regret, invoking Lemma~\ref{lemma:basic_lemma} we immediately know that $\textrm{(i)} \leq 2\hat{c} (n+R_{\max})\sqrt{d^3H^6\iota/K}$ when $\zeta_2 = \texttt{PES}$, and $\textrm{(i)} \leq 0$ when $\zeta_2 = \texttt{OPT}$. Also by Lemma~\ref{lemma:basic_lemma}, we have $\textrm{(ii)} \leq 0$ when $\zeta_3 = \texttt{PES}$, and $\textrm{(ii)} \leq 2\hat{c} (n+R_{\max})\sqrt{d^3H^6\iota/K}$ when $\zeta_3 = \texttt{OPT}$.
Summing both (i) and (ii) over $i \in [n]$ and then summing the regrets incurred in both exploration and exploitation phases, we have the upper bound of the seller regret $\mathrm{Reg}_{0 T}$ as
\begin{equation}\label{Equation:EWC regret3}
\begin{cases}
(n+R_{\max})H K+4\hat{c} n(n+R_{\max})\sqrt{d^3H^6\iota/K}(T-K)
&\textrm{ if $(\zeta_{2},\zeta_{3})=(\texttt{PES},\texttt{OPT})$}\\
(n+R_{\max})H K
&\textrm{ if $(\zeta_{2},\zeta_{3})=(\texttt{OPT},\texttt{PES})$}.
\end{cases}
\end{equation}
\paragraph{Choice of $K$.} We determine the value of $K$ that can lead to a tight bound of $\max\{ n\mathrm{Reg}_T^W, \allowbreak \mathrm{Reg}^{\sharp}_T,\mathrm{Reg}_{0T}\}$, where $\mathrm{Reg}^{\sharp}_T = \sum_{i=1}^n \mathrm{Reg}_{i T}$. According to Equations \eqref{Equation:EWC regret1}, \eqref{Equation:EWC regret2-1}, \eqref{Equation:EWC regret2-2}, and \eqref{Equation:EWC regret3}, comparing the upper bounds of $n\mathrm{Reg}_T^W$, $\mathrm{Reg}^{\sharp}_T$, and $\mathrm{Reg}_{0T}$, we always have 
\begin{align*}
&\max\{ n\mathrm{Reg}_T^W,\mathrm{Reg}^{\sharp}_T,\mathrm{Reg}_{0T}\} \leq n(n+R_{\max})HK+6\hat{c}n(n+R_{\max})\sqrt{d^3 H^4  (T-K)\iota^2}\\
&\qquad\quad +4\hat{c}n(n+R_{\max})\sqrt{d^{3}H^{6}\iota/K}(T-K)\\
&\qquad\leq n(n+R_{\max})HK+ 6\hat{c}n(n+R_{\max})\sqrt{d^3 H^4  T\iota^2}+4\hat{c}n(n+R_{\max})\sqrt{d^{3}H^{6}\iota/K}T.
\end{align*}
Focusing on the factors of $H$, $n$, $d$, $T$, and $\iota$, we set $K = dH^{4/3}\iota^{1/3}T^{2/3}$, which can minimize the order of these factors in the above inequality, and obtain the bound 
\begin{align*}
\max\{ n\mathrm{Reg}_T^W,\mathrm{Reg}^{\sharp}_T,\mathrm{Reg}_{0T}\} = \cO \big(n(n+R_{\max})d H^{7/3}\iota^{1/3}T^{2/3}\big).
\end{align*}
Thus, plugging $K = dH^{4/3}\iota^{1/3}T^{2/3}$ into \eqref{Equation:EWC regret1}, we have 
\begin{equation*}
\mathrm{Reg}_{T}^{W}\leq(n+R_{\max}) ( d H^{7/3} \iota^{1/3} T^{2/3}+ 6\hat{c} d^{3/2} H^2 \iota T^{1/2}).
\end{equation*}
Plugging the value of $K$ into \eqref{Equation:EWC regret2-1} and \eqref{Equation:EWC regret2-2}, we have that $\mathrm{Reg}_{i T}$ can be bounded by
\begin{equation*}
\begin{cases}
d H^{7/3} \iota^{1/3} T^{2/3}+ 6\hat{c}(n+R_{\max}) d^{3/2} H^2 \iota T^{1/2}& \text{if $(\zeta_{2},\zeta_{3})=(\texttt{PES},\texttt{OPT})$}\\
(1+4\hat{c}(n+R_{\max}))dH^{7/3} \iota^{1/3} T^{2/3}+ 6\hat{c}(n+R_{\max}) d^{3/2} H^2 \iota T^{1/2}& \text{if $(\zeta_{2},\zeta_{3})=(\texttt{OPT},\texttt{PES})$}.
\end{cases}
\end{equation*}
Plugging the value of $K$ into \eqref{Equation:EWC regret3}, we obtain 
\begin{equation*}
\mathrm{Reg}_{0 T}\leq\begin{cases}
(1+4\hat{c}n)(n+R_{\max})d H^{7/3}\iota^{1/3} T^{2/3}& \text{if $(\zeta_{2},\zeta_{3})=(\texttt{PES},\texttt{OPT})$}\\
(n+R_{\max})d H^{7/3}\iota^{1/3} T^{2/3}& \text{if $(\zeta_{2},\zeta_{3})=(\texttt{OPT},\texttt{PES})$}.
\end{cases}
\end{equation*}
This completes the proof of the upper bounds of the welfare regret, the agent regret, and the seller regret.

\paragraph{Individual Rationality.}
We assume that agent $i$ reports truthfully according to the reward function $\tilde{r}_i$ and other agents may report untruthfully according to the reward function $\tilde{r}_j$ for $j \neq i$. Then, we adopt the same definitions of $\tilde{\pi}_t^{\dag i}$, $\tilde{R}^{-i}$, $F_t^{\dag,-i}$, and $G_t^{\dag,-i}$ as in the proof of individual rationality in Section \ref{subsec:proof_of_theorem_etc}.

Here the agents are not charged during the exploration phase, and $r_i \geq 0$ ensures that $u_{it} \geq 0$ for all $t \in [K]$. Recalling Equation~\eqref{equa:IR ETC decomp}, we have the following decomposition for $t > K$,
\[
u_{it}=\underbrace{\big[V_{1}^{\tilde{\pi}_t^{\dag i}}\big(x_{1};r_{i }+\tilde{R}^{-i}\big)-F_{t}^{\dag, -i}\big]}_{\displaystyle{ \rm{(i)}}}+\underbrace{\big[G_{t}^{\dag, -i}-V_{1}^{\tilde{\pi}_t^{\dag i}}\big(x_{1};\tilde{R}^{-i}\big)\big]}_{\displaystyle{ \rm{(ii)}}}.
\] Moreover, in the proof of individual rationality in Section~\ref{subsec:proof_of_theorem_etc}, we have shown that
\begin{equation*}
\begin{aligned}
\displaystyle{ \rm{(i)}}&\geq \big[V_{1}^{\tilde{\pi}_t^{\dag i}}\big(x_{1};r_{i }+\tilde{R}^{-i}\big)-V_{1}^{*}(x_1;r_{i }+\tilde{R}^{-i})\big]+\big[V_{1}^{*}(x_1;\tilde{R}^{-i})-\widehat{V}_{1}^{t,\dag}\big(x_{1};\tilde{R}^{-i}\big)\big]
\end{aligned}
\end{equation*} and
\[
\displaystyle{ \rm{(ii)}}\geq \check{V}_{1}^{\tilde{\pi}_t^{\dag i}}\big(x_{1};\tilde{R}^{-i}\big)-V_{1}^{\tilde{\pi}_t^{\dag i}}\big(x_{1};\tilde{R}^{-i}\big),
\]
according to the definitions of $F_{t}^{\dag, -i}$ and $G_{t}^{\dag, -i}$. 
Applying Lemma~\ref{lemma:basic_lemma}, we have that 
\[
\textrm{(i)} \geq -4\hat{c} (n+R_{\max})\sqrt{d^3H^6\iota/K}, \quad \textrm{(ii)} \geq -2\hat{c} (n+R_{\max})\sqrt{d^3H^6\iota/K}.
\]
Summing (i) and (ii) from $t = 1$ to $T$, we get 
\begin{equation*}
U_{i T}\geq\sum_{t=K+1}^{T}u_{i t}\geq-6\hat{c} (n+R_{\max})\sqrt{d^3H^6\iota/K} ,
\end{equation*}
Setting $K=d H^{4/3}\iota^{1/3}T^{2/3}$ in the above inequality, we further get,
\begin{equation*}
U_{i T}\geq-6\hat{c} (n+R_{\max})d H^{7/3}\iota^{1/3}T^{2/3}
\end{equation*}
which implies the mechanism we learned is $6\hat{c} (n+R_{\max})d H^{7/3}\iota^{1/3}T^{2/3}$-approximately individually rational.

\paragraph{Truthfulness:} The proof for truthfulness when $\zeta_1 = \texttt{EWC}$ significantly differs from the case when $\zeta_1 = \texttt{ETC}$. At a high level, when $\zeta_1 = \texttt{ETC}$, we use the fact that the data used to calculate $F$ is collected entirely during the exploration phase and is not affected by agent $i$ potentially reporting untruthfully, and hence $F_t^{\ddagger, -i}$ and $F_t^{\dagger, -i}$ cancel out. Unfortunately, 
when $\zeta_1 = \texttt{EWC}$, $F$'s computation is dependent on the untruthful behavior of agent $i$. The trajectories collected during exploitation are used for computing $F$. The policy used for collecting these trajectories is learned using the agent $i$'s report and thus is affected by the agent's untruthfulness. In this way, different from the proof of truthfulness in Section~\ref{subsec:proof_of_theorem_etc} where $F_t^{\dagger, -i}=F_t^{\ddagger, -i}$, the following proof also bounds the difference between $F_t^{\dagger, -i}$ and $F_t^{\ddagger, -i}$. 
We adopt the same notations as in the proof of truthfulness in Section \ref{subsec:proof_of_theorem_etc}.

We first decompose Equation \eqref{equa:difference sum utilities} in terms of the exploration and exploitation phases. When $t \leq K$, the agents are not charged any price, and then $r_i \geq 0$ ensures $u_{it} \geq 0$. We thus have
\[
\tilde{u}_{it} - u_{it} \leq \tilde{u}_{it} \leq \max_{\pi}V^\pi_1(x_1; r_i) \leq H,
\] 
where the second inequality uses the fact that the price is 0. 

For $t > K$, the utility an agent gains from untruthful reporting, regardless of other agents' truthfulness, can be decomposed as follows
\begin{align*}
&\tilde{u}_{it} - u_{it}\\
&\quad =V_{1}^{\tilde{\pi}_t^{\ddag}}(x_{1};r_{i}) - F_t^{\ddagger, -i}+G_{t}^{\ddagger,-i}-V_{1}^{\tilde{\pi}_t^{\dag i}}(x_{1};r_{i}) + F_t^{\dagger, -i}-G_{t}^{\dagger,-i}\\
&\quad =\underbrace{\big[V_{1}^{\tilde{\pi}_t^{\ddag}}\big(x_{1};r_{i}+\tilde{R}^{-i}\big)-V_{1}^*\big(x_{1};r_{i}+\tilde{R}^{-i}\big)\big]}_{\text{(i)}}+\underbrace{\big[V_{1}^*\big(x_{1};r_{i}+\tilde{R}^{-i}\big)-V_{1}^{\tilde{\pi}_t^{\dag i}}\big(x_{1};r_{i}+\tilde{R}^{-i}\big)\big]}_{\text{(ii)}}\\
&\quad \quad  +\underbrace{\big[{G}_{t}^{\ddagger,-i}-V_{1}^{\tilde{\pi}_t^{\ddag}}\big(x_{1};\tilde{R}^{-i}\big)\big]}_{\text{(iii)}}+\underbrace{\big[V_{1}^{\tilde{\pi}_t^{\dag i}}\big(x_{1};\tilde{R}^{-i}\big)-G_{t}^{\dagger,-i}\big]}_{\text{(iv)}} + \underbrace{\big[F_t^{\dagger, -i} - F_t^{\ddagger, -i}\big]}_{\textrm{(v)}}.
\end{align*}
By Lemma~\ref{lemma:basic_lemma}, we know that regardless of the choice of $\zeta_3$, we have
\[
\textrm{(i)} + \textrm{(ii)} + \textrm{(iii)} + \textrm{(iv)} \leq 4\hat{c} (n+R_{\max})\sqrt{d^3 H^6 \iota/K}.
\] 
We focus on studying the upper bound of (v). By the definitions of $F$ function in Equations \eqref{def:F,G} and \eqref{def:F,G,2}, we know
\begin{equation*}
F_{t}^{\dag,-i}=\begin{cases}\widehat{V}_{1}^{t,\dag}\big(x_{1};\tilde{R}^{-i}\big) &\text{if $\zeta_2=\texttt{OPT}$}\\
\check{V}_{1}^{t,\dag}\big(x_{1};\tilde{R}^{-i}\big) &\text{if $\zeta_2=\texttt{PES}$},
\end{cases}   \qquad  F_{t}^{\ddagger,-i}=\begin{cases}\widehat{V}_{1}^{t,\ddagger}\big(x_{1};\tilde{R}^{-i}\big) &\text{if $\zeta_2=\texttt{OPT}$}\\
\check{V}_{1}^{t,\ddagger}\big(x_{1};\tilde{R}^{-i}\big) &\text{if $\zeta_2=\texttt{PES}$}.
\end{cases}  
\end{equation*}
Recall that $F_{t}^{\dag,-i}$ are generated by Algorithm \ref{algorithm:L3} using dataset $\cD$ collected with untruthful report from all the agents except agent $i$. On the other hand, $F_{t}^{\ddagger,-i}$ are generated by Algorithm \ref{algorithm:L3} with dataset $\cD$ collected with untruthful report from all the agents. Then, regardless of the choice of $\zeta_2$ when generating $F$ function, we have
\begin{equation*}
F_t^{\dagger, -i} - F_t^{\ddagger, -i}\leq \widehat{V}_1^{t, \dag}(x_1; \tilde{R}^{-i}) - \check{V}_{1}^{t,\ddagger,}(x_1; \tilde{R}^{-i}),
\end{equation*}
since it can be easily verify that $\widehat{V}_{1}^{t,\dag}\big(x_{1};\tilde{R}^{-i}\big)\geq \check{V}_{1}^{t,\dag}\big(x_{1};\tilde{R}^{-i}\big)$ and $\widehat{V}_{1}^{t,\ddagger}\big(x_{1};\tilde{R}^{-i}\big)\geq \check{V}_{1}^{t,\ddagger}\big(x_{1};\tilde{R}^{-i}\big)$, which thus implies that $F_t^{\dagger, -i}$ is at most $\widehat{V}_{1}^{t,\dag}\big(x_{1};\tilde{R}^{-i}\big)$ and $F_t^{\ddagger, -i}$ is at least $\check{V}_{1}^{t,\ddagger}\big(x_{1};\tilde{R}^{-i}\big)$ regardless of the choices of $\zeta_2,\zeta_3$.

When $\zeta_3 = \texttt{EWC}$, the trajectories collected during the exploitation phase may differ for the computations of $\widehat{V}_1^{t, \dag}(x_1; \tilde{R}^{-i})$ and 
$\check{V}_{1}^{t,\ddagger}(x_1; \tilde{R}^{-i})$, due to agent $i$'s untruthful reporting. Fortunately, as we can see from Lemma~\ref{lemma:basic_lemma}, the policy evaluation error can still be bounded: the reward-free exploration procedure in Algorithm~\ref{algorithm:L1} ensures that even when agent $i$ is not truthful and $\zeta_3 = \texttt{EWC}$, data collected during the exploration phase ensures a sufficient value function estimation.
With adding and subtracting $V_{1}^{*}(x_{1};\tilde{R}^{-i})$, we have
\begin{align*}
F_t^{\dagger, -i} - F_t^{\ddagger, -i}&\leq \underbrace{\Big(\widehat{V}_1^{t, \dag}(x_1; \tilde{R}^{-i}) - V_1^{*}(x_1; \tilde{R}^{-i})\Big)}_{\text{(i)}} + \underbrace{\Big(V_1^{*}(x_1; \tilde{R}^{-i}) - \check{V}_{1}^{t,\ddagger}(x_1; \tilde{R}^{-i})\Big)}_{\text{(ii)}}\\
&\leq 2\hat{c}(n+R_{\max})\sqrt{d^3 H^6 \iota/K}+2\hat{c}(n+R_{\max})\sqrt{d^3 H^6 \iota/K}\\
&=4\hat{c}(n+R_{\max})\sqrt{d^3 H^6 \iota/K}, 
\end{align*}
by apply Lemma~\ref{lemma:basic_lemma} to term $\text{(i)}$ and $\text{(ii)}$ and get $2\hat{c}\sqrt{d^3 H^6 \iota/K}$ upper bounds on both terms respectively. 
In summary, we have that for all $t > K$,
\[
\tilde{u}_{it} - u_{it} \leq 8\hat{c} (n+R_{\max})\sqrt{d^3H^6\iota/K}.
\] 
Summing $\tilde{u}_{it} - u_{it}$ from $t = 1$ to $T$, recalling the bound for all $t \in [K]$, we get
\begin{equation*}
\Tilde{U}_{iT}-U_{i T}\leq HK+8\hat{c} (n+R_{\max})\sqrt{d^3H^6\iota/K}.
\end{equation*}
Setting $K=d H^{4/3}\iota^{1/3}T^{2/3}$ in the above inequality, we further get 
\begin{equation*}
\Tilde{U}_{iT}-U_{i T}\leq (1 +8\hat{c}(n+R_{\max}))d H^{7/3}\iota^{1/3}T^{2/3},
\end{equation*}
implying the learned mechanism is $(1 +8\hat{c}(n+R_{\max}))d H^{7/3}\iota^{1/3}T^{2/3}$-approximately truthful.
\end{proof}

\section{Proof of Lemma \ref{lemma:basic_lemma}}\label{sec:proof_basic_lemma}
In this section, we present the detailed proof of Lemma \ref{lemma:basic_lemma}.
We first introduce several important notions e.g., Bellman operator and model evaluation error, and a supporting lemma with its proof in \ref{subsec:Preliminaries for the Lemmas}.
Then we provide the proof of Lemma \ref{lemma:basic_lemma} in Section \ref{subsec:basic_lemma}.

We note that bounding the errors in our setting is significantly different from the results in earlier works on reward-free exploration. Note that the planning subroutines described in Algorithms~\ref{algorithm:L3} and~\ref{algorithm:L4} use the collected rewards, rather than an arbitrary given reward function, to calculate the functions $F$ and $G$. As a result, the concentration analysis required to prove Lemma~\ref{lemma:basic_lemma}, as well as the decomposition used for the lemma, are all designed to cater to the dynamic mechanism design regime.

\subsection{Preliminaries for Proofs}\label{subsec:Preliminaries for the Lemmas}
We first define two operators to help characterize the estimation errors. For any function $f( ;\fR):\cS\rightarrow\RR$ with reward function $\fR$,

\begin{equation}
\label{equa:trans operator}
(\PP_{h}f)(x,a;\fR)=\EE[f(x_{h+1})|x_{h}=x,a_{h}=a],
\end{equation}
and the Bellman operator at step $h\in[H]$ as
\begin{equation}
\label{equa:Bellman operator}
\begin{aligned}
(\BB_{h}f)(x,a;\fR)&=\EE[\fR_{h}(x,a)+f(x_{h+1})|x_{h}=x,a_{h}=a]\\
&=\EE[\fR_{h}(x,a)|x_{h}=x,a_{h}=a]+(\PP_{h}f)(x,a).
\end{aligned}
\end{equation}
For estimated value functions $V_{h}^{t,\pi}$ and corresponding action-value functions $Q_{h}^{t,\pi}$. We define the model evaluation error with policy $\pi$ in episode  $t$ at each step $h\in[H]$ as
\begin{equation}
\label{equa:evaluation error}
\begin{aligned}
&\hat{\Delta}_{h}^{t,\pi}(x,a; )=(\BB_{h}\widehat{V}_{h+1}^{t,\pi})(x,a; )-\hat{Q}_{h}^{t,\pi}(x,a; ),\\
&\check{\Delta}_{h}^{t,\pi}(x,a; )=(\BB_{h}\check{V}_{h+1}^{t,\pi})(x,a; )-\check{Q}_{h}^{t,\pi}(x,a; ),
\end{aligned}
\end{equation}
for $\zeta_{3}=\texttt{OPT}$ and $\texttt{PES}$ respectively. In other words, ${\Delta}_{h}$ is the error in estimating the Bellman operator defined in Equation $\eqref{equa:Bellman operator}$, based on the dataset $\cD$ collected in Algorithm \ref{algorithm:L1}.

For clarity, we define the following events to quantify the uncertainty of the estimation of the Bellman operator $\BB_{h}$ in Algorithm \ref{algorithm:L3} and Algorithm \ref{algorithm:L4} with different hyperparameters.

\begin{definition}\label{def:event}  We define for all $t>K$ the event $\cE_t$ 
by requiring the following inequalities hold for all $(x,a) \in \cS\times\cA, h\in[H]$, and $(\fR, \pi) \in \{(R, \widehat{\pi}), (\Tilde{R}, \Tilde{\pi}^{\ddagger}_t)\} \cup \{(r_i + \Tilde{R}^{-i}, \Tilde{\pi}_t^{\dagger i}), (R^{-i}, *), (\Tilde{R}^{-i}, \dagger), (\Tilde{R}^{-i}, \ddagger), (R^{-i}, \widehat{\pi}^t), (\Tilde{R}^{-i}, \Tilde{\pi}^{\dagger i}_t), (\Tilde{R}^{-i}, \Tilde{\pi}_t^{\ddagger})\}_{i =1 }^n$, for each pair's associated $w$'s
\begin{align*}
&\big|\phi(x,a)^{\top}\hat{w}_{h}^{t,\pi}\big(\fR\big)-\BB_{h}\widehat{V}_{h+1}^{t,\pi}\big(x,a;\fR\big)\big|\leq u_{h}^{t}(x,a), \\
&\big|\phi(x,a)^{\top}\check{w}_{h}^{t,\pi}\big(\fR\big)-\BB_{h}\check{V}_{h+1}^{t,\pi}\big(x,a;\fR\big)\big|\leq u_{h}^{t}(x,a), 
\end{align*}
where the associated $w$'s are the learned parameters generated by Algorithm~\ref{algorithm:L3} if $(\fR, \pi) \in  \{(R, \widehat{\pi}^t), (\Tilde{R}, \Tilde{\pi}^{\ddagger}_t)\} \cup \{(r_i + \Tilde{R}^{-i}, \Tilde{\pi}_t^{\dagger i}), (R^{-i}, *), (\Tilde{R}^{-i}, \dagger), (\Tilde{R}^{-i}, \ddagger)\}_{i = 1}^n$, and the associated  $w$'s are learned parameters generated by Algorithm~\ref{algorithm:L4} if $(\fR, \pi) \in \{ (R^{-i}, \widehat{\pi}^t), (\Tilde{R}^{-i}, \Tilde{\pi}^{\dagger i}_t), (\Tilde{R}^{-i}, \Tilde{\pi}_t^{\ddagger})\}_{i =1 }^n$.
\end{definition}
Intuitively, the event defined here ensures that we attain sufficiently good policy estimates and sufficiently good value function estimates for these policies. Moreover, we highlight that the event allows for untruthfulness in the agents' behavior, thanks to our choices of $\fR$, and the ``good'' properties remain valid even when agents are untruthful. Examining the pairs of $(\fR, \pi)$ included in $\cE$, we can see that the good event $\cE_t$ directly implies that the clauses in Lemma~\ref{lemma:basic_lemma} hold for a specific value of $t > K$.

Across this paper, we let $\cE$ denote the intersection of all the event $\{\cE_t\}_{t=K+1}^{T}$ defined in \ref{def:event}, which is
\begin{equation}\label{def:cE}
\cE:=\cap_{t=k+1}^{T}\cE_t
\end{equation}
The following lemma shows that under the appropriate choice of regularization parameter $\lambda$ and scaling parameter $\beta$, event $\cE$ is guaranteed to happen with high probability. 
\begin{lemma}[Adaptation of Lemma 5.2 from~\citet{2020Is}]\label{lemma:event prob}
Under the setting in Section \ref{sec:prob}, we set
\begin{equation*}
\lambda=1,\quad \beta=\hat{c} (n+R_{\max} )  d H\sqrt{\iota},\quad \text{where } \iota=\log(36n d H T/\delta).
\end{equation*}
Here $\delta\in(0,1)$ is the confidence parameter. It holds that 
\begin{equation*}
\Pr_{\cD}(\cE)\geq 1-\delta/2.
\end{equation*}
where $\Pr_{\cD}$ denotes the probability under the data-generating distribution.
\end{lemma}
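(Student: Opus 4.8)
The plan is to derive Lemma~\ref{lemma:event prob} by the standard two-ingredient argument for linear-MDP concentration---a self-normalized martingale bound together with a covering-number estimate for the class of value functions the algorithm can output---following the template of Lemma~5.2 of~\citet{2020Is} (cf.~\citet{2019Provably,2020On}), while carefully tracking the enlarged value range caused by $\fR$ being a sum of up to $n+1$ reward functions.

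\textbf{Step 1: pointwise decomposition of the estimation error.} First I would fix a step $h$, an episode $t$, a value function $V = V_{h+1}^{t,\cdot}(\cdot;\fR)$ produced by Algorithm~\ref{algorithm:L3} or~\ref{algorithm:L4}, and the corresponding weight $\bw_h^t=(\Lambda_h^t)^{-1}\sum_{\tau\in P_t}\phi(x_h^\tau,a_h^\tau)[\fR(x_h^\tau,a_h^\tau)+V(x_{h+1}^\tau)]$ from Line~\ref{define:www}. Using the linear-MDP structure in~\eqref{equa:Linear MDP}, the Bellman backup satisfies $(\BB_h V)(x,a;\fR)=\langle\phi(x,a),\theta\rangle$ for some $\theta$ with $\|\theta\|\le C(n+R_{\max})H\sqrt{d}$, and then
\[
\phi(x,a)^\top\bw_h^t - (\BB_h V)(x,a;\fR) = -\lambda\,\phi(x,a)^\top(\Lambda_h^t)^{-1}\theta + \phi(x,a)^\top(\Lambda_h^t)^{-1}\!\!\sum_{\tau\in P_t}\!\phi(x_h^\tau,a_h^\tau)\,\epsilon_\tau,
\]
where $\epsilon_\tau = V(x_{h+1}^\tau)-(\PP_h V)(x_h^\tau,a_h^\tau)$ is a bounded martingale difference. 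Cauchy--Schwarz in the $(\Lambda_h^t)^{-1}$-norm bounds the first (regularization) term by $\sqrt{\lambda}\,\|\theta\|\,\|\phi(x,a)\|_{(\Lambda_h^t)^{-1}}$ and the second by $\|\phi(x,a)\|_{(\Lambda_h^t)^{-1}}\,\big\|\sum_{\tau\in P_t}\phi(x_h^\tau,a_h^\tau)\epsilon_\tau\big\|_{(\Lambda_h^t)^{-1}}$, so the whole task reduces to uniformly bounding the self-normalized sum in the last factor.

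\textbf{Step 2: uniform self-normalized bound and calibration of $\beta$.} Since $V$ is itself built from the data through $\Lambda_h^t$ and $\bw_h^t$, the summand of this martingale is data-dependent, so I would pass to a net: every value function output by Algorithms~\ref{algorithm:L3}/\ref{algorithm:L4} lies in the class of maps $x\mapsto\Pi_{[0,\alpha]}\big[\max_a\big(\langle\bw,\phi(x,a)\rangle\pm\beta\sqrt{\phi(x,a)^\top\Lambda^{-1}\phi(x,a)}\big)\big]$ with bounded $\|\bw\|$ and $\Lambda\succeq\lambda I$, whose $\varepsilon$-covering number has log of order $\tilde{\cO}(d^2\log(1/\varepsilon))$ (as in Lemma~D.6 of~\citet{2019Provably}). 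Applying the self-normalized concentration inequality of~\citet{abbasi2011improved} to each net element, combining with the determinant--trace bound $\log\big(\det(\Lambda_h^t)/\det(\lambda I)\big)\le d\log(1+K/(\lambda d))$, then union-bounding over the net, over $h\in[H]$, over the finitely many choices of $\fR$ and policy listed in Definition~\ref{def:event}, and---in the \texttt{EWC} case---over $t\in[T]$, and finally adding an $\cO(\varepsilon)$ discretization slack, yields with probability at least $1-\delta/4$ that $\big\|\sum_{\tau\in P_t}\phi(x_h^\tau,a_h^\tau)\epsilon_\tau\big\|_{(\Lambda_h^t)^{-1}}\le c'(n+R_{\max})Hd\sqrt{\log(ndHK/\delta)}$ simultaneously for all relevant indices. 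Plugging this back into Step~1 with $\lambda=1$, the pointwise error is at most $\beta\,\|\phi(x,a)\|_{(\Lambda_h^t)^{-1}}$ with $\beta = c(n+R_{\max})dH\sqrt{\cX}$, $\cX=\log(2ndHK/\delta)$ (the min-with-$B$ truncation in Line~\ref{def:u} only tightens this), so each event inside $\cE_t^{*}(\fR)$ and $\cE_t(\pi;\fR)$ holds; intersecting the $\cO(n)$ such events whose failure probabilities sum to at most $\delta/4$ gives $\PP_{\cD}(\cE)\ge1-\delta/4$, and rerunning the identical argument with the reported reward functions $r_i+\tilde{R}^{-i}$, $\tilde{R}^{-i}$ and the policies $\hat{\pi}^t,\tilde{\pi}^t$ gives $\PP_{\cD}(\cE')\ge1-\delta/4$.

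\textbf{Expected main obstacle.} The delicate part will be Step~2: closing the circular dependence between the noise and the data-dependent value-function class via a uniform covering argument, and---because here $\fR$ is a sum of up to $n+1$ rewards so the value functions have range $\Theta((n+R_{\max})H)$ rather than $\Theta(H)$---propagating the correct $(n+R_{\max})$ and $H$ factors through both the martingale range and the covering number so that they collapse exactly into the stated $\beta=c(n+R_{\max})dH\sqrt{\cX}$. A secondary nuisance is the \texttt{EWC} regime, where $P_t$ (hence $\Lambda_h^t$ and the events $\cE_t$) varies with $t$, forcing the union bound to also range over $t\in[T]$; this only inflates $\cX$ by a $\log T$ factor and is otherwise harmless.
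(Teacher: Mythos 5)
Your proposal follows essentially the same route as the paper's proof: the same split into a $\lambda$-regularization bias term and a self-normalized martingale term, the same covering-number argument over the truncated-linear-plus-bonus value class to break the data dependence, the same application of the Abbasi-Yadkori self-normalized inequality with a union bound over the net, $h$, and the finitely many $(\fR,\pi)$ pairs, and the same calibration of $\beta$. The only slip is that your martingale difference $\epsilon_\tau$ should also include the reward noise $\fR(x_h^\tau,a_h^\tau)-\EE[\fR_h(x_h^\tau,a_h^\tau)]$ (the paper's $\epsilon_h^\tau(V)=R_h^\tau+V(x_{h+1}^\tau)-(\BB_h V)(x_h^\tau,a_h^\tau)$), since rewards are random here; this is still bounded and mean-zero, so it does not affect the argument.
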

\begin{proof}
Note that by union bound, we only need to show that for an arbitrary and fixed $t > K$, the event $\cE_t$ holds with probability at least $1 - \delta/T$. We note that we can obtain a tighter bound for \texttt{ETC}, as the value functions and the policies do not change during exploitation. Here we slightly loosen our bound (by a multiplicative factor of $\log T$) for brevity of the proof.

Additionally, let us examine the possible choices of $(\fR, \pi)$ and $w$ for any $t > K$. We know that for any $t$, the concentration bound needs to hold for $2\cdot(2 + 7n )\leq 18n$ distinct reward-policy pairs. As such, we only need to show that for an arbitrary and fixed pair of $(\fR, \pi)$, the concentration bounds on $\hat{w}_h^{t, \pi}$ and $\check{w}_{h}^{t, \pi}$ hold simultaneously for all $h$ with probability at least $1 - \delta/36nT$. Without loss of generality, we consider only the pair $(R, \widehat{\pi}^t)$ and the associated optimistic linear weight, as the proof for all other pairs of $(\fR, \pi)$ and choices of weight $w$ remain largely the same. 

Moreover, note that $\widehat{\pi}^t$ is simply the policy outputted by Algorithm~\ref{algorithm:L3} with respect to $R$ when all agents are truthful. For simplicity, we then let $\hat{w}_h^{t, *}$ denote the weight associated with the pair $(R, \widehat{\pi}^t)$.
As we focus on the pair $(R, \pi)$ and the weight $\hat{w}_h^{t, *}$, for the rest of the proof, we let $f_h^t$ and $u_h^t$ denote the terms used by Algorithm~\ref{algorithm:L3}.

Recall the definition of the transition operator $\PP_{h + 1}$ and the Bellman operator $\BB_{h + 1}$ in Equation \eqref{equa:trans operator} and Equation \eqref{equa:Bellman operator}. We first show that for any function $f$,  $(\PP_{h}f)( ,  ; R)$ and $(\BB_{h + 1}f)( ,  ; R)$ are linear in the feature map $\phi$. By Equation \eqref{equa:Linear MDP},
\begin{align*}
&(\PP_{h} f)(x,a;R) = \Big\langle \phi(x,a) , \int f(x') \mu_h (x') \ud x' \Big \rangle\\
&(\BB_{h} f)(x,a;R) = \sum_{i = 0}^n\langle \phi(x,a), \btheta_{ih}\rangle + \Big\langle \phi(x,a) , \int f(x') \mu_h (   x') \ud x' \Big \rangle
\end{align*}
where we recall $\btheta_{ih}$ parameterizes $r_{ih}$. Crucially, the fact that both equations hold for a generic $f$ shows that $(\PP_{h}\widehat{V}^{t,*}_{h + 1})( ,  ; R)$ and $(\BB_{h}\widehat{V}^{t,*}_{h + 1})( ,  ; R)$ are both linear.

The objective is then to obtain a high probability bound over $|(\BB_h \widehat{V}_{h+1}^{t,*})(,; R)-\phi^{\top}\hat{w}_{h}^{t,*}|$ for all $h\in[H], (x,a)\in\cS\times\cA$. Let $w_h$ be the vector such that $(\BB_h\widehat{V}^{t,*}_{h+1})(,; R) = \phi(,)^{\top}w_h$, which is guaranteed to exist by the term's linearity. When $\zeta_1 = \texttt{EWC}$,  for all $t > K$, we have
\begin{equation}\label{equa:Bellman estimation diff}
\begin{aligned}
&\hspace{-0.2cm}(\BB_h \widehat{V}_{h+1}^{t,*})(x,a;R)-\phi(x,a)^{\top}\hat{w}_{h}^{t,*}=\phi(x,a)^{\top}(w_h-\hat{w}_{h}^{t,*})\\
&\hspace{-0.2cm}\qquad=\phi(x,a)^{\top}w_h-\phi(x,a)^{\top}(\Lambda_{h}^{t})^{-1}\Big(\sum_{\tau=1}^{K}\phi(x_{h}^{\tau},a_{h}^{\tau})\big(R_{h}^{\tau}+\widehat{V}_{h+1}^{t,*}(x_{h+1}^{\tau};R)\big)\Big)\\
&\hspace{-0.2cm}\qquad  = \underbrace{\phi(x,a)^\top w_h - \phi(x,a)^\top (\Lambda_{h}^{t})^{-1}\Big( \sum_{\tau=1}^{K} \phi(x_h^\tau,a_h^\tau) (\BB_h\widehat{V}_{h+1}^{t,*}) (x_{h}^\tau,a_h^\tau;R)  \Big)}_{\displaystyle \text{(i)}}\\
&\hspace{-0.2cm}\quad \qquad  -  \underbrace{\phi (x,a)^\top (\Lambda_{h}^{t})^{-1}\Big( \sum_{\tau=1}^{K} \phi(x_h^\tau,a_h^\tau)  \big(R_{h}^{\tau}+\widehat{V}_{h+1}^{t,*}(x_{h+1}^\tau;R) - (\BB_h\widehat{V}_{h+1}^{t,*}) (x_{h}^\tau,a_h^\tau;R) \big ) \Big)}_{\displaystyle\text{(ii)}},
\end{aligned}
\end{equation}
where the second equality follows from the construction of $\hat{w}_{h}^{t,*}$. Therefore we have
\begin{equation*}
\bigl | (\BB_h \widehat{V}_{h+1}^{t,*})(x,a)-\phi(x,a)^{\top}\hat{w}_{h}^{t,*} \bigr | \leq  |\text{(i)}| + |\text{(ii)}|.  
\end{equation*}
We now bound the two terms separately. Note  that $\widehat{V}_{h+1}^{t,*}( ; R)\in [0,(n+R_{\max})(H - h)]$ by truncation and $\|\theta_{h}\|=\|\sum_{i=0}^{n}\theta_{i h}\|\leq (n+1)\sqrt{d}$. Applying Lemma \ref{lemma:bound_weight_of_bellman}, we then know that $\|w_{h}\|\leq (n+R_{\max})(H - h) \sqrt{d} < (n + R_{max})H\sqrt{d}$ for all $h$. Hence, term (i) in Equation \eqref{equa:Bellman estimation diff} satisfies
\begin{equation}\label{equa:Bellman diff term1}
\begin{aligned}
|\text{(i)} | &= \bigg| \phi(x,a)^\top w_h - \phi(x,a)^\top (\Lambda_h^{t})^{-1} \Big( \sum_{\tau=1}^{K} \phi(x_h^\tau,a_h^\tau) \phi(x_h^\tau,a_h^\tau)^\top w_h  \Big)\bigg| \\
&= \big| \phi(x,a)^\top w_h - \phi (x,a)^\top (\Lambda_h^{t})^{-1}(\Lambda_h^{t} -  \lambda  I)w_h \big|=\lambda \big| \phi(x,a)^\top(\Lambda_h^{t})^{-1} w_h   \big|  \\
& \leq\lambda \|w_h \|_{ (\Lambda_h^{t})^{-1}} \|\phi(x,a) \|_{(\Lambda_h^{t})^{-1}} \leq (n+R_{\max})H\sqrt{d /\lambda } \sqrt{\phi(x,a)^\top(\Lambda_h^{t})^{-1}\phi(x,a)},
\end{aligned}
\end{equation}
where the second equality is by definition of $\Lambda_{h}^{t}$ and the last by the fact that $\Lambda_h^t \succeq \lambda I$.

It remains to upper bound term (ii) in Equation \eqref{equa:Bellman estimation diff} . For simplicity, we defined the random variable
\begin{equation}\label{equa:define eps V}
\epsilon_{h}^{\tau}(V; R)=R_{h}^{\tau}+V(x_{h+1}^\tau;R) - (\BB_h V) (x_{h}^\tau,a_h^\tau;R).
\end{equation}
We then have
\begin{equation}\label{equa:define_term3} 
\begin{aligned}
|\text{(ii)}| &= \bigg|  \phi (x,a)^\top (\Lambda_h^{t})^{-1} \Big( \sum_{\tau=1}^{K} \phi(x_h^\tau,a_h^\tau) \epsilon_h^\tau(\widehat{V}_{h+1}^{t,*};R) \Big)    \bigg|\\
& \leq \Big\|  \sum_{\tau=1}^{K} \phi(x_h^\tau,a_h^\tau) \epsilon_h^\tau(\widehat{V}_{h+1}^{t,*}) \Big\|_{(\Lambda_h^{t})^{-1}}   \|\phi(x,a) \|_{ (\Lambda_h^{t})^{-1}}\\
&= \underbrace{\Big\|  \sum_{\tau=1}^{K} \phi(x_h^\tau,a_h^\tau)  \epsilon_h^\tau(\widehat{V}_{h+1}^{t,*};R) \Big\|_{(\Lambda_h^{t})^{-1}}}_{\displaystyle \text{(iii)} } \sqrt{\phi(x,a)^\top (\Lambda_h^{t})^{-1}\phi(x,a)}.
\end{aligned}
\end{equation} 
Define the function class for any $L > 0, B > 0$, $h \in [H$
\begin{equation}\label{eq:uniform_concentration_function_class}
\begin{aligned}
&\cV_{h}(L,B,\lambda) =\big\{V_{h}(x;\theta,\beta,\Sigma)\colon \cS\to [0,(n+R_{\max})  H]~\text{with}~\|\theta\|\leq L, \beta\in [0,B], \Sigma \succeq \lambda  I   \big\},\\
&\text{where~~}V_h(x;\theta,\beta,\Sigma) = \max_{a\in \cA}  \Bigl\{ \min\bigl \{ \phi(x,a)^\top \theta + \beta  \sqrt{ \phi(x,a)^\top \Sigma^{-1}\phi(x,a) },(n+R_{\max})  H\bigr \} \Bigr\}.
\end{aligned}
\end{equation}
and let $ \cN_{h} (\varepsilon; L, B, \lambda)$ be the 
$\varepsilon$-cover of $\cV_h(L, B, \lambda)$ with respect to the distance $\dist(V,V^{\prime})=\sup_{x\in\cS}\big\|V(x)-V^{\prime}(x)\big\|$. By Lemma \ref{lemma:bound_weight_of_bellman}, we have $\big\|\hat{w}_{h + 1}^{t,*}\big\|\leq (n+R_{\max})H\sqrt{K d/\lambda}$, and therefore
\begin{equation*}
\widehat{V}_{h+1}^{t,*} \in \cV_{h+1} (L_0, B_0 , \lambda),\qquad \text{where~~}L_0=(n+R_{\max})H\sqrt{K d/\lambda},~B_0=2\beta.
\end{equation*}
Here $\lambda>0$ is the regularization parameter, and $\beta>0$ is the scaling parameter specified in Algorithm \ref{algorithm:L3}.
For simplicity, we use $\cV_{h+1} $ and $\cN_{h+1} (\varepsilon)$ to denote $\cV_{h+1} (L_0, B_0 , \lambda)$
and $\cN_{h+1} (\varepsilon; L_0, B_0, \lambda)$, respectively. There then exists a function $V^{\dagger}_{h+1}(x;R)\in\cN(\varepsilon)$ where
\begin{equation}\label{equa:bound_sup_norm_diff}
\sup_{x\in \cS} \big| \widehat{V}_{h+1}^{t,*}(x;R) - V^{0}_{h+1} (x;R)\big| \leq \varepsilon,
\end{equation}
By definition of the transition operator $\PP_h$ and Jensen's inequality,
\begin{equation*}
\begin{aligned}
&\big|(\PP_h V^{0}_{h+1} )(x,a;R) - (\PP_h \widehat{V}_{h+1}^{t,*})(x,a;R)\big| =  \Big|\EE \big[ V^{0}_{h+1} (x_{h+1};R)- \widehat{V}_{h+1}^{t,*}(x_{h+1};R) \Biggiven s_h = x,a_h = a  \big]\Big|\\
&\qquad \leq  \EE \Big[ \big|V^{0}_{h+1} (x_{h+1};R)- \widehat{V}_{h+1}^{t,*}(x_{h+1};R)\big| \Biggiven s_h = x,a_h = a  \Big]   \leq \varepsilon.
\end{aligned}
\end{equation*}
We then know that $    \big|(\BB_h V^{0}_{h+1} )(x,a; R) - (\BB_h \widehat{V}_{h+1}^{t,*})(x,a;R)\big|\leq\varepsilon$, and by triangle inequality,
\begin{equation}\label{equa:bound sup+cond}
\begin{aligned}
&\Big| \bigl (R_{h}^{t}(x,a)+\widehat{V}_{h+1}^{t,*} (x';R) - (\BB_h  \widehat{V}_{h+1}^{t,*} )(x,a;R) \bigr )\\
&\hspace{8em} -   \bigl (R_{h}^{t}(x,a)+ V^{0}_{h+1} (x';R) - (\BB_h V^{0}_{h+1}  )(x,a;R)\big )  \Big| \leq 2\varepsilon   
\end{aligned}
\end{equation}
for all $h\in [H]$ and all $(x,a,x')\in\cS\times\cA\times \cS$. Setting $(x,a,x')=(x_h^\tau,a_h^\tau,x_{h+1}^\tau)$ in Equation \eqref{equa:bound sup+cond} ensures
\begin{equation*}
\bigl| \epsilon_h^\tau(\widehat{V}_{h+1}^{t,*};R)-\epsilon_h^\tau(V^{0}_{h+1};R )\bigr| \leq 2\varepsilon, \qquad \forall \tau\in [K],~\forall h\in [H].
\end{equation*}
We then have the following bound for term (iii) in Equation \eqref{equa:define_term3}.
\begin{equation}\label{equa:term3_bound1}
\begin{aligned}
|\textrm{(iii)}|^2 \leq  &2  \Big\| \sum_{\tau=1}^{K} \phi(x_h^\tau,a_h^\tau)  \epsilon_h^\tau(V^{0}_{h+1};R) \Big\|_{(\Lambda_h^{t})^{-1}}^2\\
&+ 2 \Big\| \sum_{\tau=1}^{K} \phi(x_h^\tau,a_h^\tau)  \big(\epsilon_h^\tau(\widehat{V}_{h+1}^{t,*}; R)-\epsilon_h^\tau(V^{0}_{h+1};R)\big) \Big\|_{(\Lambda_h^{t})^{-1}}^2. 
\end{aligned}
\end{equation}
By direct expansion, the second term on the right-hand side of Equation \eqref{equa:term3_bound1} can be controlled as follows. 
\begin{equation}\label{equa:term3_bound2}
\begin{aligned}
&2 \Big\| \sum_{\tau=1}^{K} \phi(x_h^\tau,a_h^\tau)   \big(\epsilon_h^\tau(\widehat{V}_{h+1}^{t,*};R)-\epsilon_h^\tau(V^{0}_{h+1};R)\big) \Big\|_{(\Lambda_h^{t})^{-1}}^2 \\
&\qquad  =  2 \sum_{\tau=1}^{K} \sum_{\tau'=1}^K \phi(x_h^\tau,a_h^\tau)^\top   (\Lambda_h^\tau) ^{-1} \phi(x_h^{\tau'},a_h^{\tau'})\\
&\hspace{10em} \times \big(\epsilon_h^\tau(\widehat{V}_{h+1}^{t,*} ;R)-\epsilon_h^\tau(V^{0}_{h+1};R)\big)  \big(\epsilon_h^{\tau'}(\widehat{V}_{h+1}^{t,*};R)-\epsilon_h^{\tau'}(V^{0}_{h+1};R)\big)\\
&\qquad \leq 8\varepsilon^2   \sum_{\tau=1}^{K} \sum_{\tau'=1}^K \bigl | \phi(x_h^\tau,a_h^\tau)^\top (\Lambda_h^{t})^{-1} \phi(x_h^{\tau'},a_h^{\tau'}) \big |\leq 8\varepsilon^2 K^2/\lambda,
\end{aligned}
\end{equation}
where the last step follows from the fact that $\| \phi(x,a) \| \leq 1$ and $\Lambda_h^t \succeq \lambda I$. Combining Equations \eqref{equa:term3_bound1} and \eqref{equa:term3_bound2} shows
\begin{equation}\label{equa:term3_bound3}
|\textrm{(iii)}|^2 \leq  2    \sup_{V \in \cN_{h+1} (\varepsilon)}    \Big\| \sum_{\tau=1}^{K} \phi(x_h^\tau,a_h^\tau)  \epsilon_h^\tau(V; R)\Big\|_{(\Lambda_h^{t})^{-1}}^2   +  8 \varepsilon^2  K^2 / \lambda. 
\end{equation}
We then upperbound the term $ \sup_{V \in \cN_{h+1} (\varepsilon)}    \Big\| \sum_{\tau=1}^{K} \phi(x_h^\tau,a_h^\tau)  \epsilon_h^\tau(V; R)\Big\|_{(\Lambda_h^{t})^{-1}}^2$ by uniform concentration over the covering $\cN_{h + 1}(\varepsilon)$. Applying Lemma \ref{lemma:concentration_of_self_norm_pro} and taking union bound over $\cN_{h + 1}(\varepsilon)$, for any fixed $h\in[H]$, with probability at least $1-p |\cN_{h+1}(\varepsilon)|$,
\begin{equation*}
\sup_{V\in\cN_{h+1}(\varepsilon)}\Big\| \sum_{\tau=1}^{K} \phi(x_h^\tau,a_h^\tau)  \epsilon_h^\tau(V; R)\Big\|_{(\Lambda_h^{t})^{-1}}^2 \leq (n+R_{\max})^2 H^2  \bigl (  2    \log(1/ p ) + d  \log(1+K/\lambda)\big).
\end{equation*}
For all $\delta\in(0,1)$ and all $\varepsilon>0$, we set $p=\delta/[(36n)  H |\cN_{h+1}(\varepsilon)|]$. Hence, for all fixed $h\in[H]$, it holds that 
\begin{equation}\label{equa:concentratation_fixed_h}
\begin{aligned}
&\sup_{V\in\cN_{h+1}(\varepsilon)}\Big\|   \sum_{\tau=1}^{K} \phi(x_h^\tau,a_h^\tau)  \epsilon_h^\tau(V;R)\Big\|_{(\Lambda_h^{t})^{-1}}^2\\ &\qquad \leq (n+R_{\max})^2 H^2  \bigl (  2    \log((36n)  H |\cN_{h+1}(\varepsilon)|/ \delta ) + d  \log(1+K/\lambda)\big)
\end{aligned}
\end{equation}
with probability at least $1-\delta/(36 n H)$, taken with respect to process that generates the dataset $\cD$. Then, combining Equations \eqref{equa:term3_bound3} and \eqref{equa:concentratation_fixed_h}, for all $h\in[H]$, with probability at least $1 - \delta/(18 n H)$,
\begin{align*}
&\Big\|\sum_{\tau=1}^{K} \phi(x_h^\tau,a_h^\tau)   \epsilon_h^\tau(\hat V_{h+1}^{t,*};R) \Big\|_{(\Lambda_h^{t})^{-1}}^2 = |\textrm{(iii)}|^2 \\ 
&\qquad \leq (n+R_{\max})^2 H^2  \bigl (  2    \log((36n)  H |\cN_{h+1}(\varepsilon)|/ \delta ) + d  \log(1+K/\lambda)\big)+8\varepsilon^2 K^2 / \lambda.
\end{align*}
Since $\widehat{V}_{h+1}^{t,*} \in \cV_{h+1} ((n+R_{\max})H\sqrt{T d/\lambda},2\beta , \lambda)$  we can upperbound $|\cN_{h+1}(\varepsilon)|$ via Lemma \ref{lem:covering_num}. As term (iii) is controlled, we can then ensure that term (ii) of Equation~\eqref{equa:Bellman estimation diff} can be bounded, which when combined with Equation~\eqref{equa:Bellman diff term1} yields a bound for $\bigl | (\BB_h \widehat{V}_{h+1}^{t,*})(x,a)-\phi(x,a)^{\top}\hat{w}_{h}^{t,*} \bigr |$ under a specific choice of $\varepsilon$, $\beta$, and $\lambda$.

All that remains is then to set the hyperparameters to ensure that the error can be bounded. Letting $\iota=\log(36n d H T/\delta)$, we set
\[
\beta=\hat{c}(n+R_{\max} ) d H\sqrt{\iota},\quad  \varepsilon = dH / K, \quad  \lambda = 1,
\]
where $\hat{c}>0$ is an absolute constant that ensures
\begin{equation}\label{equa:term2_bound}
|\text{(ii)}| \leq (\hat{c}/2 ) n  d H \sqrt{ \iota }   \sqrt{ \phi(x,a) ^\top  (\Lambda_{h}^{t})  ^{-1} \phi(x,a) }  =  \beta /2   \sqrt{ \phi(x,a) ^\top  (\Lambda_{h}^{t})  ^{-1} \phi(x,a) }
\end{equation}
with probability at least $1-\delta/(36nT)$.
By Equations \eqref{equa:Bellman estimation diff}, \eqref{equa:Bellman diff term1} and \eqref{equa:term2_bound}, for all $h \in [H]$ and all $(x,a) \in \cS\times \cA$, it holds that 
\begin{equation*}
\bigl |  (\BB_h \hat V_{h+1} ) (x,a) -\phi(x,a)^{\top}\hat{w}_{h}^{t,*} \bigr |  \leq ((n+R_{\max})  H \sqrt{d} +  \beta /2 )   \sqrt{ \phi(x,a) ^\top (\Lambda_h^t)  ^{-1} \phi(x,a) },
\end{equation*}
with probability at least $1-\delta/(36n)$, taking the union bound over $h \in [H]$. 

Extending the result to when $\zeta_1 = \texttt{EWC}$ is straightforward. Observe that Equation~\eqref{equa:Bellman estimation diff} consists of bounding $K$ random variables whose randomness is due to only the stochasticity inherent in the transition kernel. Moving from the $\texttt{ETC}$ to $\texttt{EWC}$ setting simply requires bounding $T$, rather than $K$, such variables. However, as our choice for $\beta$ and $\iota$ accommodates the move from $K$ to $T$, the bound in Equations~\eqref{equa:term2_bound} and~\eqref{equa:Bellman diff term1} remain valid.

Then combining Equation \eqref{equa:term2_bound} and~\eqref{equa:Bellman diff term1}, we obtain
\begin{equation*}
\big|\BB_h \widehat{V}_{h+1}^{t,*}(x,a)-\phi(x,a)^{\top}\hat{w}_{h}^{t,*}\big|\leq \beta   \sqrt{ \phi(x,a) ^\top (\Lambda_h^t)  ^{-1} \phi(x,a) }.
\end{equation*}

As there are only $18n$ such combinations of $\fR, \pi$ and $w$, obtaining the individual upper bound with probability at least $1-\delta/(36n)$ ensures that the union bound over all these triplets is satisfied with probability at least $1 - \delta / 2$.
Therefore, we conclude the proof of Lemma \ref{lemma:event prob}.
\end{proof}

\subsection{Proof of Lemma \ref{lemma:basic_lemma}}\label{subsec:basic_lemma}
With event $\cE$ defined, 
we proceed with the proof of Lemma~\ref{lemma:basic_lemma}. The proof is organized as follows. We first directly control the model evaluation errors conditioned on the event $\cE$, then relate these model evaluation errors to uncertainty bonuses $u_h^t$, followed by a reward-free style analysis that ensures sufficiently small model evaluation error across all policies. Combining these three ingredients yields Lemma~\ref{lemma:basic_lemma} directly.

In the first step of the proof, we upper and lower bound the model evaluation error $\Delta$, defined in Equation~\eqref{equa:evaluation error}, in the following lemma.
\begin{lemma}[Adaptation of Lemma 5.1 from~\citet{2020Is}]\label{lemma:width ETC}
With $\lambda, \beta$ set according to Lemma \ref{lemma:event prob}, which ensures $\Pr_{\cD}(\cE)\geq 1-\delta/2$,
we have
\begin{equation}
\begin{aligned}
&0\geq \hat{\Delta}_{h}^{t,\pi}(x,a;\fR)\geq -2 u_{h}^{t}(x,a), \qquad 0\leq \check{\Delta}_{h}^{t,\pi}(x,a;\fR)\leq 2 u_{h}^{t}(x,a)
\end{aligned}
\end{equation}
for all $t > K$, $(x,a)\in\cS\times\cA,$ $h\in[H]$, and $(\fR, \pi) \in \{(R, \widehat{\pi}), (\Tilde{R}, \Tilde{\pi}^{\ddagger}_t)\} \cup \{(r_i + \Tilde{R}^{-i}, \Tilde{\pi}_t^{\dagger i}), (R^{-i}, *),\\ (\Tilde{R}^{-i}, \dagger), (\Tilde{R}^{-i}, \ddagger), (R^{-i}, \widehat{\pi}^t), (\Tilde{R}^{-i}, \Tilde{\pi}^{\dagger i}_t), (\Tilde{R}^{-i}, \Tilde{\pi}_t^{\ddagger})\}_{i =1 }^n$, regardless of the choice of $\zeta_1$.
\end{lemma}
\begin{proof}
The results in Lemma \ref{lemma:width ETC} can be split into two parts: the upper and lower bounds of $\{\check \Delta_{h}\}$ and $\{\hat \Delta_{h}\}$. For brevity, we take $\hat{\Delta}_{h}^{t,*}(x,a;R^{-i})$ and $\check{\Delta}_{h}^{t,\widehat{\pi}^{t}}(x,a;R^{-i})$ as examples for optimistic and pessimistic versions for an arbitrary $i$, because the techniques used are largely the same.

\paragraph{Bounding $\hat{\Delta}_h^{t, *}(x, a; R^{-i})$.}
We first show that conditioned on the event $\cE$, as defined in Definition~\ref{def:event} and Equation \eqref{def:cE}, 
the model evaluation errors $ \hat{\Delta}_{h}^{t,*}(x,a;R^{-i}) \leq 0$ for all $h \in [H]$. We assume that $\cE$ holds for the rest of the proof. Recalling the construction of $\hat Q_h^{t,*} $ from Algorithm~\ref{algorithm:L3},
for all $h \in [H]$ and all $(x,a) \in \cS \times \cA$, we have 
\begin{equation*}
\hat Q_h^{t,*}(x,a;R^{-i})  = \min \{  (  f_{h}^{t}+ u_{h}^{t} )(x,a) , (H - h + 1)(n-1+R_{\max})\}.
\end{equation*}
Throughout the rest of the paragraph, we use $f_h^t$ and $u_h^t$ to denote the components that Algorithm~\ref{algorithm:L3} uses in order to construct $\hat{Q}_h^{t, *}(x, a; R^{-i})$.
We first focus on when $f_{h}^{t}+u_{h}^{t}(x,a)\leq (H - h + 1)(n-1+R_{\max})$. Here we have $\hat Q_h^{t,*}(x,a;R^{-i})=f_{h}^{t}+u_{h}^{t}(x,a)$. By definition of $\hat{\Delta}_{h}^{t,*}(x,a;R^{-i})$ in Equation \eqref{equa:evaluation error}, 
\begin{equation*}
\hat{\Delta}_{h}^{t,*}(x,a;R^{-i})= (\BB_h\widehat{V}_{h+1}^{t,*})(x,a;R^{-i}) - \hat{Q}_h^{t,*}(x,a;R^{-i})= (\BB_h\widehat{V}_{h+1}^{t,*})(x,a; R^{-i})-f_{h}^{t}-u_{h}^{t}\leq 0,
\end{equation*}
and the desired bound on $\hat{\Delta}_{h}^{t,*}(x,a;R^{-i})$ inequality follows from Lemma \ref{lemma:event prob}.

If $f_{h}^{t}+u_{h}^{t}(x,a)\geq (H - h + 1)(n+R_{\max})$, we have $\hat Q_h^{t,*}(x,a;R^{-i})=(H - h + 1)(n+R_{\max})$, which implies
\begin{equation*}
\hat{\Delta}_{h}^{t,*}(x,a;R^{-i})= (\BB_h\widehat{V}_{h+1}^{t,*})(x,a;R^{-i}) - ((H - h + 1)(n+R_{\max}))\leq 0,
\end{equation*}
where the inequality follows from the definition of the Bellman operator in Equation \eqref{equa:Bellman operator} and the construction of $\hat V_{h+1}^{t,*}$ in Algorithm \ref{algorithm:L3}.

It remains to establish the lower bound of $\hat{\Delta}_{h}^{t,*}(x,a;R^{-i})$.
Combining the definition of $\hat{\Delta}_{h}^{t,*}(x,a;R^{-i})$ and $\hat Q_h^{t,*}(x,a;R^{-i})$, we have 
\begin{align*}
\hat{\Delta}_{h}^{t,*}(x,a;R^{-i})&= (\BB_h\widehat{V}_{h+1}^{t,*})(x,a; R^{-i}) - \hat{Q}_h^{t,*}(x,a; R^{-i})\\
&\geq (\BB_h\widehat{V}_{h+1}^{t,*})(x,a; R^{-i})-f_{h}^{t}-u_{h}^{t}\geq -2u_{h}^{t},
\end{align*}
where the first inequality follows from the definition of $\hat Q_h^{t,*}(x,a;R^{-i})$ and the second inequality follows from Lemma \ref{lemma:event prob}.
In summary, we conclude that when conditioned on $\cE$,
\begin{equation*}
0\geq \hat{\Delta}_{h}^{t,*}(x,a;R^{-i}) \geq -2u_{h}^{t}(x,a),\qquad \forall (x,a)\in \cS\times \cA, ~\forall h \in [H]. 
\end{equation*}

\paragraph{Bounding $\check{\Delta}_h^{t, *}(x, a; R^{-i})$.} We now show that the model evaluation errors for the pessimistic version is also bounded. Recalling the construction of $\check{Q}_h^{t, *}$, we have
\[
\check{Q}_h^{t, *}(x, a; R^{-i}) = \Pi_{[0, (n - 1 + R_{\max})(H - h + 1)]}[(f_h^t - u_h^t)(x, a)].
\]
For the rest of the paragraph, we instead let $f_h^t$ and $u_h^t$ denote the components Algorithm~\ref{algorithm:L3} uses to construct $\check{Q}_h^{t, *}(x, a; R^{-i})$ instead.
We first show that the term is bounded below by zero.
When $(f_h^t - u_h^t)(x, a) \leq 0$, we trivially have
\[
\check{\Delta}_h^{t,*}(x, a; R^{-i}) = (\BB_h\check{V}_{h + 1}^{t, *})(x, a; R^{-i}) - 0 \geq 0.
\]
When $(f_h^t - u_h^t)(x, a) \in (0, (n - 1 + R_{\max})(H - h + 1))$, we have
\begin{align*}
\check{\Delta}_h^{t,*}(x, a; R^{-i}) = (\BB_h\check{V}_{h + 1}^{t, *})(x, a; R^{-i}) - f_h^t + u_h^t \geq 0,
\end{align*}
where the inequality direct follows from Lemma~\ref{lemma:event prob}. Finally, when $(f_h^t - u_h^t)(x, a) \geq (n - 1 + R_{\max})(H - h + 1)$, we have
\begin{align*}
\check{\Delta}_h^{t,*}(x,a; R^{-i}) \geq (\BB_h\check{V}_{h + 1}^{t, *})(x, a; R^{-i}) - f_h^t + u_h^t \geq 0,
\end{align*}
where the inequality is again by Lemma~\ref{lemma:event prob}.

We then bound the term from above. When $(f_h^t - u_h^t)(x, a) \in (0, (n - 1 + R_{\max})(H - h + 1))$
\[
\check{\Delta}_h^{t,*}(x, a; R^{-i}) \leq (\BB_h\check{V}_{h + 1}^{t, *})(x, a; R^{-i}) - f_h^t + u_h^t \leq 2u_h^t 
\]
by Lemma~\ref{lemma:event prob}. When $(f_h^t - u_h^t)(x, a) \in (0, (n - 1 + R_{\max})(H - h + 1))$, the same bound holds as well for the same reason. We then focus on when  $(f_h^t - u_h^t)(x, a) \geq (n - 1 + R_{\max})(H - h + 1)$, in which case
\begin{align*}
\check{\Delta}_h^{t,*}(x, a; R^{-i}) &= (\BB_h\check{V}_{h + 1}^{t, *})(x, a; R^{-i})  -  (n - 1 + R_{\max})(H - h + 1)\\
&\leq (n - 1 + R_{\max})(H - h + 1) - (n - 1 + R_{\max})(H - h + 1) = 0.
\end{align*}
The last inequality comes from the fact that $\check{V}_{h + 1}^{t, *}( ; R^{-i})$ and $R^{-i}$ are bounded above. 

As the proofs for the remaining reward functions remain largely the same, we can apply the same analysis, only changing the reward function being used, thus completing the proof.
\end{proof}

With Lemma~\ref{lemma:width ETC} in mind, we relate the value function estimation errors to the uncertainty bonus $u_h^t$.
\begin{lemma}\label{lemma:width bound}
With $\lambda, \beta$ set according to Lemma \ref{lemma:event prob}, {which ensures $\Pr_{\cD}(\cE)\geq 1-\delta/2$,}
regardless of the choice of $\zeta_1$,
the following statements hold true jointly for all $t > K$ and some absolute constant $\hat{c}$.
\begin{enumerate}
\item $0 \leq \widehat{V}^{\pi}_1(x_1; \fR) - V^{*}_1(x_1; \fR) \leq 2\sum_{h = 1}^H \EE_{\pi}[u_h^t]$ for all $(\fR, \pi) \in \{(R, \widehat{\pi}^t), (\Tilde{R}, \Tilde{\pi}^{\ddagger}_t)\} \cup \{(r_i + \Tilde{R}^{-i}, \Tilde{\pi}_t^{\dagger i})\}_{i = 1}^n$.
\item For all $i \in [n]$, $0 \leq \widehat{V}_1^{t, \pi}(x_1; \fR) - V^*(x_1; \fR) \leq 2\sum_{h = 1}^H \EE_{\pi}[u_h^t]$ and $0 \leq V^*(x_1; \fR) - \check{V}_1^{t, \pi}(x_1; \fR) \leq 2\max_{\pi'}\{\sum_{h = 1}^H \EE_{\pi'}[u_h^t]\}$, for all $(\fR, \pi) \in \{(R^{-i}, \star), (\Tilde{R}^{-i}, \dagger), (\Tilde{R}^{-i}, \ddagger)\}_{i = 1}^n$.
\item For all $i \in [n]$,  $0\leq \widehat{V}_1^{t,\pi}(x_1; \fR) - V^{\pi}_1(x_1; \fR)\leq 2\sum_{h = 1}^H \EE_{\pi}[u_h^t]$ and $0 \leq V^{\pi}_1(x_1; \fR) - \check{V}_1^{t,\pi}(x_1; \fR) \leq 2\sum_{h = 1}^H \EE_{\pi}[u_h^t]$, for all $(\fR, \pi) \in \{(R^{-i}, \widehat{\pi}^t), (\Tilde{R}^{-i}, \Tilde{\pi}_t^{\dagger i}), (\Tilde{R}^{-i}, \Tilde{\pi}_t^{\ddagger})\}_{i = 1}^n$.
\end{enumerate} 
where the bonuses $\{u_{h}^{t}\}$ are the exploration bonuses calculated by either Algorithm~\ref{algorithm:L3} or Algorithm~\ref{algorithm:L4}.
\end{lemma}
\begin{proof}
For brevity, we only upper bound $V_{1}^{*}(x_1;R)-V_{1}^{\widehat{\pi}^{t}}(x_1;R)$ in this section, as the proof of the remaining terms is similar. 

Adding and subtracting $\hat V_1^{t,*}$ into the difference, we can decompose the difference into two terms
\begin{equation}\label{equa:decompoce1}
V_{1}^{*}(x_1;R)-V_{1}^{\widehat{\pi}^{t}}(x_1;R)=\underbrace{\Big(V_{1}^{*}(x_{1};R)-\hat V_1^{t,*}(x_1;R)\Big)}_{\text{(i)}}+\underbrace{\Big(\hat V_1^{t,*}(x_1;R)-V_{1}^{\widehat{\pi}^t}(x_1;R)\Big)}_{\text{(ii)}}.
\end{equation}
where we recall $\hat V_1^{t,*}(x_1;R)$ is the value function estimates constructed by Algorithm \ref{algorithm:L3}.  
Term (i) in Equation \eqref{equa:decompoce1} is the difference between the estimated value function    $\hat V_1^{t,*}( ; R)$  and the optimal value function $V_1^{*}( ; R)$, while  term (ii) is the difference between  $\hat V_1^{t,*}( ; R)$ and the value function of $\hat\pi^t$, $V_1^{\hat\pi^t}( ; R)$.

For term (i), we invoke Lemma \ref{lemma:ext_val_diff} with $\pi=\widehat{\pi}^t$ and  $\pi^\prime = \pi_*$ and have
\begin{align*}
\hat V_1^{t,*}(x_1;R) - V_1^{*}(x_1;R) &= \sum_{h=1}^H \EE_{\pi_*}\big[ \langle \hat{Q}_h^{t,*}(x_h, ;R) , \hat\pi_h^t ( \given x_h) - \pi_{*,h}( \given x_h) \rangle_{\cA} \biggiven x_1=x\big] \notag \\
& \quad + \sum_{h=1}^H   \EE_{\pi_*}\big[      \hat{Q}_h^{t,*}(x_h,a_h;R)-  ( \BB_h \widehat{V}_{h+1}^{t,*}) (x_h,a_h;R) \biggiven x_1=x\big],
\end{align*}
where $\EE_{\pi_*} $  is taken with respect to the trajectory generated by $\pi_*$.
By the definition of the model evaluation error 
$\Delta _h $  in Equation \eqref{equa:evaluation error}, we have
\begin{align}\label{equa:subopt_term1}
\begin{aligned}
V_1^{*}(x_1;R)-\hat V_1^{t,*}(x_1;R)  &= \sum_{h=1}^H \EE_{\pi_*}\big[ \langle \hat{Q}_h^{t,*}(x_h,
a_h;R) ,\pi_{*,h}( \given x_h)-\hat\pi_h^t ( \given x_h) \rangle_{\cA} \biggiven x_1\big] \\
& \quad + \sum_{h=1}^H   \EE_{\pi_*}\big[      \hat{\Delta}_h^{t,*}(x_h,a_h;R)\biggiven x_1\big].
\end{aligned}
\end{align}
Similarly, invoking Lemma \ref{lemma:ext_val_diff} with $\pi=\pi^\prime = \widehat{\pi}^t$, for term (ii), we have
\begin{align}\label{equa:subopt_term2}
\begin{aligned}
\hat V_1^{t,*}(x_1;R)-V_{1}^{\widehat{\pi}^{t}}(x_1;R)&=\sum_{h=1}^H   \EE_{\widehat{\pi}^t}\big[      \hat{Q}_h^{t,*}(x_h,a_h;R)-  ( \BB_h \widehat{V}_{h+1}^{t,*}) (x_h,a_h;R) \biggiven x_1\big]\\
&=-\sum_{h=1}^H   \EE_{\widehat{\pi}^t}\big[      \hat{\Delta}_h^{t,*}(x_h,a_h;R)\biggiven x_1\big],
\end{aligned}
\end{align}
where $\EE_{\widehat{\pi}^t} $ is taken with respect to the trajectory generated by $\widehat{\pi}^t$.

Combining Equations \eqref{equa:decompoce1}, \eqref{equa:subopt_term1} and \eqref{equa:subopt_term2}, we have
\begin{align}
V_{1}^{*}(x_1;R)-V_{1}^{\widehat{\pi}^{t}}(x_1;R)&=\sum_{h=1}^H \EE_{\pi_*}\big[ \langle \hat{Q}_h^{t,*}(x_h, ;R) ,\pi_{*,h}( \given x_h)-\hat\pi_h^t ( \given x_h) \rangle_{\cA} \biggiven x_1\big]\label{equa:subopt_total_1}\\
&\quad+\sum_{h=1}^H   \EE_{\pi_*}\big[      \hat{\Delta}_h^{t,*}(x_h,a_h;R)\biggiven x_1\big]-\sum_{h=1}^H   \EE_{\widehat{\pi}^t}\big[      \hat{\Delta}_h^{t,*}(x_h,a_h;R)\biggiven x_1\big].\notag
\end{align}
It remains to upper bound the three terms in the right-hand side of Equation \eqref{equa:subopt_total_1}. For the first term, we can upper bound it by $0$ following the definition of $\widehat{\pi}^t$ in  Algorithm \ref{algorithm:L3}. To bound the last two terms, we invoke Lemma \ref{lemma:width ETC}, which implies 
\begin{align*}
&\sum_{h=1}^H   \EE_{\pi_*}\big[      \hat{\Delta}_h^{t,*}(x_h,a_h;R)\biggiven x_1=x\big]\leq 0,\\
&-\EE_{\widehat{\pi}^t}\big[      \hat{\Delta}_h^{t,*}(x_h,a_h;R)\biggiven x_1=x\big]\leq\EE_{\widehat{\pi}^t}\big[     2 u_{h}^{t}(x_{h},a_{h})\biggiven x_1=x\big],
\end{align*}
for all $(x, a) \in \cS \times \cA$ under event $\cE$. We then know that
\begin{equation*}
V_{1}^{*}(x_1;R)-V_{1}^{\widehat{\pi}^{t}}(x_1;R)\leq\sum_{h=1}^{H}\EE_{\widehat{\pi}^t}\big[   2   u_{h}^{t}(x_{h},a_{h})\biggiven x_1=x\big].
\end{equation*}
The remaining terms can be controlled with a similar technique, with only minor differences between optimistic and pessimistic value function estimates. The differences only affect the signs of the resulting terms but do not change the proof itself.
We conclude the proof.
\end{proof}

As we can see from Lemma \ref{lemma:width bound}, all that remains is to control the term $\EE_{\pi}[\sum_{h=1}^{H}u_{h}^{t} \given x_1]$. For convenience, we begin with a more general bound that holds for all $\pi$ and $\fR$, and then discuss a specialized bound for when $\zeta_1 = \texttt{EWC}.$ Recalling Algorithm \ref{algorithm:L3}, bounding $V^{*}(x_{1};u_h^t)$ suffices, as the definition of $V^*$ ensures that it is the maximum of $\EE_{\pi}[\sum_{h=1}^{H}u_{h}^{t} \given {x_{1}}]$ taken over $\pi$. 
{We detail the steps in the following Lemma.}

\begin{lemma}\label{lemma:reward_free_explore}
With probability at least $1-\delta/(36nT)$, for the function $u_{h}^{t}$ defined in Algorithm \ref{algorithm:L3}, we have for all $t > K$ that
\begin{equation*}
V_{1}^{*}(x_{1};u^{t})\leq 2\hat{c} (n+R_{\max} ) \sqrt{d^{3}H^{6}\iota/K},
\end{equation*}
{where $\iota=\log\big(36 n d H T/\delta\big)$, and $\hat{c}$ is an absolute constant.} The claim holds regardless of the choice of $\zeta_1$.
\end{lemma}

\begin{proof}
Using the similar technique in the proof of Lemma  \ref{lemma:event prob} and Lemma \ref{lemma:width bound},  with probability at least $1-\delta/8$, we have for possible pairs of $(\fR, \pi)$, 
\begin{align}
\begin{aligned}
\label{equa:reward_free_concentration}
&\bigl |  (\PP_h  V_{h+1}^{k} ) (x,a; \fR) -\Pi_{[0,B]}[\phi(x,a)^{\top}w_{h}^{k}] \bigr |  \\
&\qquad \leq \min\Big\{\beta \sqrt{ \phi(x,a) ^\top (\Lambda_h^k)  ^{-1} \phi(x,a) },B \Big\}= u_h^k (x,a),
\end{aligned}
\end{align}
for all $h \in [H]$ and all $(x,a) \in \cS\times \cA$ with $B=H(n+R_{\max})$, where $w_h^k$ is the linear weight constructed in Algorithm~\ref{algorithm:LMVL} during the exploration phase. 
For simplicity, for the remaining proof we let $V^k(\cdot) = V(\cdot; u^k), Q^k(\cdot, \cdot) = Q(\cdot, \cdot; u^k),$ and $(\PP_hV^k)(\cdot, \cdot) = (\PP_hV)(\cdot, \cdot; u^k)$.
Based on the above inequality, we have the following intermediate results for the functions $V_1^*(\cdot ; l^k)$ and $V^k_1(\cdot)$ defined in Algorithm~\ref{algorithm:L1}
\begin{equation}\label{equa:reward_free_1}
V_{1}^{*}(x_{1};l^{k})\leq V_{1}^{k}(x_{1}) \quad\text{for all $k\in[K]$},
\end{equation}
and 
\begin{equation}\label{equa:reward_free_2}
\sum_{k=1}^{K}V_{1}^{k}(x_{1})\leq \hat{c} (n+R_{\max} ) \sqrt{ d^{3}H^{4}K\iota},
\end{equation}
for some absolute constant $\hat{c}$ with probability at least $1-\delta/4$.

Equation \eqref{equa:reward_free_1} and Equation \eqref{equa:reward_free_2} show that the estimated value function in the exploration phase is optimistic and the sum of $V_{1}^{k}(x_1)$ should be small with high probability. 

Equation \eqref{equa:reward_free_1} can be proved by induction.
When $h=H+1$, for all $k\in[K]$ and $s\in\cS$, we know $V_{H+1}^{*}(x;l^{k})=0$ and $V_{H+1}^{k}(x)=0$ such that $V_{H+1}^{*}(x;l^{k})=V_{H+1}^{k}(x)$. Assume that for some $h\in[H]$ and all $x\in\cS$,
\begin{equation*}
V_{h+1}^{*}(x;l^{k})\leq V_{h+1}^{k}(x).
\end{equation*} 
Then based on Equation \eqref{equa:reward_free_concentration}, for all $(x,h,k)\in\cS\times[H]\times[K]$, we further have
\begin{equation*}
\begin{aligned}
&Q_{h}^{*}(x,a;l^{k})- Q_{h}^{k}(x,a)\\
&=l_{h}^{k}(x,a)+(\PP_{h}V_{h+1}^{*})(x, a;l^{k})-\min\{\Pi_{[0,B]}[(w_{h}^{k})^{\top}\phi(x,a)]+l_{h}^{k}(x,a)+u_{h}^{k}(x,a), B\}\\
&\leq \max\{(\PP_{h}V_{h+1}^{*})(x, a;l^{k})-\Pi_{[0,B]}[(w_{h}^{k})^{\top}\phi(x,a)]-u_{h}^{k}(x,a),0\}\\
&\leq \max\{(\PP_{h}V_{h+1}^{k})(x, a)-\Pi_{[0,B]}[(w_{h}^{k})^{\top}\phi(x,a)]-u_{h}^{k}(x,a),0\}\\
&\leq 0,
\end{aligned}
\end{equation*}
where the first inequality is due to $0\leq l_{h}^{k}(x,a)+(\PP_{h}V_{h}^{*})(x, a;l^{k})\leq B$, the second inequality is by the assumption that $l_{h}^{k}(x,a)+\PP_{h}V_{h}^{*}(x;l^{k})$, and the last inequality by Equation \eqref{equa:reward_free_concentration}.
The above inequality further leads to 
\begin{equation*}
V_{h}^{*}(x;l^{k}) =\max_{a\in\cA}Q_{h}^{*}(x,a;l^{k})\leq \max_{a\in\cA}Q_{h}^{k}(x,a)=V_{h}^{k}(x).
\end{equation*}
We can then complete the proof of Equation \eqref{equa:reward_free_1} by induction.

Next, we detail the proof of  Equation \eqref{equa:reward_free_2}, namely the upper bound of $\sum_{k=1}^{K}V_{1}^{k}(x_1)$.
Specifically, based on Equation \eqref{equa:reward_free_concentration}, we have
\begin{equation}\label{equa:reward_free_22}
\begin{aligned}
V_{h}^{k}(x_{h}^{k})&\leq\Pi_{[0,B]}[(w_{h}^{k})^{\top}\phi(x_{h}^{k},a_{h}^{k})]+l_{h}^{k}(x_{h}^{k},a_{h}^{k})+u_{h}^{k}(x_{h}^{k},a_{h}^{k})\\
&\leq \PP_{h}V_{h+1}^{k}(x_{h}^{k},a_{h}^{k})+l_{h}^{k}(x_{h}^{k},a_{h}^{k})+2u_{h}^{k}(x_{h}^{k},a_{h}^{k})\\
&= \PP_{h}V_{h+1}^{k}(x_{h}^{k},a_{h}^{k})-V_{h+1}(x_{h+1}^{k})+V_{h+1}(x_{h+1}^{k})+(2+1/H)u_{h}^{k}(x_{h}^{k},a_{h}^{k}),
\end{aligned}
\end{equation}
where the first inequality is due to the definition of $V_{h}^{k}$ and the second by Equation \eqref{equa:reward_free_concentration}. For brevity, we let $\xi_{h}^{k}=\PP_{h}V_{h+1}^{k}(x_{h}^{k},a_{h}^{k})-V_{h+1}(x_{h+1}^{k})$ in the following.
Recursively applying Equation \eqref{equa:reward_free_22}, we have
\begin{equation*}
V_{1}^{k}(x_{1})\leq\sum_{h=1}^{H-1}\xi_{h}^{k}+(2+1/H)\sum_{h=1}^{H}u_{h}^{k}(x_{h}^{k},a_{h}^{k}).
\end{equation*}
Taking summation on both sides of the above inequality with $k$ from $1$ to $K$, we have
\begin{equation*}
\sum_{k=1}^{K}V_{1}^{k}(x_{1})\leq\sum_{k=1}^{K}\sum_{h=1}^{H-1}\xi_{h}^{k}+(2+1/H)\sum_{k=1}^{K}\sum_{h=1}^{H}u_{h}^{k}(x_{h}^{k},a_{h}^{k}).
\end{equation*}
For the first summation on the right side of the above inequality, we can bound it with Azuma-Hoeffding inequality and have
\begin{equation*}
\sum_{k=1}^{K}\sum_{h=1}^{H-1}\xi_{h}^{k}\leq \cO\Big(\sqrt{H^3  K\log(1/\delta)}\Big),
\end{equation*}
with probability at least $1-\delta/8$.
On the other hand, by Lemma \ref{lemma:telescope}, we have
\begin{equation*}
\sum_{k=1}^{K}\sum_{h=1}^{H}u_{h}^{k}(x_{h}^{k},a_{h}^{k})\leq\cO\Big(\sqrt{d K H^2 \log{K}}\Big),
\end{equation*}
with probability at least $1-\delta/8$. Then, combining the above two inequalities, we obtain that with probability at least $1-\delta/4$, there is 
\begin{equation*}
\sum_{k=1}^{K}V_{1}^{k}(x_{1})\leq \hat{c}(n+R_{\max} )\sqrt{ d^{3}H^{4}K\iota},
\end{equation*}
which completes the proof of Equation \eqref{equa:reward_free_2}.

At last, we prove the conclusion of this lemma that
\begin{equation*}
V_{1}^{*}(x_{1};u^{t})\leq \hat{c}(n+R_{\max} )\sqrt{d^{3}H^{6}\iota/K}.
\end{equation*}
Notice that for all $k\in[K]$,
\begin{equation*}
\Lambda_{h}^{k}\preccurlyeq\Lambda_{h},
\end{equation*} especially when $\zeta_1 = \texttt{EWC}$ and $\Lambda_h$ may further grow during the exploitation phase.
Therefore, we have for all $(h,k)\in[H]\times[K]$,
\begin{equation*}
l_{h}^{k}\geq u_{h}^{t}/H
\end{equation*}
whenever $t \geq k$.
Hence, $V_{1}^{*}(x_{1};u^{t}/H)\leq V_{1}^{*}(x_{1};l^{k})$.
Together with Equation \eqref{equa:reward_free_1} and \eqref{equa:reward_free_2}, we obtain
\begin{equation*}
V_{1}^{*}(x_{1};u^{t})=H V_{1}^{*}(x_{1};u^{t}/H)
\leq H\sum_{k=1}^{K}V_{1}^{k}(x_{1})/K\leq \hat{c}(n+R_{\max} )\sqrt{d^{3}H^{6}\iota/K},
\end{equation*}
which concludes the proof.
\end{proof}

Finally, with Lemmas~\ref{lemma:width bound} and~\ref{lemma:reward_free_explore} in mind, we argue how they can be combined to prove the claims in Lemma~\ref{lemma:basic_lemma} for both when $\zeta_1 = \texttt{ETC}$ and when $\zeta_1 = \texttt{EWC}$.

As the proof techniques are largely the same, let $(\fR, \pi)$ be an arbitrary and fixed pair and we discuss only $\widehat{V}_{1}^{t,\pi}\big(x_{1};\fR\big)-V_{1}^{\pi}\big(x_{1};\fR\big)$ to avoid redundancy. Recalling from Lemma~\ref{lemma:width bound}, we know that
\begin{align*}
\widehat{V}_{1}^{t,\pi}\big(x_{1};\fR\big)-V_{1}^{\pi}\big(x_{1};\fR\big) &\leq 2\sum_{h = 1}^H \EE_{\pi}[u_h^t] \leq 2\hat{c}(n+R_{\max} )\sqrt{d^3H^6\iota/K},
\end{align*} where the second inequality comes from Lemma~\ref{lemma:reward_free_explore}.

\section{Proof of Lower Bound} \label{sec:proof-lower}
In this section, we present the 
proof of the lower bound shown in Theorem \ref{theorem:lowerbound}. 
While the work \citet{2020Mechanism} studies the lower bound for the bandit setting, we remark that deriving the lower bound for our problem is non-trivial, which requires different constructions and proof techniques from that of this earlier work. Specifically, our work focuses on the setting of the stochastic rewards and invalidates the Gaussian reward construction in the proof of Theorem 1 in \citet{2020Mechanism} because of the bounded reward assumption in our MDP setting. We use a different construction with the Bernoulli reward and apply a different anti-concentration analysis.
Moreover, our lower bound considers the linear function approximation and the transition dynamics along the finite horizon in the MDP model which cannot be covered by the bandit setting.

We first show several important lemmas for the proof of Theorem \ref{theorem:lowerbound}. 
The following lemma translates the utilities of the seller and agent $i$ into the differences between the value functions according to Markov VCG mechanism.
\begin{lemma}\label{lemma:utilities_agent_seller}When the actions and prices are chosen according to the Markov VCG mechanism, we have
\begin{equation*}
\begin{aligned}
&u_{i *}=V_{1}^{\pi_{*}}\big(x_1;R\big)-V_{1}^{\pi_{*}^{-i}}\big(x_1;R^{-i}\big),\\
&u_{0 *}=\sum_{i=1}^{n}V_{1}^{\pi_{*}^{-i}}\big(x_1;R^{-i}\big)-(n-1)V_{1}^{\pi_{*}}\big(x_1;R\big).\
\end{aligned}
\end{equation*}
\end{lemma}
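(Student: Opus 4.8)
The plan is to unfold the definitions of $u_{i*}$ and $u_{0*}$ from Section~\ref{sec:prob} together with the Clarke-pivot prices $p_{i*}$, and then to repeatedly exploit the fact that $V_1^{\pi}(x_1;\cdot)$ is \emph{linear} in the reward function. Indeed, straight from the definition $V_h^{\pi}(x;r)=\sum_{h'=h}^{H}\EE[r_{h'}(s_{h'},\pi_{h'}(s_{h'}))\mid s_h=x]$ one reads off $V_1^{\pi}(x_1;r+r')=V_1^{\pi}(x_1;r)+V_1^{\pi}(x_1;r')$ and $V_1^{\pi}(x_1;c\,r)=c\,V_1^{\pi}(x_1;r)$ for a fixed policy $\pi$ and scalar $c$. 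No optimization argument is needed --- only this additivity plus elementary bookkeeping of which summands $r_j$ appear in $R$ and in each $R^{-i}$.

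First I would handle agent $i$. Under the Markov VCG mechanism the seller executes $\pi_*$ and charges $p_{i*}=V_1^{\pi_{*}^{-i}}(x_1;R^{-i})-V_1^{\pi_*}(x_1;R^{-i})$, so $u_{i*}=V_1^{\pi_*}(x_1;r_i)-p_{i*}=V_1^{\pi_*}(x_1;r_i)+V_1^{\pi_*}(x_1;R^{-i})-V_1^{\pi_{*}^{-i}}(x_1;R^{-i})$. Combining the first two terms by linearity and using $r_i+R^{-i}=\sum_{j=0}^{n}r_j=R$ yields $u_{i*}=V_1^{\pi_*}(x_1;R)-V_1^{\pi_{*}^{-i}}(x_1;R^{-i})$, which is the claimed identity.

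Next I would treat the seller. Expanding, $u_{0*}=V_1^{\pi_*}(x_1;r_0)+\sum_{i=1}^{n}p_{i*}=V_1^{\pi_*}(x_1;r_0)+\sum_{i=1}^{n}V_1^{\pi_{*}^{-i}}(x_1;R^{-i})-\sum_{i=1}^{n}V_1^{\pi_*}(x_1;R^{-i})$. The one place that calls for care --- and the closest thing to an obstacle here --- is the last sum: by linearity it equals $V_1^{\pi_*}\big(x_1;\sum_{i=1}^{n}R^{-i}\big)$, and counting multiplicities gives $\sum_{i=1}^{n}R^{-i}=n\,r_0+(n-1)\sum_{j=1}^{n}r_j=(n-1)R+r_0$, because $r_0$ lies in $R^{-i}$ for every $i\in[n]$ whereas each $r_j$ with $j\ge 1$ lies in $R^{-i}$ for the $n-1$ indices $i\neq j$. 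Hence $\sum_{i=1}^{n}V_1^{\pi_*}(x_1;R^{-i})=(n-1)V_1^{\pi_*}(x_1;R)+V_1^{\pi_*}(x_1;r_0)$; substituting this back cancels the $V_1^{\pi_*}(x_1;r_0)$ term and leaves $u_{0*}=\sum_{i=1}^{n}V_1^{\pi_{*}^{-i}}(x_1;R^{-i})-(n-1)V_1^{\pi_*}(x_1;R)$, as required. All remaining manipulations are routine substitutions, so I do not anticipate any genuine difficulty beyond this bookkeeping.
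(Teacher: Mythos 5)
Your proposal is correct and follows essentially the same route as the paper: unfold the definitions of $u_{i*}$, $u_{0*}$, and $p_{i*}$, then use linearity of $V_1^{\pi}(x_1;\cdot)$ in the reward to combine terms. The only difference is that you make explicit the multiplicity count $\sum_{i=1}^{n}R^{-i}=r_0+(n-1)R$, which the paper leaves implicit in its final equality.
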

\begin{proof}
We can deduce the above results by the definition of the utilities of the agents and the seller. For the utility of agent $i$, we have
\begin{equation*}
\begin{aligned}
u_{i *}&=V_{1}^{\pi_{*}}(x_1;r_{i})-p_{i *}\\
&=V_{1}^{\pi_{*}}\big(x_1;r_{i}\big)-\Big[V_{1}^{\pi_{*}^{-i}}\big(x_1;R^{-i}\big)-V_{1}^{\pi_{*}}\big(x_1;R^{-i}\big)\Big]\\
&=V_{1}^{\pi_{*}}\big(x_1;R\big)-V_{1}^{\pi_{*}^{-i}}\big(x_1;R^{-i}\big).
\end{aligned}
\end{equation*}
For the utility of the seller, we have
\begin{equation*}
\begin{aligned}
u_{0 *}&=V_{1}^{\pi_{*}}(x_1;r_{0})+\sum_{i=1}^{n}p_{i *}\\
&=V_{1}^{\pi_{*}}\big(x_1;r_{0}\big)+\sum_{i=1}^{n}\Big[V_{1}^{\pi_{*}^{-i}}\big(x_1;R^{-i}\big)-V_{1}^{\pi_{*}}\big(x_1;R^{-i}\big)\Big]\\
&=\sum_{i=1}^{n}V_{1}^{\pi_{*}^{-i}}\big(x_1;R^{-i}\big)-(n-1)V_{1}^{\pi_{*}}\big(x_1;R\big),
\end{aligned}
\end{equation*}
where the last equation is by $V_{1}^{\pi_{*}}(x_1;R^{-i}) = V_{1}^{\pi_{*}}(x_1;r_0 + \sum_{j\in[n], j\neq i} r_j) = V_{1}^{\pi_{*}}(x_1;r_0) + \sum_{j\in[n], j\neq i} V_{1}^{\pi_{*}}(x_1; r_j)$.
This completes the proof.
\end{proof}

We then define the estimation of $\sum_{i=1}^{n}V_{1}^{\pi_{*}^{-i}}\big(x_1;R^{-i}\big)$ and the error of this estimation as
\begin{equation*}
Y_{T}=\frac{1}{T}\sum_{i=1}^{n}\sum_{t=1}^{T}\Big(p_{i t}+V_{1}^{\pi^{t}}\big(x_1;R^{-i}\big)\Big),\qquad Z_{T}=Y_{T}-\sum_{i=1}^{n}V_{1}^{\pi_{*}^{-i}}\big(x_1;R^{-i}\big).
\end{equation*}

The next lemma states the relationships between different regret terms defined in Equation \eqref{equa:regret}, which supports the proof of our lower bound.
\begin{lemma}\label{lemma:regret_lower_bound}
Let $\mathrm{Reg}_{T}^{W}, \mathrm{Reg}_{0 T}, \mathrm{Reg}_{T}^{\sharp}$ be defined as 
in Equation \eqref{equa:regret}. Then
\begin{equation*}
\mathrm{Reg}_{T}^{\sharp}=n \mathrm{Reg}_{T}^{W}+T Z_{T},\qquad \mathrm{Reg}_{0 T}=-(n-1)\mathrm{Reg}_{T}^{W}-T Z_{T}.
\end{equation*}
\end{lemma}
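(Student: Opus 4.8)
The plan is to expand both regret identities directly from the definitions in~\eqref{equa:regret}, substituting the closed-form expressions for $u_{i*}$, $u_{0*}$, $u_{it}$, $u_{0t}$, and then collecting terms so that the welfare regret $\mathrm{Reg}_T^W$ and the price-estimation error $Z_T$ appear. The two identities are structurally parallel, so I would prove the first one in detail and then indicate that the second follows by the same bookkeeping.

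\textbf{Step 1: Expand $\mathrm{Reg}_T^\sharp$.}
By definition $\mathrm{Reg}_T^\sharp = \sum_{i=1}^n \mathrm{Reg}_{iT} = \sum_{i=1}^n (T u_{i*} - U_{iT}) = \sum_{i=1}^n\sum_{t=1}^T (u_{i*} - u_{it})$. I would invoke Lemma~\ref{lemma:utilities_agent_seller} to write $u_{i*} = V_1^{\pi_*}(x_1;R) - V_1^{\pi_*^{-i}}(x_1;R^{-i})$, and use the definition $u_{it} = V_1^{\hat\pi^t}(x_1;r_i) - p_{it}$. Summing over $i$, the agent-reward terms combine: $\sum_{i=1}^n V_1^{\hat\pi^t}(x_1;r_i)$, and I would massage this using $\sum_{i=1}^n V_1^{\hat\pi^t}(x_1;r_i) = n V_1^{\hat\pi^t}(x_1;R) - n V_1^{\hat\pi^t}(x_1;r_0) - \sum_{i=1}^n V_1^{\hat\pi^t}(x_1;R^{-i}) + \dots$ — actually the cleaner route is to write $r_i = R - R^{-i} - \text{(correction)}$; more precisely $R^{-i} = R - r_i$, so $V_1^{\hat\pi^t}(x_1; r_i) = V_1^{\hat\pi^t}(x_1; R) - V_1^{\hat\pi^t}(x_1; R^{-i})$ by linearity of the value function in the reward. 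Hence $u_{it} = V_1^{\hat\pi^t}(x_1;R) - V_1^{\hat\pi^t}(x_1;R^{-i}) - p_{it}$.

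\textbf{Step 2: Collect terms.}
Then $\sum_{i=1}^n (u_{i*} - u_{it}) = \sum_{i=1}^n\big[ V_1^{\pi_*}(x_1;R) - V_1^{\pi_*^{-i}}(x_1;R^{-i}) - V_1^{\hat\pi^t}(x_1;R) + V_1^{\hat\pi^t}(x_1;R^{-i}) + p_{it}\big]$. The $V_1^{\pi_*}(x_1;R) - V_1^{\hat\pi^t}(x_1;R)$ terms sum to $n\,\mathrm{reg}_t^W$. The remaining terms $\sum_{i=1}^n\big[p_{it} + V_1^{\hat\pi^t}(x_1;R^{-i}) - V_1^{\pi_*^{-i}}(x_1;R^{-i})\big]$ sum, over $t$, to exactly $T Z_T$ by the definitions of $Y_T$ and $Z_T$. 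Summing over $t$ from $1$ to $T$ gives $\mathrm{Reg}_T^\sharp = n\,\mathrm{Reg}_T^W + T Z_T$. For the seller identity I would repeat with $u_{0*} = \sum_{i=1}^n V_1^{\pi_*^{-i}}(x_1;R^{-i}) - (n-1) V_1^{\pi_*}(x_1;R)$ (Lemma~\ref{lemma:utilities_agent_seller}) and $u_{0t} = V_1^{\hat\pi^t}(x_1;r_0) + \sum_{i=1}^n p_{it}$, again using $V_1^{\hat\pi^t}(x_1;r_0) = V_1^{\hat\pi^t}(x_1;R) - \sum_{i=1}^n V_1^{\hat\pi^t}(x_1;r_i)$ is not needed; instead note $r_0 = R - \sum_{i=1}^n r_i$ so $(n-1)$ appears by the same collapse $\sum_i R^{-i} = n R - \sum_i r_i$, i.e. $\sum_i V_1^{\hat\pi^t}(x_1;R^{-i}) = (n-1)V_1^{\hat\pi^t}(x_1;R) + V_1^{\hat\pi^t}(x_1;r_0)$. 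This yields $\mathrm{Reg}_{0T} = -(n-1)\mathrm{Reg}_T^W - T Z_T$; as a sanity check, adding the two identities plus $\mathrm{Reg}_T^W$ recovers the welfare-sum consistency $\mathrm{Reg}_{0T} + \mathrm{Reg}_T^\sharp = \mathrm{Reg}_T^W$, which matches the fact that total utility equals welfare.

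\textbf{Main obstacle.}
There is no genuine analytic difficulty here — the lemma is a pure accounting identity. The only thing that requires care is the algebraic bookkeeping of which reward function each value function is evaluated against, in particular the repeated use of linearity of $V_1^\pi(x_1;\cdot)$ in the reward argument and the two collapse identities $R = R^{-i} + r_i$ and $\sum_{i=1}^n R^{-i} = (n-1)R + r_0$. I would double-check signs by verifying the consistency relation $\mathrm{Reg}_{0T} + \mathrm{Reg}_T^\sharp = \mathrm{Reg}_T^W$ at the end, which pins down the coefficients $-(n-1)$ and $+n$ unambiguously.
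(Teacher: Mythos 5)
Your proposal is correct and follows essentially the same route as the paper's proof: both invoke Lemma~\ref{lemma:utilities_agent_seller}, use linearity of $V_1^{\pi}(x_1;\cdot)$ in the reward to rewrite $u_{it}$ as $V_1^{\hat\pi^t}(x_1;R)-\bigl[p_{it}+V_1^{\hat\pi^t}(x_1;R^{-i})\bigr]$ (the paper just names the bracketed quantity $h_{it}$), and then collect the welfare-regret and price-estimation terms to identify $n\,\mathrm{Reg}_T^W$ and $TZ_T$. The only cosmetic difference is your added consistency check $\mathrm{Reg}_{0T}+\mathrm{Reg}_T^{\sharp}=\mathrm{Reg}_T^W$, which the paper does not state but which indeed holds.
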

\begin{proof}
The proof of this lemma relies on the decomposition of these regret terms.
We first define $h_{i t}:=p_{i t}+V_{1}^{\pi^{t}}\big(x_1;R^{-i}\big)$. Then we have $Y_{T}=\frac{1}{T}\sum_{i=1}^{n}\sum_{t=1}^{T}h_{i t}$. For agent $i$, we have
\begin{equation}\label{equa:fact_1}
\begin{aligned}
u_{i t}&=V_{1}^{\pi^{t}}\big(x_1;r_{i}\big)-p_{i t}\\
&=V_{1}^{\pi^{t}}\big(x_1;r_{i}\big)-\Big(h_{i t}-V_{1}^{\pi^{t}}\big(x_1;R^{-i}\big)\Big)\\
&=V_{1}^{\pi^{t}}\big(x_1;R\big)-h_{i t}.
\end{aligned}
\end{equation}
Combining Lemma \ref{lemma:utilities_agent_seller} and Equation \eqref{equa:fact_1}, we can obtain
\begin{equation*}
\begin{aligned}
u_{i *}-u_{i t}&=\Big(V_{1}^{\pi_{*}}\big(x_1;R\big)-V_{1}^{\pi_{*}^{-i}}\big(x_1;R^{-i}\big)\Big)-\Big(V_{1}^{\pi^{t}}\big(x_1;R\big)-h_{i t}\Big)\\
&=\Big(V_{1}^{\pi_{*}}\big(x_1;R\big)-V_{1}^{\pi^{t}}\big(x_1;R\big)\Big)-\Big(V_{1}^{\pi_{*}^{-i}}\big(x_1;R^{-i}\big)-h_{i t}\Big).
\end{aligned}
\end{equation*}
Then by the definition of $\mathrm{Reg}_{T}^{\sharp}$ in Equation \eqref{equa:regret}, we have
\begin{equation*}
\begin{aligned}
\mathrm{Reg}_{T}^{\sharp}&=\sum_{t=1}^{T}\sum_{i=1}^{n}(u_{i *}-u_{i t})\\
&=\sum_{t=1}^{T}\sum_{i=1}^{n}\Big[\Big(V_{1}^{\pi_{*}}\big(x_1;R\big)-V_{1}^{\pi^{t}}\big(x_1;R\big)\Big)-\Big(V_{1}^{\pi_{*}^{-i}}\big(x_1;R^{-i}\big)-h_{i t}\Big)\Big]\\
&=n\sum_{t=1}^{T}\Big(V_{1}^{\pi_{*}}\big(x_1;R\big)-V_{1}^{\pi^{t}}\big(x_1;R\big)\Big)+T\Big(Y_{T}-\sum_{i=1}^{n}V_{1}^{\pi_{*}^{-i}}\big(x_1;R^{-i}\big)\Big)\\
&=n \mathrm{Reg}_{T}^{W}+T Z_{T}.
\end{aligned}
\end{equation*}
This proves the first claim. For the seller, at time $t$, we have the following observation that
\begin{equation}\label{equa:fact_2}
\begin{aligned}
u_{0 t}&=V_{1}^{\pi^{t}}\big(x_1;r_{0}\big)+\sum_{i=1}^{n}p_{i t}\\
&=V_{1}^{\pi^{t}}\big(x_1;r_{0}\big)+\sum_{i=1}^{n}\Big(h_{i t}-V_{1}^{\pi^{t}}\big(x_1;R^{-i}\big)\Big)\\
&=\sum_{i=1}^{n}h_{i t}-(n-1)V_{1}^{\pi^{t}}\big(x_1;R\big).
\end{aligned}
\end{equation}
Similarly, we can now combine Lemma \ref{lemma:utilities_agent_seller} and Equation \eqref{equa:fact_2} and obtain
\begin{equation*}
\begin{aligned}
\mathrm{Reg}_{0 T}&=\sum_{t=1}^{T}(u_{0 *}-u_{0 t})\\
&=\sum_{t=1}^{T}\Big(V_{1}^{\pi_{*}}\big(x_1;R^{-i}\big)-h_{i t}\Big)+(n-1)\sum_{t=1}^{T}\Big(V_{1}^{\pi^{t}}\big(x_1;R\big)-V_{1}^{\pi_{*}}\big(x_1;R\big)\Big)\\
&=-T Z_{T}-(n-1)R_{T}.
\end{aligned}
\end{equation*}
This completes the proof of the second claim.
\end{proof}

The following lemma about relative entropy gives another useful inequality for our proof of the lower bound.
\begin{lemma}(Bretagnolle-Huber Inequality)\label{lemma:BH-inequality}
Let $\QQ_1$ and $\QQ_2$ be probability measures on the same measurable space $(\Omega,\cF)$, and let $A\in\cF$ be an arbitrary event. Then,
\begin{equation}\label{equa:BHI_1}
\QQ_1(A)+ \QQ_2(A^{c})\geq\frac{1}{2}\exp(-\mathrm{KL}(\QQ_1||\QQ_2)),
\end{equation}
where $A^{c}=\Omega\backslash A$ is the complement of $A$.
\end{lemma}


Now we are ready to prove Theorem \ref{theorem:lowerbound}.

\begin{proof}[Proof of Theorem \ref{theorem:lowerbound}]
At the beginning of the proof, we first state a basic inequality here: for any set of real numbers $\{r_{i}\}_{i\geq1}$, and any set of $\{a_{i}\}_{i\geq1}$ such that $\sum_{i\geq1}a_{i}=1$ and $a_i \geq 0$, we have $\max\{r_{i}\}_{i\geq1}\geq\sum_{i\geq1}a_{i}r_{i}$. Combining the above inequality and Lemma \ref{lemma:regret_lower_bound}, we obtain two lower bounds of $\max\{n \mathrm{Reg}_{T}^{W}, \mathrm{Reg}_{T}^{\sharp}, \mathrm{Reg}_{0 T}\}$. The first one is
\begin{equation*}
\begin{aligned}
\max\{n \mathrm{Reg}_{T}^{W}, \mathrm{Reg}_{T}^{\sharp}, \mathrm{Reg}_{0 T}\}&\geq\frac{4}{5}n \mathrm{Reg}_{T}^{W}+\frac{1}{5}\mathrm{Reg}_{0 T}\\
&=\frac{4}{5}n \mathrm{Reg}_{T}^{W}-\frac{1}{5}\big(-(n-1)\mathrm{Reg}_{T}^{W}-T Z_{T}\big)\\
&\geq \frac{2}{5}n \mathrm{Reg}_{T}^{W}-\frac{1}{5}T Z_{T},
\end{aligned}
\end{equation*}
where we use Lemma \ref{lemma:regret_lower_bound} in the first equality and use the fact that $\mathrm{Reg}_{T}^{W} \geq 0$. Moreover, we obtain another lower bound as
\begin{equation*}
\max\{n \mathrm{Reg}_{T}^{W}, \mathrm{Reg}_{T}^{\sharp}, \mathrm{Reg}_{0 T}\}\geq\frac{2}{5}n \mathrm{Reg}_{T}^{W}+\frac{1}{5}T Z_{T}.
\end{equation*}
Comparing the above two lower bounds of $\max\{n \mathrm{Reg}_{T}^{W}, \mathrm{Reg}_{T}^{\sharp}, \mathrm{Reg}_{0 T}\}$, we have
\begin{equation*}
\max\{n \mathrm{Reg}_{T}^{W}, \mathrm{Reg}_{T}^{\sharp}, \mathrm{Reg}_{0 T}\}\geq\frac{2}{5}n \mathrm{Reg}_{T}^{W}+\frac{1}{5}T |Z_{T}|.
\end{equation*}
For brevity, hereafter, we define $S_{T}:=\frac{2}{5}n \mathrm{Reg}_{T}^{W}+\frac{1}{5}T |Z_{T}|$. Our goal is to obtain a lower bound on $\inf_{Alg}\sup_{\Theta}\EE[S_{T}]$ which is also a lower bound on $\max\{n \mathrm{Reg}_{T}^{W}, \mathrm{Reg}_{T}^{\sharp}, \mathrm{Reg}_{0 T}\}$. To achieve this goal, we construct two problems in $\Theta$ and show that no algorithm can work well on these two problems simultaneously.  

We define the underlying MDP $\cM_{0}$ for the first problem $\theta_{0}$ as follows: $\cM_{0}$ is an episodic MDP with horizon $H\geq2$, state space $\cS=\{x_{0},x_{1},x_{2} , \cdots, x_{n+1},x_{n+2}\}$, and action space $\cA=\{b_{1},b_{2}, \cdots,b_{A}\}$ with $|\cA|=A\geq n+2$. We let the initial state be fixed as $x_0$. 
For the transition kernel, at the first step $h=1$, we set
\begin{equation*}
\begin{aligned}
&\cP_{1}(x_{i}|x_{0},b_{i})=1,\quad \text{for all $i\in\{1,2 , \cdots, n+1\}$}, \\
&\cP_{1}(x_{n+2}|x_{0},b_{i})=1\quad \text{for all $i\in\{n+2 , \cdots, A\}$}.
\end{aligned}    
\end{equation*}
Meanwhile, at any subsequent step $h\in\{2 , \cdots, H\}$, we set
\begin{equation*}
\cP_{h}(x_{i}|x_{i},a)=1,\quad \text{for all $a\in{\cA}$},
\end{equation*}
i.e., state $\{x_{i}\}_{i=1}^{n+2}$ are absorbing states. For the reward function, we let $\mathrm{Ber}(p)$ denote a Bernoulli random variable with success probability $p$ and set
\begin{equation}\label{equa:lower_bound_reward_0}
\begin{aligned}
&r_{0 h}(s,a)=0,\quad \text{for all $(h,s,a)\in \{1 , \cdots, H\}\times\cS\times\cA$},\\
&r_{i 1}(x_{0},a)=0,\quad \text{for all $(i,a)\in [n+2]\times\cA$},\\
&r_{j h}(x_{i},a)\sim\mathrm{Ber}(1/2),\quad \text{for all $j\neq i$ and $(i,h,a)\in [n]\times\{2 , \cdots, H\}\times\cA$},\\
&r_{i h}(x_{i},a)=0,\quad \text{for all $(i,h,a)\in [n]\times \{2 , \cdots, H\}\times\cA$},\\
&r_{j h}(x_{n+1},a)\sim\mathrm{Ber}(1/2),\quad \text{for all $(j,h,a)\in [n]\times \{2 , \cdots, H\}\times\cA$},\\
&r_{j h}(x_{n+2},a)\sim\mathrm{Ber}(1/8),\quad \text{for all $(j,h,a)\in [n]\times \{2 , \cdots, H\}\times\cA$},\\
\end{aligned}
\end{equation}
which means the seller's reward is always $0$.
Please see Figure \ref{figure_1} for an illustration of the construction. 

\begin{figure}[!t]
\centering
\includegraphics[scale=0.3]{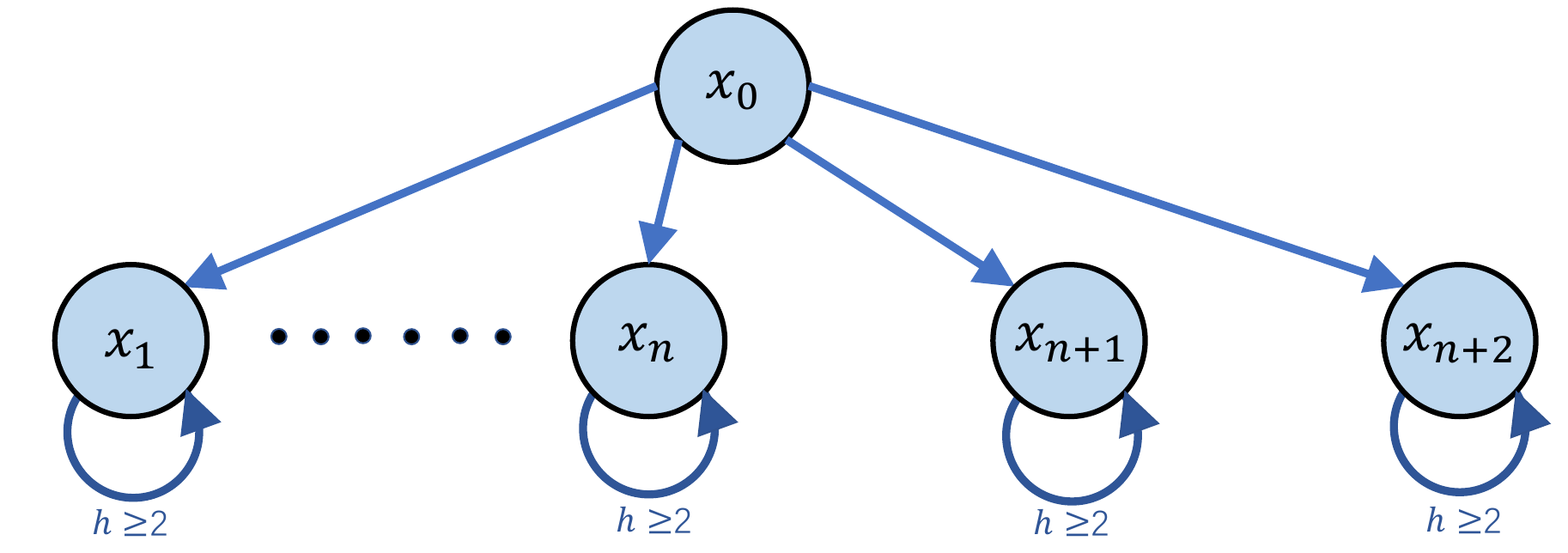}
\caption{An illustration of the episodic MDPs $\cM_{0},\cM_{1}$ with the state space~$\cS = \{ x_0, x_1,  \cdots, x_{n+2}\}$ and action space $\cA= \{ b_j \}_{j=1}^A$. Here we fix the initial state as $x_1 = x_0$, where the agent takes the action $a \in \cA$ and transitions into the second state 
$s_2\in \{x_1 , \cdots, x_{n+2}\}$. In both MDPs, we have the same transition kernel. At the first step $h=1$, the transition kernel satisfies $\cP_{1}(x_{i}|x_{0},b_{i})=1$ for all $i\in\{1,2 , \cdots, n+1\}$ and 
$\cP_{1}(x_{n+2}|x_{0},b_{i})=1$ for all $i\in\{n+2 , \cdots, A\}$.
Also, $x_1, x_2, s x_{n+2} \in \cS$ are the absorbing states. 
The reward functions for $\cM_{0},\cM_{1}$ are showed as in Equations \eqref{equa:lower_bound_reward_0} and \eqref{equa:lower_bound_reward_1}.}
\label{figure_1}
\end{figure}

Note that $\cM_{0}$ is a linear MDP with the dimension $d=n+2$. We set the corresponding feature map $\phi:\cS\times\cA\rightarrow\RR^{d}$ as
\begin{equation*}
\begin{aligned}
&\phi(x_{0},b_{i})=e_{i},\text{for all $i=1,2 , \cdots, n+1$},\\
&\phi(x_{0},b_{i})=e_{n+2},\text{for all $i=n+2 , \cdots, A$},\\
&\phi(x_{i},b_{j})=e_{i},\text{for all $i=1,2 , \cdots, n+1$ and $j\in[A]$},\\
&\phi(x_{i},b_{j})=e_{n+2},\text{for all $i=n+2 , \cdots, A$ and $j\in[A]$},
\end{aligned}
\end{equation*}
where $\{e_{j}\}$ are the canonical basis of $\RR^{n+2}$.
Additionally, if the seller transitions to state $x_{h+1}$, the sum of agents' utilities will be the largest. We can also obtain the following results about problem $\theta_{0}$ directly,
\begin{equation*}
\begin{aligned}
&V_{1}^{\pi_{*}}(x_{0};R)=Q_{1}(x_{0},b_{n+1};R)=\frac{1}{2}n(H-1),\\
&V_{1}^{\pi_{*}^{-i}}(x_{0};R^{-i})=Q_{1}\big(x_{0},b_{i};R^{-i}\big)=\frac{1}{2}(n-1)(H-1),\\
&\sum_{i=1}^{n}V_{1}^{\pi_{*}^{-i}}(x_{0};R^{-i})=\frac{1}{2}n(n-1)(H-1).
\end{aligned}
\end{equation*}
For the rest of this section, we slightly abuse the notation and drop the superscript from the Q-function, as the Q-functions of the different policies we consider are determined by the actions taken by these policies at the first step.

The second problem, i.e., $\theta_{1}$, with the underlying MDP $\cM_{1}$ is nearly the same as $\theta_{0}$ but differs in reward functions at state $x_{i}$ for $i\in[n]$. Then, we define $\theta_{1}$ as
\begin{equation}\label{equa:lower_bound_reward_1}
\begin{aligned}
&r_{j h}(x_{i},a)\sim\mathrm{Ber}(1/2+\delta),\quad \text{for all $j\neq i$ and $(i,h,a)\in [n]\times\{2 , \cdots, H\}\times\cA$},\\
&r_{i h}(x_{i},a)=0,\quad \text{for all $(i,h,a)\in [n]\times \{2 , \cdots, H\}\times\cA$}.
\end{aligned}
\end{equation}
Here we set $\delta\in(0,1/(2n-2))$. The problem $\theta_{1}$ shares the same feature maps $\phi$ and the transition parameters $\mu$ with problem $\theta_{0}$. And the difference lies in the reward parameters. Please see figure \ref{figure_1} for an illustration. Then, we can obtain the following inequalities for problem $\theta_1$,
\begin{equation*}
\begin{aligned}
&V_{1}^{\pi_{*}}(x_{0};R)=Q_{1}(x_{0},b_{n+1};R)=\frac{1}{2}n(H-1),\\
&V_{1}^{\pi_{*}^{-i}}(x_{0};R^{-i})=Q_{1}\big(x_{0},b_{i};R^{-i}\big)=\Big(\frac{1}{2}+\delta\Big)(n-1)(H-1),\\
&\sum_{i=1}^{n}V_{1}^{\pi_{*}^{-i}}(x_{0};R^{-i})=\Big(\frac{1}{2}+\delta\Big)n(n-1)(H-1).
\end{aligned}
\end{equation*}

Specifically, we denote $S_{T}(\theta_{0})$ and $S_{T}(\theta_{1})$ as the $S_{T}$ under problems $\theta_{0}$ and $\theta_{1}$ respectively. 
The expectations and probabilities corresponding to problem $\theta_{i}$ will be denoted as $\EE_{\theta_{i}}$ and $\Pr_{\theta_{i}}$ respectively.
Let $N_{k}(a)=\sum_{\tau=1}^{k}\mathbb{I}\{(a_{1}^{\tau}=a)\}$ denote the number of times that the seller takes action $a$ at the first step in the initial $k$ rounds. Here we rewrite the lower bound of the welfare regret in problem $\theta\in\{\theta_{1},\theta_{2}\}$ as
\begin{equation*}
\begin{aligned}
\EE_{\theta}[\mathrm{Reg}_{T}^{W}]&=\sum_{j=1,j\neq n+1}^{n+2}\big(Q_{1}(x_{0},b_{n+1};R)-Q_{1}(x_{0},b_{j};R)\big)\EE_{\theta}[N_{K}(b_{j})]\\
&\geq \sum_{j=1}^{n}\big(Q_{1}(x_{0},b_{n+1};R)-Q_{1}(x_{0},b_{j};R)\big)\EE_{\theta}[N_{K}(b_{j})].
\end{aligned}
\end{equation*}
Observing that $Q_{1}(x_{0},b_{n+1};R)-Q_{1}(x_{0},b_{j};R)=(H-1)/2$ in problem $\theta_{0}$, and that $|Z_{T}|$ is at least $n(n-1)(H-1)/2$ when $Y_{T}>[n^{2}/2-n/2+n(n-1)\delta/2](H-1)$, we get the following lower bound of $\EE_{\theta_{0}}[S_{T}(\theta_{0})]$ as
\small
\begin{align}
&\EE_{\theta_{0}}[S_{T}(\theta_{0})]\nonumber\\
&\geq \frac{2}{5}n \mathrm{Reg}_{T}^{W}+\frac{1}{5}T|Z_{T}|\nonumber\\
&\geq\frac{2}{5}n \sum_{j=1}^{n}\frac{H-1}{2}\EE_{\theta_{0}}[N_{K}(b_{j})]+\frac{T}{5}\frac{n(n-1)(H-1)\delta}{2}\Pr_{\theta_{0}}\Big(\underbrace{Y_{T}>\Big[\frac{n^{2}}{2}-\frac{n}{2}+\frac{n(n-1)\delta}{2}\Big](H-1)}_{\displaystyle\mathrm{event }~ E}\Big)\nonumber\\
&\geq\frac{n(H-1)}{10}\Big[\sum_{j=1}^{n}2\EE_{\theta_{0}}[N_{K}(b_{j})]+T(n-1)\delta\Pr_{\theta_{0}}(E)\Big].\label{equa:q_lower_bound_1}
\end{align}
\normalsize
In problem $\theta_{1}$, we have $|Z_{T}|$ is at least $n(n-1)(H-1)/2$ when $Y_{T}\leq[n^{2}/2-n/2+n(n-1)\delta/2](H-1)$. We drop the welfare regret, which is positive, in the analysis and use the above statement regarding $Y_{T}$ under the event $E^{c}$ in problem $\theta_{1}$ to obtain
\begin{equation}\label{equa:q_lower_bound_2}
\EE_{\theta_{1}}[S_{T}(\theta_{1})]\geq\frac{n(H-1)}{10}T(n-1)\delta\Pr_{\theta_{1}}(E^{c}).
\end{equation}
Applying Lemma \ref{lemma:BH-inequality} to $\Pr_{\theta_{0}}(E)+\Pr_{\theta_{1}}(E^{c})$, we have
\begin{equation*}
\Pr_{\theta_{0}}(E)+\Pr_{\theta_{1}}(E^{c})\geq\frac{1}{2}\exp(-\mathrm{KL}(\Pr_{\theta_{0}}^T||\Pr_{\theta_{1}}^T)),
\end{equation*}
where we slightly abuse the notation and let $ \Pr_{\theta_{0}}^T$ and $\Pr_{\theta_{1}}^T$ denote the probability distribution of the observed rewards up to time $T$ in problem $\theta_{0}$ and $\theta_{1}$ respectively. We also notice that if the seller takes action $b_{n+1},b_{n+2}$ at the first step, then $\Pr_{\theta_{0}}^T= \Pr_{\theta_{1}}^T$. If the seller take action $b_{i}$ for $i\in\{1,2, s n\}$ in the first step, then the reward distributions of agent $i$ are the same in both $\theta_{0}$ and $\theta_{1}$.However, for other agents $j\neq i$, the KL divergence between the corresponding distributions in the two problems is $-\log(1-4\delta^{2})(H-1)$ since the rewards are mutually independent and the KL divergence between $\mathrm{Ber}(1/2)$ and $\mathrm{Ber}(1/2+\delta)$ is $-\log(1-4\delta^{2})$. Then we have
\begin{equation}\label{equa:KL_1}
\mathrm{KL} (\Pr_{\theta_{0}}^T||\Pr_{\theta_{1}}^T)=-(n-1)(H-1)\log(1-4\delta^{2})\sum_{j=1}^{n}\EE_{\theta_{0}}[N_{K}(b_{j})].
\end{equation}
By combining Equations \eqref{equa:q_lower_bound_1}, \eqref{equa:q_lower_bound_2},\eqref{equa:BHI_1}, and \eqref{equa:KL_1}, we obtain the lower bound for $\EE_{\theta_{0}}[S_{T}(\theta_{0})]+\EE_{\theta_{1}}[S_{T}(\theta_{1})]$ as
\begin{equation*}
\begin{aligned}
&\EE_{\theta_{0}}[S_{T}(\theta_{0})]+\EE_{\theta_{1}}[S_{T}(\theta_{1})]\\
&\qquad\geq\frac{n(H-1)}{10}\Big[\sum_{j=1}^{n}2\EE_{\theta_{0}}[N_{K}(b_{j})]+T(n-1)\delta\big(\Pr_{\theta_{0}}(E)+\Pr_{\theta_{1}}(E^{c})\big)\Big]\\
&\qquad\geq\frac{n(H-1)}{10}\Big[2\sum_{j=1}^{n}\EE_{\theta_{0}}[N_{K}(b_{j})]\\
&\qquad\qquad +\frac{1}{2}T(n-1)\delta\exp\Big((n-1)(H-1)\log(1-4\delta^{2})\sum_{j=1}^{n}\EE_{\theta_{0}}[N_{K}(b_{j})]\Big)\Big]\\
&\qquad\geq\frac{n(H-1)}{10}\min\Big\{\underbrace{2x+\frac{1}{2}T(n-1)\delta\exp\Big((n-1)(H-1)\log(1-4\delta^{2})x\Big)}_{\displaystyle :=f(x)}\Big\},
\end{aligned}
\end{equation*}
where we combine Equation \eqref{equa:q_lower_bound_1} and \eqref{equa:q_lower_bound_2} in the first inequality, and the second inequality is by Equation \eqref{equa:BHI_1} and Equation \eqref{equa:KL_1}. For the last step we substitute $\sum_{j=1}^{n}\EE_{\theta_{0}}[N_{K}(b_{j})]$ by $x$ and turn to find the minimum value of the function $f(x)$. Then, we have $$x_{0}=\frac{-1}{(n-1)(H-1)\log(1-4\delta^{2})}\log\Big(\frac{-T(n-1)^{2}(H-1)\delta\log(1-4\delta^{2})}{4}\Big)$$ as the minimum   of $f(x)$. Thus, we have 
\begin{equation}\label{equa:lower_bound_1_Q}
\begin{aligned}
\EE_{\theta_{0}}[S_{T}(\theta_{0})]+\EE_{\theta_{1}}[S_{T}(\theta_{1})]&\geq\frac{n(H-1)}{10}2 x_{0}\\
&\geq\frac{-1}{5\log(1-4\delta^{2})}\log\Big(\frac{-T(n-1)^{2}(H-1)\delta\log(1-4\delta^{2})}{4}\Big).
\end{aligned}
\end{equation}
Using the basic inequality $x/(1+x)\leq\log(1+x)\leq x$ for $x>-1$, we have
\begin{equation*}
-4\delta^{2}\geq\log(1-4\delta^2)\geq\frac{-4\delta^2}{1-4\delta^2}\geq-8\delta^2,
\end{equation*}
when $0\leq\delta^2\leq 1/8$.
Combining Equation \eqref{equa:lower_bound_1_Q} and the above inequality, we obtain
\begin{equation*}
\begin{aligned}
\EE_{\theta_{0}}[S_{T}(\theta_{0})]+\EE_{\theta_{1}}[S_{T}(\theta_{1})]&\geq\frac{-1}{5(-8\delta^2)}\log\bigg(\frac{-T(n-1)^{2}(H-1)\delta(-4\delta^2)}{4}\bigg)\\
&=\frac{1}{40\delta^2}\log\Big(T(n-1)^{2}(H-1)\delta^3\Big).
\end{aligned}
\end{equation*}
Finally, we choose $\delta=\Big({1}/\big(T(n-1)^{2}(H-1)\big)\Big)^{1/3}$ to obtain the lower bound
\begin{equation*}
\frac{1}{2}\big(\EE_{\theta_{0}}[S_{T}(\theta_{0})]+\EE_{\theta_{1}}[S_{T}(\theta_{1})]\big)\geq c  n^{4/3}H^{2/3}T^{2/3}  ,
\end{equation*}
for some absolute constant $c$. Here $\delta\in(0,1/(2n-2))$ is satisfied when $T\geq 8(n-1)/(H-1)$ and $\delta^2\in(0,1/8)$ is satisfied when $n\geq3$. Observing that
\begin{equation*}
\sup_{\theta\in\Theta}\EE[S_{T}(\theta)]\geq \max\{\big(\EE_{\theta_{0}}[S_{T}(\theta_{0})]+\EE_{\theta_{1}}[S_{T}(\theta_{1})]\big)\}\geq\frac{1}{2}\big(\EE_{\theta_{0}}[S_{T}(\theta_{0})]+\EE_{\theta_{1}}[S_{T}(\theta_{1})]\big)
\end{equation*} we have the conclusion that
\begin{equation*}
\inf_{Alg} \sup_\Theta \;\EE\left[\max\big( n \mathrm{Reg}_{T}^{W},\mathrm{Reg}_{T}^{\sharp},\mathrm{Reg}_{0 T})\right] \geq \Omega(n^{4/3}H^{2/3}T^{2/3}  ).
\end{equation*}

On the other hand, noting that $\max\big( n \mathrm{Reg}_{T}^{W},\mathrm{Reg}_{T}^{\sharp},\mathrm{Reg}_{0 T}) \geq n \mathrm{Reg}_{T}^{W}$ always holds, we have
\begin{align} \label{eq:lowerbound-3}
\max\big( n \mathrm{Reg}_{T}^{W},\mathrm{Reg}_{T}^{\sharp},\mathrm{Reg}_{0 T})\geq n \mathrm{Reg}_{T}^{W} = n \left[T    V_1^{*}(x_{1};R)-\sum_{t=1}^{T}V_{1}^{\pi^{t}}(x_{1};R)\right],
\end{align}
where we recall $V_1^{*}(x_{1};r) := \max_\pi V^{\pi}(x_{1};r)$ for any reward function $r$. Since $R = \sum_{i=0}^n r_i$, we consider a simple hard instance that $ r_1 = r_2= s =r_n =r'$ and $r_0 = R_{\max} \times r'$, where $r': \cS\times\cA\mapsto[0,1]$ is some reward function. In other words, here we consider an instance with the same reward function for all $r_i, 1\leq i \leq n$, and $r_0$ is simply the same reward function scaled by $R_{\max}$. Under this setting, by \eqref{eq:lowerbound-3}, we have
\begin{align*}
\max\big( n \mathrm{Reg}_{T}^{W},\mathrm{Reg}_{T}^{\sharp},\mathrm{Reg}_{0 T})]&\geq  n \left[T   V_1^{*}(x_{1};R)-\sum_{t=1}^{T}V_{1}^{\pi^{t}}(x_{1};R)\right] \\
&=  n(n+R_{\max}) \left[T    V_1^{*}(x_{1};r')-\sum_{t=1}^{T}V_{1}^{\pi^{t}}(x_{1};r')\right].
\end{align*}
The above inequality implies that the lower bound of $\max\big( n \mathrm{Reg}_{T}^{W},\mathrm{Reg}_{T}^{\sharp},\mathrm{Reg}_{0 T})]$ can be further lower bounded by the lower bound of the regret for linear MDPs of dimension $d$ with rewards in $[0,1]$. Theorem 1 in \citet{zhou2020nonstationary} shows that for any algorithm, if $d \geq 4$ and $T \geq  64(d - 3)^2 H$, then there exists at least one linear MDP instance that incurs regret at least $\Omega(d\sqrt{HT})$. Therefore, we can further obtain that under the same assumptions, the minimax lower bound for $\max\big( n \mathrm{Reg}_{T}^{W},\mathrm{Reg}_{T}^{\sharp},\mathrm{Reg}_{0 T})]$ is at least $\Omega\big(n(n + R_{\max})d\sqrt{HT}\big)$, i.e., 
\begin{align*}
\inf_{\mathsf{Alg}} \sup_\Theta \;\EE\left[\max\big( n \mathrm{Reg}_{T}^{W},\mathrm{Reg}_{T}^{\sharp},\mathrm{Reg}_{0 T})\right] \geq \Omega\bigg(n(n + R_{\max})d\sqrt{HT}\bigg).
\end{align*}
Combining the above results together, we have the following lower bound as 
\begin{align*}
\inf_{\mathsf{Alg}} \sup_\Theta \;\EE\left[\max\big( n \mathrm{Reg}_{T}^{W},\mathrm{Reg}_{T}^{\sharp},\mathrm{Reg}_{0 T})\right] \geq \Omega\bigg(n^{4/3} H^{2/3} T^{2/3} +n(n + R_{\max})d\sqrt{HT}\bigg).
\end{align*}
This concludes the proof of Theorem \ref{theorem:lowerbound}. 
\end{proof}



\section{Other Supporting Lemmas}\label{sec:proof-end}
The following lemma from \citet{abbasi2011improved} establishes the concentration of self-normalized processes.

\begin{lemma}[Concentration of Self-Normalized Processes]\label{lemma:concentration_abbasi}
Let $\{\cF_t \}^\infty_{t=0}$ be a filtration and $\{\epsilon_t\}^\infty_{t=1}$ be an $\RR$-valued stochastic process such that $\epsilon_t$ is $\cF_{t} $-measurable for all $t\geq 1$.
Moreover, suppose that conditioning on $\cF_{t-1}$, 
$\epsilon_t $ is a  zero-mean and $\sigma$-sub-Gaussian random variable for all $t\geq 1$, that is,  
\begin{equation*}
\EE[\epsilon_t\given \cF_{t-1}]=0,\qquad \EE\bigl[ \exp(\lambda \epsilon_t) \biggiven \cF_{t-1}\bigr]\leq \exp(\lambda^2\sigma^2/2) , \qquad \forall \lambda \in \RR. 
\end{equation*}
Meanwhile, let $\{\phi_t\}_{t=1}^\infty$ be an $\RR^d$-valued stochastic process such that  $\phi_t $  is $\cF_{t -1}$-measurable for all $ t\geq 1$. 
Also, let  $M_0 \in \RR^{d\times d}$ be a  deterministic positive-definite matrix and 
\begin{equation*}
M_t = M_0 + \sum_{s=1}^t \phi_s\phi_s^\top
\end{equation*}
for all $t\geq 1$. For all $\delta>0$, it holds that
\begin{equation*}
\Big\| \sum_{s=1}^t \phi_s \epsilon_s \Big\|_{ M_t ^{-1}}^2 \leq 2\sigma^2   \log \Bigl( \frac{\det(M_t)^{1/2}  \det(M_0)^{- 1/2}}{\delta} \Bigr)
\end{equation*}
for all $t\ge1$ with probability at least $1-\delta$.
\end{lemma}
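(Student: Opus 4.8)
The plan is to prove this via the standard \emph{method of mixtures} (pseudo-maximization) argument: I would build a family of exponential supermartingales indexed by $\lambda\in\RR^d$, then average them against a Gaussian prior to obtain a single nonnegative supermartingale whose value encodes the self-normalized quantity $\|S_t\|_{M_t^{-1}}^2$, where $S_t:=\sum_{s=1}^t\phi_s\epsilon_s$. Without loss of generality I would first take $\sigma=1$; rescaling $\epsilon_t\mapsto\epsilon_t/\sigma$ at the end reintroduces the factor $2\sigma^2$.

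First, I would fix $\lambda\in\RR^d$ and define
\[
M_t^\lambda=\exp\Big(\lambda^\top S_t-\tfrac12\,\lambda^\top\Big(\textstyle\sum_{s=1}^t\phi_s\phi_s^\top\Big)\lambda\Big),\qquad M_0^\lambda=1.
\]
The crucial structural input is predictability: $\phi_t$ is $\cF_{t-1}$-measurable while $\epsilon_t$ is $\cF_t$-measurable, conditionally zero-mean and $1$-sub-Gaussian. Hence $\lambda^\top\phi_t$ is $\cF_{t-1}$-measurable and the sub-Gaussian hypothesis yields $\EE[\exp(\lambda^\top\phi_t\,\epsilon_t)\given\cF_{t-1}]\le\exp(\tfrac12(\lambda^\top\phi_t)^2)$. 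Multiplying through by the $\cF_{t-1}$-measurable factor $M_{t-1}^\lambda$, I would verify $\EE[M_t^\lambda\given\cF_{t-1}]\le M_{t-1}^\lambda$, so each $\{M_t^\lambda\}_{t\ge0}$ is a nonnegative supermartingale with $\EE[M_t^\lambda]\le1$. Next I would average over $\lambda$ against the centered Gaussian density $h$ with covariance $M_0^{-1}$, setting $\bar M_t:=\int_{\RR^d}M_t^\lambda\,h(\lambda)\,d\lambda$. By Tonelli (nonnegative integrands) one interchanges the conditional expectation and the $\lambda$-integral, so $\bar M_t$ is again a nonnegative supermartingale with $\bar M_0=1$. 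The point of the Gaussian prior is that the integral is explicit: combining exponents leaves $\lambda^\top S_t-\tfrac12\lambda^\top M_t\lambda$ with $M_t=M_0+\sum_{s\le t}\phi_s\phi_s^\top$, and completing the square followed by the Gaussian integral gives the closed form
\[
\bar M_t=\Big(\frac{\det M_0}{\det M_t}\Big)^{1/2}\exp\Big(\tfrac12\,\|S_t\|_{M_t^{-1}}^2\Big).
\]

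Finally I would apply Ville's maximal inequality for nonnegative supermartingales, $\PP\big(\sup_{t\ge0}\bar M_t\ge1/\delta\big)\le\delta\,\EE[\bar M_0]=\delta$. On the complementary event, which holds with probability at least $1-\delta$ \emph{simultaneously} for all $t\ge1$, the bound $\bar M_t<1/\delta$ rearranges and, after taking logarithms, becomes $\tfrac12\|S_t\|_{M_t^{-1}}^2<\log\big(\det(M_t)^{1/2}\det(M_0)^{-1/2}/\delta\big)$, which is exactly the claim once $\sigma$ is restored.

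The main obstacle is the averaging step. A single fixed $\lambda$ cannot capture the data-dependent optimizer $\lambda^\star=M_t^{-1}S_t$ — that choice would directly produce $\|S_t\|_{M_t^{-1}}^2$ but is random and hence illegitimate inside a supermartingale indexed by a deterministic $\lambda$. The mixture against a Gaussian prior is precisely the device that legitimizes this random optimization while simultaneously generating the $\det M_t$ normalization; recognizing that the prior covariance should match $M_0$ so that the exponent closes into $M_t$ is the key algebraic insight. A secondary point requiring care is invoking the maximal inequality to obtain the conclusion uniformly over all $t$, rather than only at a fixed horizon.
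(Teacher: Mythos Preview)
Your proposal is correct: the method-of-mixtures argument (exponential supermartingale in $\lambda$, Gaussian mixture with covariance $M_0^{-1}$, closed-form evaluation, then Ville's maximal inequality) is exactly the standard proof of this result. The paper itself does not prove the lemma at all --- it simply cites Theorem~1 of \citet{abbasi2011improved} --- and your argument is precisely the proof given in that reference, so there is no meaningful difference in approach to report.
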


\begin{lemma} [\citet{abbasi2011improved}]\label{lemma:telescope}
Let $\{\bphi_t \}_{t\geq 0}$ be a bounded sequence in $\RR^d$  satisfying $\sup_{t\geq 0}\| \bphi_t \| \leq 1$. Let $\Lambda_0  \in \RR^{d\times d}$ be a positive definite matrix.  For any $t\geq 0$, we define $
\Lambda_t = \Lambda_0 + \sum_{j = 1}^t \bphi_ j ^\top   \bphi_j$.  Then, if the smallest eigenvalue of $\Lambda_0$  satisfies $\lambda_{\min}(\Lambda_0) \geq 1 $, we have 
$$
\log \biggl [  \frac{\det(
\Lambda_t )}{\det(\Lambda_0)}\biggr ] \leq \sum_{j=1}^{t} \bphi_j^\top \Lambda_{j-1} ^{-1} \bphi_j \leq 2 \log \biggl [  \frac{\det(
\Lambda_t )}{\det(\Lambda_0)}\biggr ].
$$ 
\end{lemma}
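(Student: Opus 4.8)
The plan is to collapse the determinant ratio into a telescoping product using the rank-one update structure of $\Lambda_j$, and then apply two elementary scalar inequalities term-by-term. Writing $x_j := \bphi_j^\top \Lambda_{j-1}^{-1}\bphi_j$, the entire lemma reduces to two facts: the telescoping identity $\log[\det(\Lambda_t)/\det(\Lambda_0)] = \sum_{j=1}^t \log(1+x_j)$, and the bound $0 \le x_j \le 1$, after which the result is immediate from $\tfrac{x}{2}\le \log(1+x)\le x$ on $[0,1]$.

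First I would record the recursion $\Lambda_j = \Lambda_{j-1} + \bphi_j\bphi_j^\top$ implied by the definition, and apply the matrix determinant lemma to this rank-one update: since $\Lambda_{j-1}$ is positive definite and hence invertible, $\det(\Lambda_{j-1} + \bphi_j\bphi_j^\top) = \det(\Lambda_{j-1})\,(1 + \bphi_j^\top\Lambda_{j-1}^{-1}\bphi_j) = \det(\Lambda_{j-1})(1+x_j)$. Taking logarithms and summing over $j=1,\dots,t$ telescopes to $\log\det(\Lambda_t) - \log\det(\Lambda_0) = \sum_{j=1}^t \log(1+x_j)$, which is the exact quantity appearing in the lemma.

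Next I would bound $x_j$. Nonnegativity $x_j\ge 0$ is immediate since $\Lambda_{j-1}^{-1}$ is positive definite. For the upper bound, $x_j \le \|\bphi_j\|^2\,\|\Lambda_{j-1}^{-1}\|_{\oper} = \|\bphi_j\|^2/\lambda_{\min}(\Lambda_{j-1})$. Because $\Lambda_{j-1} = \Lambda_0 + \sum_{i<j}\bphi_i\bphi_i^\top \succeq \Lambda_0$, the minimum eigenvalue is monotone under addition of positive semidefinite matrices, so $\lambda_{\min}(\Lambda_{j-1}) \ge \lambda_{\min}(\Lambda_0) \ge 1$; combined with $\|\bphi_j\|\le 1$ this yields $x_j \le 1$. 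This is precisely the step where both normalization hypotheses, namely $\sup_{t}\|\bphi_t\|\le 1$ and $\lambda_{\min}(\Lambda_0)\ge 1$, are consumed. Finally I would invoke the scalar inequalities $\tfrac{x}{2}\le \log(1+x)\le x$ valid for $x\in[0,1]$: the right inequality holds for every $x\ge 0$ by concavity of $\log$, while the left holds on $[0,1]$ because $g(x)=\log(1+x)-x/2$ satisfies $g(0)=0$ and $g'(x)=(1+x)^{-1}-\tfrac12\ge 0$ there. Applying these to each summand gives $\log[\det(\Lambda_t)/\det(\Lambda_0)] = \sum_j\log(1+x_j) \le \sum_j x_j \le \sum_j 2\log(1+x_j) = 2\log[\det(\Lambda_t)/\det(\Lambda_0)]$, establishing both inequalities at once.

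This lemma is the standard elliptical potential lemma of \citet{abbasi2011improved}, and none of the steps is genuinely difficult. The only point that demands care is verifying $x_j\le 1$, because the lower scalar bound $\log(1+x)\ge x/2$ fails for large $x$; the whole argument would break without the assumption $\lambda_{\min}(\Lambda_0)\ge 1$ together with $\|\bphi_t\|\le 1$. I would therefore state the eigenvalue-monotonicity and operator-norm estimate cleanly, since that is the crux connecting the hypotheses to the admissible range $[0,1]$ on which the scalar inequalities are valid.
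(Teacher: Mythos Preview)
Your proof is correct and is the standard argument for this elliptical potential lemma. The paper does not actually prove this statement itself; it simply attributes the result to \citet{abbasi2011improved} and states it without proof, so there is no paper-side argument to compare against.
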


The following lemma from \citet{2019Provably0} depicts the difference between an estimated value function and the value function under a certain policy.

\begin{lemma}[Extended Value Difference \citep{2019Provably0}]\label{lemma:ext_val_diff}
Let $\pi = \{ \pi _h \}_{h =1}^H $ and $\pi' = \{ \pi_h' \}_{ h = 1}^H  $ be any two policies and let $\{ \hat Q_h \}_{h=1}^H $ be any estimated Q-functions. 
For all $h \in [H]$, we define the estimated value function $\hat V_h  \colon \cS\mapsto \RR$  by setting $\hat V_h (x) = \langle \hat Q_h (x,   ), \pi_h (  \given x ) \rangle_{\cA}$ for all $x \in \cS$. 
For all $x \in \cS$, we have 
\begin{equation*}
\begin{aligned}
\widehat{V}_1(x) - V_1^{\pi' }(x) &= \sum_{h=1}^H \EE_{\pi' }\big[ \langle \hat{Q}_h (x_h, ) , \pi_h( \given x_h) - \pi'_h( \given x_h)\rangle_{\cA } \biggiven x_1=x\big]\\
&\qquad + \sum_{h=1}^H\EE_{\pi' }\big[     \hat{Q}_h (x_h,a_h)  - (\BB_h \widehat{V}_{h+1} )(x_h,a_h)  \biggiven x_1=x \big],
\end{aligned}
\end{equation*}
where $\EE_{\pi' } $ is taken with respect to the trajectory generated by $\pi'$, while $\BB_h$ is the Bellman operator defined in Equation \eqref{equa:Bellman operator}.
\end{lemma}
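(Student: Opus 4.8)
The plan is to establish the identity by a one-step decomposition of $\hat V_h(x) - V_h^{\pi'}(x)$, followed by telescoping over $h$ along trajectories generated by $\pi'$. I will repeatedly use the Bellman consistency equation for the true value function of $\pi'$, namely $V_h^{\pi'}(x) = \langle Q_h^{\pi'}(x,\cdot), \pi'_h(\cdot\given x)\rangle_{\cA}$ with $Q_h^{\pi'}(x,a) = (\BB_h V_{h+1}^{\pi'})(x,a)$, together with the terminal convention $\hat V_{H+1} = V_{H+1}^{\pi'} = 0$.

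First I would fix a step $h$ and an arbitrary state $x$, and, using the definition $\hat V_h(x) = \langle \hat Q_h(x,\cdot), \pi_h(\cdot\given x)\rangle_{\cA}$ and the Bellman equation above, add and subtract $\langle \hat Q_h(x,\cdot), \pi'_h(\cdot\given x)\rangle_{\cA}$ to obtain
\[
\hat V_h(x) - V_h^{\pi'}(x) = \langle \hat Q_h(x,\cdot),\, \pi_h(\cdot\given x) - \pi'_h(\cdot\given x)\rangle_{\cA} + \langle \hat Q_h(x,\cdot) - (\BB_h V_{h+1}^{\pi'})(x,\cdot),\, \pi'_h(\cdot\given x)\rangle_{\cA}.
\]
The first term is already the advantage-type summand in the statement. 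For the second term I insert $\pm (\BB_h \hat V_{h+1})(x,a)$ and use the linearity of the Bellman operator, which yields $(\BB_h \hat V_{h+1})(x,a) - (\BB_h V_{h+1}^{\pi'})(x,a) = (\PP_h(\hat V_{h+1} - V_{h+1}^{\pi'}))(x,a)$. This splits the second term into the Bellman-error inner product $\langle \hat Q_h(x,\cdot) - (\BB_h \hat V_{h+1})(x,\cdot),\, \pi'_h(\cdot\given x)\rangle_{\cA}$ plus a remainder $\EE_{a\sim\pi'_h(\cdot\given x)}\,\EE_{x'\sim\PP_h(\cdot\given x,a)}[\hat V_{h+1}(x') - V_{h+1}^{\pi'}(x')]$.

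Next I would observe that this remainder is precisely the quantity $\hat V_{h+1} - V_{h+1}^{\pi'}$ evaluated one step later under the $\pi'$-dynamics. Taking the expectation $\EE_{\pi'}[\,\cdot\given s_1 = x]$ of the displayed identity and iterating from $h = 1$ to $h = H$, the remainder produced at step $h$ is exactly the full difference at step $h+1$ in the law of the $\pi'$-trajectory; thus the chain of remainders telescopes and terminates via $\hat V_{H+1} = V_{H+1}^{\pi'} = 0$. Rewriting the surviving inner product against $\pi'_h$ as $\EE_{a_h\sim\pi'_h}[\,\cdot\,]$ turns the Bellman-error term into $\EE_{\pi'}[\hat Q_h(s_h,a_h) - (\BB_h \hat V_{h+1})(s_h,a_h)\given s_1=x]$, producing exactly the two sums claimed.

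The only real care-point, which I expect to be the main (albeit mild) obstacle, is the bookkeeping of the two distinct roles played by the policies. The state distribution throughout is governed by $\pi'$, so every outer expectation is $\EE_{\pi'}$; meanwhile $\hat V_h$ is defined by averaging $\hat Q_h$ against $\pi_h$, and $\BB_h$ involves only the reward and transition. Keeping these separate is what ensures the advantage term carries the difference $\pi_h - \pi'_h$ while the Bellman-error term is averaged purely under $\pi'_h$, and that the telescoping remainder is consistently propagated along the $\pi'$-trajectory. The interchange of summation and expectation needed to unroll the recursion is routine under the finite-horizon episodic model, so no further regularity assumptions are needed.
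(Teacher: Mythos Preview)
Your proposal is correct and follows the standard telescoping argument used in the original source. The paper itself does not provide a proof of this lemma; it simply cites \citet{2019Provably0} (Section~B.1), whose argument is precisely the one-step decomposition and telescoping you describe.
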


The following lemma controls the norms of the $w$'s generated by either Algorithm~\ref{algorithm:L3} or Algorithm~\ref{algorithm:L4} and is used heavily for the concentration analysis.
\begin{lemma}[Bounded Weights of Value Functions \citep{2020Is}] \label{lemma:bound_weight_of_bellman}
Let $V_{\max}>0$ be an absolute constant. For any function $V:\cS\to [0,V_{\max}], h\in [H],$ and $(\fR, \pi) \in \{(R, \widehat{\pi}), (\Tilde{R}, \Tilde{\pi}^{\ddagger}_t)\} \cup \{(r_i + \Tilde{R}^{-i}, \Tilde{\pi}_t^{\dagger i}), (R^{-i}, *), (\Tilde{R}^{-i}, \dagger), (\Tilde{R}^{-i}, \ddagger), (R^{-i}, \widehat{\pi}^t), (\Tilde{R}^{-i}, \Tilde{\pi}^{\dagger i}_t), (\Tilde{R}^{-i}, \Tilde{\pi}_t^{\ddagger})\}_{i =1 }^n$, we have 
\begin{equation*}
\|w_h\|\leq \|\theta_h\|+V_{\max}\sqrt{d},\qquad \|\hat{w}_{h}^{t,\pi}\|,\|\check{w}_{h}^{t,\pi}\big\|\leq (n+R_{\max})H\sqrt{K d/\lambda},
\end{equation*}
where $\hat{w}_{h}^{t,\pi}, \check{w}_{h}^{t,\pi}$ are the linear weights associated with the pair $(\fR, \pi)$, $w_h$ parameterizes $(\BB_hV)( ,  ; \fR)$, and $\theta_h$ parameterizes $\fR$.
\end{lemma}
\begin{proof}
Observe that in our setting, the absolute value of the empirical observations of $(\BB_hV)( ,  ; \fR)$ is instead
$|\fR_{h}^{\tau}+\widehat{V}_{h+1}^{t,\pi}( ; \fR)|$, which is upper bounded by $2(n+R_{\max})H$. Rescaling the Lemma B.1 of \citet{2020Is} completes the proof.
\end{proof}

\begin{lemma}
For all $h\in [H]$ and all $\varepsilon > 0$,we have 	\label{lem:covering_num}
\begin{equation*}
\log | \cN_h (  \varepsilon; L, B, \lambda)  | \leq d   \log (1+ 4 L /  \varepsilon  ) + d^2     \log\bigl(1+ 8 d^{1/2} B^2 / ( \varepsilon^2\lambda) \bigr),
\end{equation*}
where the function class 
\begin{equation*}
\begin{aligned}
&\cV_{h}(L,B,\lambda) =\big\{V_h(x;\theta,\beta,\Sigma)\colon \cS\to [0,(n+R_{\max})  H]~\text{with}~\|\theta\|\leq L, \beta\in [0,B], \Sigma \succeq \lambda  I   \big\}\\
&\text{with~~}V_h(x;\theta,\beta,\Sigma) = \max_{a\in \cA}  \Bigl\{ \min\bigl \{ \phi(x,a)^\top \theta + \beta  \sqrt{ \phi(x,a)^\top \Sigma^{-1}\phi(x,a) },(n+R_{\max})  H\bigr \} \Bigr\}
\end{aligned}
\end{equation*} and $ \cN_{h} (\varepsilon; L, B, \lambda)$ is the 
$\varepsilon$-cover of $\cV_h(L, B, \lambda)$ with respect to the distance $\dist(V,V^{\prime})=\sup_{x\in\cS}\big\|V(x)-V^{\prime}(x)\big\|$.
\end{lemma}
\begin{proof}
See Lemma D.6 in \citet{2019Provably} for a detailed proof. 
\end{proof}

\begin{lemma}[Concentration of Self-Normalized Processes]\label{lemma:concentration_of_self_norm_pro}
Let $V\colon\cS\mapsto[0,(n+R_{\max}) (H-1)]$ be any fixed function. For any $h\in[H], p\in(0,1)$, and reward function $r$, we have
\begin{equation*}
\Pr \biggl(  \Big\|   \sum_{\tau=1}^{K} \phi(x_h^\tau,a_h^\tau)   \epsilon_h^\tau(V; r) \Big\|_{(\Lambda_h^{t})^{-1}}^2 > (n+R_{\max})^2 H^2  \bigl (  2    \log(1/ p ) + d  \log(1+K/\lambda)\big) 
\biggr) \leq p.
\end{equation*}
\end{lemma}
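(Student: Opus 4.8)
The plan is to recognize the quantity inside the probability as the squared self-normalized norm of the vector-valued martingale $\sum_{\tau=1}^{K}\phi(x_h^\tau,a_h^\tau)\,\epsilon_h^\tau(V)$ and apply Lemma~\ref{lemma:concentration_abbasi} verbatim. The work reduces to three routine steps: (i) setting up a filtration for which $\{\epsilon_h^\tau(V)\}_{\tau}$ is a conditionally zero-mean, conditionally sub-Gaussian sequence with $\phi(x_h^\tau,a_h^\tau)$ predictable; (ii) plugging into Lemma~\ref{lemma:concentration_abbasi} with the regularizer $\lambda I$ as the seed matrix; and (iii) bounding $\det(\Lambda_h^t)$ crudely using $\|\phi\|\le 1$. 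It is essential here that $V$ is a \emph{fixed} function, so no covering is needed at this stage; the data dependence of the estimated value functions is absorbed by the $\varepsilon$-net and union bound in the surrounding proof of Lemma~\ref{lemma:event prob}.

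For step (i), fix $h\in[H]$ and let $\cF_\tau$ denote the $\sigma$-algebra generated by all randomness realized through the observation of the successor state $x_{h+1}^\tau$ in round $\tau$ (together with the complete history of rounds $1,\dots,\tau-1$). Then $\phi(x_h^\tau,a_h^\tau)$ is $\cF_{\tau-1}$-measurable and, by Equation~\eqref{equa:define eps V}, $\epsilon_h^\tau(V)=R_h^\tau+V(x_{h+1}^\tau)-(\BB_hV)(x_h^\tau,a_h^\tau)$ is $\cF_\tau$-measurable. The definition of the Bellman operator in Equation~\eqref{equa:Bellman operator} gives $\EE\bigl[R_h^\tau+V(x_{h+1}^\tau)\bigm|\cF_{\tau-1}\bigr]=(\BB_hV)(x_h^\tau,a_h^\tau)$, hence $\EE[\epsilon_h^\tau(V)\mid\cF_{\tau-1}]=0$. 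Since both $R_h^\tau+V(x_{h+1}^\tau)$ and its conditional mean $(\BB_hV)(x_h^\tau,a_h^\tau)$ lie in $[0,(n+R_{\max})H]$ (using that $V$ takes values in $[0,(n+R_{\max})(H-1)]$ and the reward at step $h$ is at most $n+R_{\max}$), we get $|\epsilon_h^\tau(V)|\le(n+R_{\max})H$, so $\epsilon_h^\tau(V)$ is $\sigma$-sub-Gaussian given $\cF_{\tau-1}$ with $\sigma=(n+R_{\max})H$.

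For steps (ii)--(iii), apply Lemma~\ref{lemma:concentration_abbasi} with $M_0=\lambda I$ and $M_K=\Lambda_h^t=\lambda I+\sum_{\tau=1}^{K}\phi(x_h^\tau,a_h^\tau)\phi(x_h^\tau,a_h^\tau)^\top$: with probability at least $1-p$,
\[
\Bigl\|\textstyle\sum_{\tau=1}^{K}\phi(x_h^\tau,a_h^\tau)\,\epsilon_h^\tau(V)\Bigr\|_{(\Lambda_h^t)^{-1}}^2\le 2(n+R_{\max})^2H^2\cdot\log\!\Bigl(\tfrac{\det(\Lambda_h^t)^{1/2}\lambda^{-d/2}}{p}\Bigr).
\]
Because $\|\phi(x,a)\|\le1$ everywhere, $\phi(x_h^\tau,a_h^\tau)\phi(x_h^\tau,a_h^\tau)^\top\preccurlyeq I$ for each $\tau$, so $\Lambda_h^t\preccurlyeq(\lambda+K)I$ and $\det(\Lambda_h^t)\le(\lambda+K)^d$, giving $\det(\Lambda_h^t)^{1/2}\lambda^{-d/2}\le(1+K/\lambda)^{d/2}$. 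Therefore $\log\bigl(\det(\Lambda_h^t)^{1/2}\lambda^{-d/2}/p\bigr)\le\log(1/p)+\tfrac{d}{2}\log(1+K/\lambda)$, and plugging this in yields the bound $(n+R_{\max})^2H^2\bigl(2\log(1/p)+d\log(1+K/\lambda)\bigr)$ claimed in the statement.

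The only point needing care is the bookkeeping in step (i)---that within each round the feature is $\cF_{\tau-1}$-measurable while the successor state (and hence the reward and the noise) is fresh---after which the result is a direct substitution into Lemma~\ref{lemma:concentration_abbasi} plus the elementary determinant bound; there is no substantive obstacle. This is the standard self-normalized-process estimate used throughout the linear-MDP literature (e.g., \citet{2019Provably,2020On}).
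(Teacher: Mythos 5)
Your proposal is correct and follows essentially the same route as the paper's proof: the same per-step filtration making $\phi(x_h^\tau,a_h^\tau)$ predictable and $\epsilon_h^\tau(V)$ conditionally zero-mean and $(n+R_{\max})H$-sub-Gaussian, the same application of Lemma~\ref{lemma:concentration_abbasi} with seed matrix $\lambda I$, and the same determinant bound $\det(\Lambda_h^t)\le(\lambda+K)^d$ from $\|\phi\|\le 1$. No gaps.
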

\begin{proof}
For the fixed $h\in [H]$ and all $\tau \in \{ 0,  s , K \}$, we define the $\sigma$-algebra 
\begin{equation*}
\cF_{h,\tau} = \sigma\big( \{ (x_h^j,a_h^j,x_{h+1}^j) \} _{j=1}^{\tau}  \cup (x_h^{(\tau+1) \wedge K },a_h^{(\tau+1) \wedge K }) \big),
\end{equation*}
where $\sigma(\cdot)$ denotes the $\sigma$-algebra generated by a set of random variables and $(\tau + 1) \wedge K$ denotes $\min \{ \tau + 1, K \}$. For all $\tau \in [K]$, we have $\phi (x_h^\tau, a_h^\tau) \in \cF_{h, \tau - 1 }$, as $(x_h^\tau, a_h^\tau)$ is $\cF_{h, \tau-1}$-measurable.  
Also, for the fixed function $V\colon\cS\mapsto[0,(n+R_{\max})  (H-1)]$ and all $\tau \in [K]$, we have 
\begin{equation*}
\epsilon_{h}^{\tau}(V; r)=r_{h}^{\tau}+V(x_{h+1}^\tau; r) - (\BB_h V) (x_{h}^\tau,a_h^\tau;r)\in \cF_{h, \tau },
\end{equation*}
as $(x_h^\tau,a_h^\tau, x_{h+1}^\tau)$ is $\cF_{h, \tau}$-measurable. Hence,  
$\{ \epsilon_{h}^\tau (V) \}_{\tau= 1}^K $ is a stochastic process adapted to the filtration $\{\cF_{h, \tau}\}_{\tau=0}^K $. Furthermore, we have
\begin{equation*}
\begin{split}
\EE \big[\epsilon_h^\tau (V; r) \biggiven \cF_{h,\tau-1}\big] &= \EE \bigl[r_{h}^{\tau}+V(x_{h+1}^\tau; r) \biggiven \{ (x_h^j,a_h^j,x_{h+1}^j) \} _{j=1}^{\tau-1}  ,(x_h^{\tau},a_h^{\tau}) \bigr]  - (\BB_h V) (x_h^\tau , a_h^\tau ; r)\\
& = \EE \bigl [r_{h}^{\tau}+V(s_{h+1}) \biggiven s_h = x_h^\tau , a_h = a_h^\tau  \bigr ] -  (\BB_h V) (x_h^\tau , a_h^\tau ; r) = 0,
\end{split}
\end{equation*}
where the first step is because $(\BB_h V) (x_h^\tau , a_h^\tau; r )$ is $\cF_{h,\tau-1}$-measurable and the second step follows from the Markov property of the process. Moreover, as $(\BB_h V)(x_h^\tau,a_h^\tau;r)\in [0,(n+R_{\max})  H]$, we have $|\epsilon_h^\tau (V; r) | \leq  (n+R_{\max})  H$. Hence, the random variable $\epsilon_h^\tau (V; r) $ defined in Equation \eqref{equa:define eps V} is mean-zero and $(n+R_{\max})H$-sub-Gaussian conditioning on $\cF_{h, \tau - 1}$.

Invoke Lemma \ref{lemma:concentration_abbasi} with $M_0=\lambda   I$ and $M_k  = \lambda    I + \sum_{\tau =1}^k \phi(x_h^\tau,a_h^\tau)\ \phi(x_h^\tau,a_h^\tau)^\top$
for all $k \in [K]$. We then know that
\begin{equation}\label{equa:concentration_1}
\Pr  \bigg( \Big\|   \sum_{\tau=1}^{K} \phi(x_h^\tau,a_h^\tau)  \epsilon_h^\tau(V; r) \Big\|_{(\Lambda_h^t)^{-1}}^2  >   2 (n+R_{\max})^2 H^2    \log \Big(  \frac{\det(\Lambda_h^t)^{1/2}}{p    \det( \lambda   I) ^{1/2} }  \Big)   \bigg ) \leq   p
\end{equation}
for all $p \in (0,1)$. Here, we use the fact that $\Lambda_{h}^{t}=M_{k}$. To upper bound $\det(\Lambda_{h}^{t})^{1/2}$, we first notice that
\begin{equation*}
\|\Lambda_h^t\|_{\oper}  =   \Big\|\lambda    I + \sum_{\tau=1}^K \phi(x_h^\tau,a_h^\tau)\phi(x_h^\tau,a_h^\tau)^\top \Big\| _{\oper} \leq  \lambda  +  \sum_{\tau = 1} ^K  \| \phi(x_h^\tau,a_h^\tau)\phi(x_h^\tau,a_h^\tau)^\top  \|_{\oper} \leq \lambda  + K,
\end{equation*}
where the first inequality follows from the triangle inequality of operator norm and the second inequality follows from the fact that $\|\phi(x,a)\|\leq 1$ for all $(x,a )\in \cS\times \cA$ by our assumption. This implies $\det(\Lambda_h^t)\leq (\lambda+K)^d$. Combining with the fact that $\det(\lambda  I)=\lambda^d$ and Equation \eqref{equa:concentration_1}, we have
\begin{equation*}
\Pr \biggl(  \Big\|   \sum_{\tau=1}^{K} \phi(x_h^\tau,a_h^\tau)  \epsilon_h^\tau(V; r)\Big\|_{(\Lambda_h^{t})^{-1}}^2 > (n+R_{\max})^2 H^2  \bigl (  2    \log(1/ p ) + d  \log(1+K/\lambda)\big) 
\biggr) \leq p.
\end{equation*}
Therefore, we conclude the proof of Lemma \ref{lemma:concentration_of_self_norm_pro}.
\end{proof}

\end{document}